\numberwithin{equation}{section}
\numberwithin{figure}{section}
\theoremstyle{plain}
\newtheorem{thm}{Theorem}
\theoremstyle{plain}
\newtheorem{lem}[thm]{Lemma}
\newtheorem{cor}[thm]{Corollary}
\theoremstyle{definition}
\newtheorem{Def}{Definition}
\newtheorem{cond}{Condition}
\DeclareMathOperator*{\argmax}{argmax}
\global\long\def\CH{\mathcal{C}}
\global\long\def\history{h}
\global\long\def\mA{\mathcal{A}} \global\long\def\AMO{\mathcal{AMO}}
\global\long\def\heta{\eta_{t-1}}
\newcommand{\Abs}[1]{\left\lvert#1\right\rvert}
\newcommand{\Bracks}[1]{\left[#1\right]}
\newcommand{\Braces}[1]{\left\{#1\right\}}
\newcommand{\BigParens}[1]{\Bigl(#1\Bigr)}
\newcommand{\Et}{{\mathbb{E}}_t}
\newcommand{\Ett}{{\mathbb{E}}_{t'}}
\newcommand{\Vart}{{\mathbb{V}}_t}
\newcommand{\set}[1]{\{#1\}}
\def\Wset{\mathcal{W}}
\def\Zset{\mathcal{Z}}
\def\R{\mathbb{R}}
\def\PE{\text{\mdseries\scshape PolicyElimination}}
\def\DPE{\text{\mdseries\scshape DelayedPE}}
\def\RUCB{\text{\mdseries\scshape RandomizedUCB}}
\newcommand{\E}{\mathop{\mathbb{E}}}
\newcommand{\I}{\mathbb{I}}
\def\pimax{\pi_{\max}} 
\newcommand\vbound[1]{\ensuremath{\bar{V}_{#1}}}
\newcommand\vmax[1]{\ensuremath{\vbound{\max,#1}}}
\newcommand\vepsopt[1]{\ensuremath{\varepsilon_{\operatorname{opt},#1}}}
\def\veps{\varepsilon}
\def\wh{\widehat}
\def\wt{\widetilde}
\DeclareMathOperator{\OPT}{OPT}
\newcommand\Sparse[1]{\mathsf{Sparse}[#1]}
\title{Efficient Optimal Learning for Contextual Bandits}
\author{Miroslav Dudik \\ \texttt{mdudik@yahoo-inc.com} \And Daniel Hsu \\ \texttt{djhsu@rci.rutgers.edu} \And Satyen Kale \\ \texttt{skale@yahoo-inc.com} \And Nikos Karampatziakis \\ \texttt{nk@cs.cornell.edu} \AND John Langford \\ \texttt{jl@yahoo-inc.com} \And Lev Reyzin \\ \texttt{lreyzin@cc.gatech.edu} \And Tong Zhang \\ \texttt{tzhang@stat.rutgers.edu}}
\begin{document}

\maketitle

\begin{abstract}
We address the problem of learning in an online setting where the
learner repeatedly observes features, selects among a set of actions,
and receives reward for the action taken.  We provide the first
efficient algorithm with an optimal regret.  Our algorithm uses a cost
sensitive classification learner as an oracle and has a running time
$\mathrm{polylog}(N)$, where $N$ is the number of classification rules among
which the oracle might choose.  This is exponentially faster than all
previous algorithms that achieve optimal regret in this setting.  Our
formulation also enables us to create an algorithm with regret that is
additive rather than multiplicative in feedback delay as in all
previous work.
\end{abstract}

\section{INTRODUCTION}
The contextual bandit setting consists of the following
loop repeated indefinitely:
\begin{enumerate}
\item The world presents context information as features $x$.
\item The learning algorithm chooses an action $a$ from $K$ possible actions.
\item The world presents a reward $r$ for the action.
\end{enumerate}
The key difference between the contextual bandit setting and standard
supervised learning is that \emph{only }the reward of the chosen action
is revealed. For example, after always choosing the same action several
times in a row, the feedback given provides almost no basis to prefer the
chosen action over another action. In essence, the contextual bandit
setting captures the difficulty of exploration while avoiding the
difficulty of credit assignment as in more general reinforcement learning
settings.

The contextual bandit setting is a half-way point between standard
supervised learning and full-scale reinforcement learning where it
appears possible to construct algorithms with convergence rate
guarantees similar to supervised learning.  Many natural settings
satisfy this half-way point, motivating the investigation of
contextual bandit learning.  For example, the problem of choosing
interesting news articles or ads for users by internet companies can
be naturally modeled as a contextual bandit setting.  In the medical
domain where discrete treatments are tested before approval, the
process of deciding which patients are eligible for a treatment takes
contexts into account. More generally, we can imagine that in a future
with personalized medicine, new treatments are essentially equivalent
to new actions in a contextual bandit setting.

In the i.i.d.\ setting, the world draws a pair $(x,\vec{r})$
consisting of a context and a reward vector from some unknown
distribution $D$, revealing $x$ in Step 1, but only the reward $r(a)$ of the
chosen action $a$ in Step 3. Given a set of policies
$\Pi=\{\pi:X\rightarrow A\}$, the goal is to create an algorithm for
Step 2 which competes with the set of policies.  We measure our
success by comparing the algorithm's cumulative reward to the expected
cumulative reward of the best policy in the set. The difference of the
two is called \emph{regret}.

All existing algorithms for this setting either achieve a suboptimal
regret~\citep{Epoch-Greedy} or require computation linear in the number
of policies~\citep{EXP4,EXP4P}.
In unstructured policy spaces, this computational complexity is the best one
can hope for.
On the other hand, in the case where the rewards of all actions are revealed, the problem is equivalent to cost-sensitive classification, and 
we know of algorithms to efficiently search the space of policies
(classification rules) such as cost-sensitive logistic regression and support vector machines.
In these cases, the space of classification rules is exponential in the number of features, but these problems can be efficiently solved using convex optimization.

Our goal here is to efficiently solve the contextual bandit problems for
similarly large policy spaces.  We do this by reducing the contextual bandit
problem to cost-sensitive classification.  Given a supervised cost-sensitive
learning algorithm as an oracle~\citep{FT}, our algorithm runs in time only
$\mathrm{polylog}(N)$ while achieving regret $O(\sqrt{TK \ln N})$, where $N$ is
the number of possible policies (classification rules), $K$ is the number of actions
(classes), and $T$ is the number of time steps.  This efficiency is achieved in
a modular way, so any future improvement in cost-sensitive learning immediately
applies here.

\subsection{PREVIOUS WORK AND MOTIVATION}

All previous regret-optimal approaches are \emph{measure} based---they work
by updating a measure over policies, an operation which is linear in
the number of policies.  In contrast, regret guarantees scale only
logarithmically in the number of policies.  If not for the
computational bottleneck, these regret guarantees imply that we could
dramatically increase performance in contextual bandit settings using
more expressive policies.  We overcome the computational bottleneck
using an algorithm which works by creating cost-sensitive
classification instances and calling an oracle to choose optimal
policies. Actions are chosen based on the policies returned by the
oracle rather than according to a measure
over all policies. This is reminiscent of AdaBoost~\citep{AdaBoost}, which
creates weighted binary classification instances and calls a ``weak
learner'' oracle to obtain classification rules. These classification
rules are then combined into a final classifier with boosted accuracy.
Similarly as AdaBoost converts a weak learner into a
strong learner, our approach converts a cost-sensitive classification
learner into an algorithm that solves the contextual bandit problem.

In a more difficult version of contextual bandits, an adversary
chooses $(x,\vec{r})$ given knowledge of the learning algorithm (but
not any random numbers).  All known regret-optimal solutions in
the adversarial
setting are variants of the EXP4 algorithm~\citep{EXP4}.  EXP4 achieves
the same regret rate as our algorithm: $O\left(\sqrt{KT\ln N}\right)$,
where $T$ is the number of time steps, $K$ is the number of actions
available in each time step, and $N$ is the number of policies.

Why not use EXP4 in the i.i.d.\ setting?  For example, it is known
that the algorithm can be modified to succeed with high
probability~\citep{EXP4P}, and also for VC classes when the adversary
is constrained to i.i.d.\ sampling.  There are two central benefits
that we hope to realize by directly assuming i.i.d.\ contexts and
reward vectors.
\begin{enumerate}
\item Computational Tractability. Even when the reward vector is fully known,
adversarial regrets scale as $O\left(\sqrt{\ln N}\right)$ while computation scales
as $O(N)$ in general. One attempt to get around this is the follow-the-perturbed-leader
algorithm~\citep{FPL} which provides a computationally tractable solution
in certain special-case structures. This algorithm has no mechanism
for efficient application to arbitrary policy spaces, even given an
efficient cost-sensitive classification oracle. An efficient
cost-sensitive classification oracle has been shown effective in transductive
settings~\citep{Transductive_oracle}.
Aside from the drawback of requiring a transductive setting, the regret
achieved there is substantially worse than for EXP4.
\item Improved Rates. When the world is not completely adversarial, it
  is
possible to achieve substantially lower regrets than are possible
with algorithms optimized for the adversarial setting. For example,
in supervised learning, it is possible to obtain regrets scaling as $O(\log(T))$
with a problem dependent constant~\citep{BHR07}. When the feedback
is delayed by $\tau$ rounds, lower bounds imply that the regret in
the adversarial setting increases by a multiplicative $\sqrt{\tau}$
while in the i.i.d.\ setting, it is possible to achieve an additive regret
of $\tau$~\citep{LSZ09}.
\end{enumerate}
In a direct i.i.d.\ setting, the previous-best approach using a
cost-sensitive classification oracle was given
by $\epsilon$-greedy and epoch greedy algorithms~\citep{Epoch-Greedy}
which have a regret scaling as $O(T^{2/3})$ in the worst case.

There have also been many special-case analyses. For example, theory
of
context-free setting is well understood~\citep{LR,UCB,AE}.
Similarly, good algorithms exist when rewards are linear functions of features~\citep{Linear}
or actions lie in a continuous space with the reward function sampled according to a Gaussian process~\citep{Gaussian}.

\subsection{WHAT WE PROVE}

In Section~\ref{sec:alg} we state the $\PE$ algorithm, and prove the following
regret bound for it.

\newtheorem*{thm:PE}{Theorem \ref{thm:regretbd}}
\begin{thm:PE}
For all distributions $D$ over $(x,\vec{r})$ with $K$ actions, for all
sets of $N$ policies $\Pi$, with probability at least $1-\delta$, the
regret of $\PE$
 (Algorithm~\ref{alg:PE}) over $T$ rounds is at most \[
16\sqrt{2TK\ln\frac{4T^2N}{\delta}}.
\]
\end{thm:PE}

This result can be extended to deal with VC classes, as well as
other special cases.  It forms the simplest method we have of
exhibiting the new analysis.

The new key element of this algorithm is identification of
a distribution over actions which simultaneously achieves small
expected regret and allows estimating value of every policy with small
variance. The existence of
such a distribution is shown \emph{nonconstructively} by a minimax
argument.

$\PE$ is computationally intractable and also
requires exact knowledge of the context distribution (but not the
reward distribution!).  We
show how to address these issues in
Section~\ref{sec:RUCB}
using an algorithm we call $\RUCB$.
Namely, we prove the following theorem.

\newtheorem*{thm:RUCB}{Theorem \ref{thm:rucb-regret}}
\begin{thm:RUCB}
For all distributions $D$ over $(x,\vec{r})$ with $K$ actions, for all
sets of $N$ policies $\Pi$,
with probability at least $1-\delta$, the regret of $\RUCB$
(Algorithm~\ref{alg:RUCB}) over $T$ rounds is at most
\[
O\left(
\sqrt{TK \log\left(TN/\delta\right)}
+ K \log(NK/\delta)
\right).
\]
\end{thm:RUCB}

$\RUCB$'s analysis is substantially more complex,
with a key subroutine being an application of the ellipsoid algorithm
with a cost-sensitive classification oracle
(described in Section~\ref{sec:oracle}). $\RUCB$ does not assume
knowledge of the context distribution, and instead works with the
history of contexts it has observed. Modifying the proof for this
empirical distribution requires a covering
argument over the distributions over policies which uses the
probabilistic method.  The net result is an algorithm with a similar
top-level analysis as $\PE$, but with the running time only
poly-logarithmic in the number of
policies given a cost-sensitive classification oracle.

\newtheorem*{thm:RUCB-Ellipsoid}{Theorem~\ref{thm:ellipsoid}}
\begin{thm:RUCB-Ellipsoid}
In each time step $t$, $\RUCB$ makes at most
$O(\mathrm{poly}(t,K,\log(1/\delta),\log N))$ calls to cost-sensitive
classification oracle, and
requires additional $O(\mathrm{poly}(t,K,\log N))$ processing time.
\end{thm:RUCB-Ellipsoid}

Apart from a tractable algorithm, our analysis can be used to derive
tighter regrets than would be possible in adversarial setting. For
example, in  Section~\ref{sec:delay}, we consider a common setting
where reward feedback is delayed by $\tau$ rounds.  A straightforward
modification of $\PE$ yields a
regret with an \emph{additive} term proportional to $\tau$ compared
with the delay-free setting. Namely, we prove the following.

\newtheorem*{thm:PED}{Theorem \ref{thm:PEdel}}
\begin{thm:PED}
For all distributions $D$ over $(x,\vec{r})$ with $K$ actions, for all
sets of $N$ policies $\Pi$, and all delay intervals $\tau$,
with probability at least $1-\delta$,
the regret of
$\DPE$ (Algorithm~\ref{alg:PEdelay})
is at most
 \[
16\sqrt{2K\ln\frac{4T^2N}{\delta}}\left(\tau+\sqrt{T}\right).
\]
\end{thm:PED}

We start next with precise settings and definitions.

\section{SETTING AND DEFINITIONS}\label{sec:SnD}

\subsection{THE SETTING}

Let $A$ be the set of $K$ actions, let $X$ be the domain of contexts $x$, and
let $D$ be an arbitrary joint distribution on $(x,\vec{r})$.
We denote the marginal distribution of $D$ over $X$ by $D_X$.

We denote $\Pi$ to be a finite set of policies $\{\pi : X \rightarrow
A  \}$, where each policy
$\pi$, given a context $x_t$ in round $t$, chooses the action
$\pi(x_t)$. The cardinality of
$\Pi$ is denoted by $N$. Let $\vec{r}_t\in [0,1]^K$ be the vector of
rewards, where $r_t(a)$ is the reward of action $a$ on round $t$.

In the i.i.d.\ setting, on each round $t = 1 \ldots T$, the world chooses
$(x_t,\vec{r}_t)$ i.i.d.\ according to $D$
and reveals $x_t$ to the learner. The learner, having access to
$\Pi$,
chooses action $a_t\in \{1,\ldots,K\}$.
Then the world reveals reward $r_t(a_t)$ (which we call $r_t$ for short)
to the learner, and the interaction proceeds to the next round.



We consider two modes of accessing the set of policies $\Pi$. The
first option is through the enumeration of all policies. This is
impractical in general, but suffices for the illustrative purpose of
our first algorithm. The second option is an oracle access, through an
\emph{argmax oracle}, corresponding to a cost-sensitive
learner:

\begin{Def}
For a set of policies $\Pi$, an argmax oracle ($\AMO$ for short), is
an algorithm, which for any sequence
$\set{(x_{t'},\vec{r}_{t'})}_{{t'}=1\dotsc t}$, $x_{t'}\in X$, $\vec{r}_{t'}\in\R^K$, computes
\[
   \arg\max_{\pi \in \Pi} \sum_{{t'}=1\dotsc t} r_{t'}(\pi(x_{t'}))
\enspace.
\]
\end{Def}

The reason why the above can be viewed as a cost-sensitive
classification oracle is that
vectors of rewards $\vec{r}_{t'}$ can be interpreted as negative costs
and hence the policy returned by $\AMO$ is the optimal cost-sensitive
classifier on the given data.

\subsection{EXPECTED AND EMPIRICAL REWARDS}

Let the expected instantaneous \emph{reward} of a policy $\pi \in \Pi$ be denoted
by
\[
\eta_D(\pi) \doteq \E_{(x,\vec{r})\sim D} [r(\pi(x))]
\enspace.
\]
The best policy $\pimax \in \Pi$ is that which maximizes $\eta_D(\pi)$.
More formally,
$$\pimax \doteq \argmax_{\pi \in \Pi}{\eta_D(\pi)}
\enspace.
$$

We define $h_t$ to be the \emph{history} at time $t$ that the learner has seen.  Specifically
$$h_t = \bigcup_{t' =1\ldots t}
(x_{t'},a_{t'},r_{t'},p_{t'})
\enspace,
$$ where $p_{t'}$ is the
probability of the algorithm choosing action $a_{t'}$ at time $t'$.
Note that $a_{t'}$ and $p_{t'}$ are produced by the learner while $x_{t'},
r_{t'}$ are produced by nature.  We write $x \sim h$ to
denote choosing $x$ uniformly at random from the $x$'s in history $h$.

Using the history of past actions and probabilities with which they were
taken, we can form an unbiased estimate of the policy value for any $\pi\in\Pi$:
$$\eta_{t}(\pi) \doteq \frac{1}{t}\sum_{(x,a,r,p)\in
  h_t}\frac{r\I(\pi(x)=a)}{p}.$$
The unbiasedness follows, because
$\E_{a \sim p} \frac{r\I(\pi(x)=a)}{p(a)} = \sum_a
p(a)\frac{r\I(\pi(x)=a)}{p(a)} = r(\pi(x)) $.
The empirically best policy at time $t$ is denoted
\[
\pi_t \doteq \argmax_{\pi \in \Pi}{\eta_{t}(\pi)}.
\]

\subsection{REGRET}

The goal of this work is to obtain a learner that has small
\emph{regret} relative to the expected performance of
$\pimax$ over $T$ rounds, which is
\begin{equation}\label{def:regret}
\sum_{t=1\dotsc T} \left( \eta_D(\pimax) - r_t \right).
\end{equation}
We say that the regret of the learner over $T$ rounds is bounded by
$\epsilon$ with probability at least $1-\delta$, if
\[
\Pr\left[
\sum_{t=1\dotsc T} \left( \eta_D(\pimax) - r_t \right)
\leq \epsilon
\right] \geq 1-\delta
\]
where the probability is taken with respect to the random pairs
$(x_t,\vec{r}_t) \sim D$ for $t=1\dotsc T$, as well as any internal
randomness used by the learner.

We can also define notions of regret and empirical regret for policies $\pi$. For all $\pi\in\Pi$, let
\begin{align*}
\Delta_D(\pi) &=  \eta_D(\pimax) - \eta_D(\pi)
\enspace,
\\
\Delta_t(\pi) &= \eta_t(\pi_t) - \eta_t(\pi)
\enspace.
\end{align*}



Our algorithms work by choosing distributions over policies, which in turn
then induce distributions over actions.
For any distribution $P$ over policies $\Pi$, let
$W_{P}(x,a)$ denote the induced conditional distribution over actions $a$
given the context $x$:
\begin{equation} \label{eq:induced}
W_{P}(x,a) \doteq \sum_{\pi \in \Pi : \pi(x) = a} P(\pi)
\enspace.
\end{equation}
In general, we shall use $W$, $W'$ and $Z$ as conditional
probability distributions over the actions $A$ given contexts $X$,
i.e., $W:X\times A\to[0,1]$ such that $W(x,\cdot)$ is a probability
distribution over $A$ (and similarly for $W'$ and $Z$).
We shall think of $W'$ as a smoothed version of
$W$ with a minimum action probability of $\mu$ (to be defined by the algorithm), such that
$$
W'(x,a) = (1-K\mu)W(x,a) + \mu
\enspace.
$$
Conditional distributions such as $W$ (and $W'$, $Z$, etc.) correspond
to randomized policies. We define notions true and empirical
value and regret for them as follows:
\begin{align*}
\eta_D(W) &\doteq  \E_{(x,\vec{r})\sim D} [\vec{r}\cdot W(x)]
\\
\eta_t(W) &\doteq \frac{1}{t}\sum_{(x, a, r, p) \in h_t} \frac{r W(x,
  a)}{p}
\\
\Delta_D(W)&\doteq  \eta_D(\pimax) - \eta_D(W)
\\
\Delta_t(W)&\doteq \eta_t(\pi_t) - \eta_t(W)
\enspace.
\end{align*}

\section{POLICY ELIMINATION} \label{sec:alg}

The basic ideas behind our approach are demonstrated in our first algorithm:
$\PE$ (Algorithm~\ref{alg:PE}).

\begin{algorithm}
\caption{$\PE$($\Pi$,$\delta$,$K$,$D_X$)}\label{alg:PE}

Let $\Pi_{0}=\Pi$ and
history $h_{0}=\emptyset$
\\
Define: $\delta_t \doteq \delta\,/\,4Nt^2$
\\
Define: $\displaystyle b_{t} \doteq 2\sqrt{\frac{2K\ln(1/\delta_t)}{t}}$
\\
Define: $\displaystyle \mu_t \doteq \min\Braces{\frac{1}{2K}\,,\,\sqrt{\frac{\ln(1/\delta_t)}{2Kt}}}$
\\
For each timestep $t= 1 \ldots T$, observe $x_t$ and do:
\begin{enumerate}
\item Choose distribution $P_t$ over $\Pi_{t-1}$ s.t.\
          $\forall\ \pi\in\Pi_{t-1}$:
\label{a:stepPC}
          \[
          \E_{x\sim D_X}\left[\frac{1}{(1-K\mu_t)W_{P_t}(x,\pi(x)) +
              \mu_t}\right] \le 2K
          \]
\item \label{a:stepPJ} Let $W'_t(a)=(1-K\mu_t)W_{P_t}(x_t,a) + \mu_t$ for all
         $a\in A$
\item Choose $a_t\sim W'_t$
\item Observe reward $r_t$
\item Let $\displaystyle
    \Pi_{t}=\Bigl\{
    \pi\in\Pi_{t-1}:$
\\\hphantom{Let $\Pi_{t}=\Bigl\{$}
   $\displaystyle
    \eta_{t}(\pi)\ge\BigParens{\max_{\pi'\in
    \Pi_{t-1}}\!\!\!\eta_{t}(\pi')} - 2b_t\Bigr\}$
    \label{a:stepElim}
\item Let $h_{t}=h_{t-1} \cup (x_t,a_t,r_t,W'_t(a_t))$
\end{enumerate}

\end{algorithm}

The key step is
Step~\ref{a:stepPC}, which finds a distribution over
policies which induces low variance in the estimate of the value of
all policies. Below we use minimax theorem to show that such
a distribution always exists. How to find this distribution is not specified here,
but in Section~\ref{sec:oracle} we develop a method based on
the ellipsoid algorithm. Step~\ref{a:stepPJ} then projects this distribution onto a
distribution over actions and applies smoothing.
Finally, Step~\ref{a:stepElim} eliminates the policies that have been determined to be suboptimal
(with high probability). 


\subsection*{ALGORITHM ANALYSIS}\label{sec:analysis}

We analyze $\PE$ in several steps. First, we prove the existence of $P_t$
in Step~\ref{a:stepPC}, provided that $\Pi_{t-1}$ is non-empty. We
recast the feasibility problem in Step~\ref{a:stepPC} as a game
between two players: Prover, who is trying to produce $P_t$, and
Falsifier, who is trying to find $\pi$ violating the
constraints. We give more power to Falsifier and allow him to choose a
distribution over $\pi$ (i.e., a randomized policy) which would
violate the constraints.

Note that any policy $\pi$ corresponds to a
point in the space of randomized policies (viewed as functions $X\times A\to[0,1]$), with
$\pi(x,a)\doteq\I(\pi(x)=a)$. For any distribution $P$ over policies
in $\Pi_{t-1}$, the induced randomized policy $W_P$ then corresponds to a
point in the convex hull of $\Pi_{t-1}$. Denoting the convex hull of
$\Pi_{t-1}$ by $\CH$, Prover's choice by $W$ and
Falsifier's choice by $Z$, the feasibility of
Step~\ref{a:stepPC} follows by the following lemma:

\begin{lem} \label{lem:minimax}
Let $\CH$ be a compact and convex set of randomized policies.
Let $\mu\in(0,1/K]$ and for any $W\in\CH$,
$W'(x,a)\doteq(1-K\mu)W(x,a) +
              \mu$. Then for all
distributions $D$,
\[
\min_{W\in\CH} \max_{Z\in\CH}
\E_{x \sim D_X}\E_{a\sim Z(x,\cdot)} \left[ \frac{1}{W'(x,a)}
              \right]
\leq \frac{K}{1-K\mu}
\enspace.
\]
\end{lem}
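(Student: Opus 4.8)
The statement is a minimax inequality, so the natural first move is to swap $\min$ and $\max$ via von Neumann's minimax theorem. I would verify the hypotheses: $\CH$ is compact and convex by assumption, the payoff function $(W,Z)\mapsto \E_{x\sim D_X}\E_{a\sim Z(x,\cdot)}[1/W'(x,a)]$ is linear in $Z$ (it is an expectation over $a$ weighted by $Z(x,a)$) and convex in $W$ (since $w\mapsto 1/((1-K\mu)w+\mu)$ is convex on $[0,1]$, and expectations preserve convexity). Hence
\[
\min_{W\in\CH}\max_{Z\in\CH}\E_{x}\E_{a\sim Z}\Bracks{\frac{1}{W'(x,a)}}
=\max_{Z\in\CH}\min_{W\in\CH}\E_{x}\E_{a\sim Z}\Bracks{\frac{1}{W'(x,a)}}.
\]

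Now the key trick: having fixed $Z$, I get to \emph{choose} $W$, and the cleverest choice is $W=Z$ itself. This reduces the problem to bounding, for an arbitrary randomized policy $Z\in\CH$,
\[
\E_{x\sim D_X}\E_{a\sim Z(x,\cdot)}\Bracks{\frac{1}{(1-K\mu)Z(x,a)+\mu}}.
\]
Fixing $x$, I would bound the inner sum $\sum_{a} Z(x,a)/((1-K\mu)Z(x,a)+\mu)$. Each summand is at most $Z(x,a)/((1-K\mu)Z(x,a)) = 1/(1-K\mu)$, so summing over the $K$ actions gives $K/(1-K\mu)$. Taking the expectation over $x$ preserves the bound, and since the minimum over $W$ is no larger than the value at $W=Z$, this yields exactly the claimed $K/(1-K\mu)$.

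For the application to $\PE$, I would then note that the constraint in Step~\ref{a:stepPC} uses $\mu_t\le 1/(2K)$, so $1-K\mu_t\ge 1/2$ and $K/(1-K\mu_t)\le 2K$; combined with the fact that $\CH$ is the convex hull of a finite set $\Pi_{t-1}$ (hence compact and convex) and that any feasible $W\in\CH$ arises as $W_{P_t}$ for some distribution $P_t$ over $\Pi_{t-1}$ (Carathéodory gives a finitely-supported one), the lemma guarantees the existence of $P_t$ satisfying the per-policy constraint, since the constraint for a deterministic $\pi$ is just the $Z=\pi$ special case of the $\max_Z$.

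The only genuine subtlety is justifying the minimax swap: I must confirm the function is continuous (it is, because $W'(x,a)\ge\mu>0$ keeps the integrand bounded) and that one side of the game ranges over a compact convex set while the payoff has the right concavity/convexity structure — standard but worth stating carefully. Everything after the swap is the elementary pointwise bound above, so the heart of the argument is really the observation that $W=Z$ is an admissible and near-optimal response for Prover.
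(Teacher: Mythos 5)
Your proof is correct and follows essentially the same route as the paper's: verify the convexity/linearity/continuity hypotheses, invoke a minimax theorem to swap $\min$ and $\max$, then observe that setting $W=Z$ gives the pointwise bound $K/(1-K\mu)$. The only cosmetic difference is that the paper cites Sion's minimax theorem rather than von Neumann's (the latter in its classical form is for bilinear payoffs on simplices; Sion's is the natural generalization for a convex--concave payoff on compact convex sets, which is exactly the situation here).
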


\begin{proof}
Let $f(W,Z)\doteq \E_{x \sim D_X}\E_{a\sim Z(x,\cdot)} [1/W'(x,a)]$ denote the
inner expression of the minimax problem.
Note that
$f(W,Z)$ is:
\begin{itemize}
\item\emph{everywhere defined}: Since $W'(x,a)\ge \mu$, we obtain that
  $1/W'(x,a)\in[0,1/\mu]$, hence the expectations are defined for all
  $W$ and $Z$.
\item\emph{linear in $Z$}: Linearity follows from rewriting $f(W,Z)$ as
\[
   f(W,Z) = \E_{x \sim D_X} \sum_{a\in A} \left[ \frac{Z(x,a)}{W'(x,a)} \right].
\]
\item\emph{convex in $W$}: Note that $1/W'(x,a)$ is convex in $W(x,a)$
  by convexity of $1/(c_1 w+ c_2)$ in $w\ge0$, for $c_1\ge0$, $c_2>0$. Convexity
  of $f(W,Z)$ in $W$ then
  follows by taking expectations over $x$ and $a$.
\end{itemize}
Hence, by Theorem~\ref{thm:sion} (in Appendix~\ref{sec:sion}),
min and max can be reversed without affecting the value:
\[
   \min_{W\in\CH} \max_{Z\in\CH} f(W,Z)
 =
   \max_{Z\in\CH} \min_{W\in\CH} f(W,Z)
\enspace.
\]
The right-hand side can be further upper-bounded by $\max_{Z\in\CH}
f(Z,Z)$,
which is upper-bounded by
\begin{align}
\notag
&f(Z,Z)
  =
  \E_{x \sim D_X} \sum_{a\in A} \left[ \frac{Z(x,a)}{Z'(x,a)} \right]
\\
\tag*{\qed}
&\quad{}\le
 \E_{x \sim D_X}
 \!\!\!\!\!\sum_{\substack{a\in A:\\Z(x,a)>0}}\!\!\!\!\!
 \left[ \frac{Z(x,a)}{(1-K\mu)Z(x,a)}
             \right]
 =
  \frac{K}{1-K\mu}
\enspace.\!
\end{align}
\renewcommand{\qed}{}
\end{proof}

\begin{cor}
\label{cor:minimax}
The set of distributions satisfying constraints of Step \ref{a:stepPC}
is non-empty.
\end{cor}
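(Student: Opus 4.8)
The plan is to read the corollary straight off Lemma~\ref{lem:minimax} by specializing $\mu$ and $\CH$ and then unpacking the minimax quantity. First I would take $\CH$ to be the convex hull of $\Pi_{t-1}$, where each deterministic policy $\pi$ is identified with the randomized policy $(x,a)\mapsto\I(\pi(x)=a)$, and set $\mu=\mu_t$. Since $\Pi_{t-1}$ is a finite set of points, $\CH$ is compact and convex, so Lemma~\ref{lem:minimax} applies and yields
\[
\min_{W\in\CH}\max_{Z\in\CH} f(W,Z)\le\frac{K}{1-K\mu_t},
\]
where $f(W,Z)\doteq\E_{x\sim D_X}\E_{a\sim Z(x,\cdot)}[1/W'(x,a)]$ as in the lemma's proof. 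Because $\mu_t\le 1/(2K)$ by definition, $1-K\mu_t\ge 1/2$, so the right-hand side is at most $2K$.

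Next I would check that the outer minimum is attained. Since $W'(x,a)\ge\mu_t>0$, the function $f$ is jointly continuous on the compact set $\CH\times\CH$, so $W\mapsto\max_{Z\in\CH}f(W,Z)$ is continuous and attains its minimum at some $W^\star\in\CH$; by the previous step, $\max_{Z\in\CH}f(W^\star,Z)\le 2K$. As $W^\star$ lies in the convex hull of $\{\,(x,a)\mapsto\I(\pi(x)=a):\pi\in\Pi_{t-1}\,\}$, it is a convex combination of these points, i.e.\ there is a distribution $P_t$ over $\Pi_{t-1}$ with $W^\star=W_{P_t}$, matching the definition in~\eqref{eq:induced}.

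Finally I would recover the constraints of Step~\ref{a:stepPC}. For each $\pi\in\Pi_{t-1}$, the associated deterministic randomized policy $Z_\pi(x,a)=\I(\pi(x)=a)$ belongs to $\CH$, and since $Z_\pi(x,\cdot)$ is a point mass at $\pi(x)$,
\[
f(W^\star,Z_\pi)=\E_{x\sim D_X}\left[\frac{1}{(1-K\mu_t)W_{P_t}(x,\pi(x))+\mu_t}\right]\le\max_{Z\in\CH}f(W^\star,Z)\le 2K,
\]
which is exactly the inequality that $P_t$ must satisfy in Step~\ref{a:stepPC}. Hence $P_t$ is feasible and the feasible set is non-empty.

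There is essentially no hard step here: the only points requiring care are the attainment of the minimum --- clean by compactness and continuity, which holds precisely because the minimum action probability $\mu_t$ keeps $1/W'$ bounded --- and the bookkeeping observation that evaluating $f(W^\star,\cdot)$ at a deterministic $Z$ reproduces the Step~\ref{a:stepPC} expression. All the substance was already carried by Lemma~\ref{lem:minimax}.
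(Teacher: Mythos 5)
Your proposal is correct and follows exactly the route the paper intends: the preamble to Lemma~\ref{lem:minimax} already sets up $\CH$ as the convex hull of $\Pi_{t-1}$ and identifies deterministic $\pi$ with the vertex $(x,a)\mapsto\I(\pi(x)=a)$, and the corollary is read off by combining the lemma's bound $K/(1-K\mu_t)$ with $\mu_t\le 1/(2K)$, attainment of the outer minimum by compactness, and evaluation of $f(W^\star,\cdot)$ at the vertices $Z_\pi$. Your write-up simply makes explicit the bookkeeping that the paper leaves implicit in labeling this a corollary; there is no substantive difference.
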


Given the existence of 
$P_t$, we will see below that
the constraints in Step \ref{a:stepPC} ensure low variance of the
policy value estimator $\eta_t(\pi)$ for all $\pi\in\Pi_{t-1}$. The
small variance is used to ensure accuracy of policy
elimination in Step \ref{a:stepElim} as quantified in the following
lemma:

\begin{lem}
With probability at least $1-\delta$, for all $t$:
\begin{enumerate}
\item $\pimax\in\Pi_t$ (i.e., $\Pi_t$ is non-empty)
\item $\eta_D(\pimax)-\eta_D(\pi)\le 4b_t$ for all $\pi\in\Pi_t$
\end{enumerate}
\end{lem}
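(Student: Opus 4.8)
The plan is to prove the two claims simultaneously by induction on $t$, using a high-probability deviation bound for the estimators $\eta_t(\pi)$ that holds uniformly over $\pi\in\Pi$ and over all rounds. First I would set up a concentration argument: for a fixed policy $\pi$, the estimator $\eta_t(\pi)=\frac1t\sum_{(x,a,r,p)\in h_t}r\I(\pi(x)=a)/p$ is an average of a martingale-difference-like sequence whose conditional mean at round $t'$ is $\eta_D(\pi)$ (by the unbiasedness computation already in the text). The per-round terms are bounded by $1/\mu_{t'}$ and, crucially, have conditional second moment controlled by the variance bound that Step~\ref{a:stepPC} guarantees: since $\pi\in\Pi_{t'-1}$ implies $\E_{x\sim D_X}[1/W'_{t'}(x,\pi(x))]\le 2K$, the conditional variance of the $t'$-th term is at most $2K$ (using $r\le1$). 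A Bernstein/Freedman-type inequality for martingales then yields that with probability at least $1-\delta_t$, $|\eta_t(\pi)-\eta_D(\pi)|\le b_t = 2\sqrt{2K\ln(1/\delta_t)/t}$, where the choice of $\mu_t$ is exactly what makes the range term $\tfrac1t\cdot\tfrac1{\mu_t}$ no larger than the variance term. Taking a union bound over all $N$ policies and all $t\le T$ with $\delta_t=\delta/(4Nt^2)$ gives, with probability at least $1-\delta$ (since $\sum_t 1/t^2 < 2$), the event $\mathcal{E}$: for all $t$ and all $\pi\in\Pi$, $|\eta_t(\pi)-\eta_D(\pi)|\le b_t$. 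There is a subtlety here: the variance bound only holds for $\pi\in\Pi_{t'-1}$, so the concentration statement for $\eta_t(\pi)$ really applies to policies that survived up to the relevant rounds — but since we only ever use $\eta_t(\pi)$ for $\pi\in\Pi_{t-1}\subseteq\Pi_{t'-1}$ for all $t'\le t$, this is exactly what is needed, and the induction keeps the domains aligned.

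Next, working on the event $\mathcal{E}$, I would run the induction. For the base case $\Pi_0=\Pi\ni\pimax$. For the inductive step, assume $\pimax\in\Pi_{t-1}$ and that every $\pi\in\Pi_{t-1}$ satisfies $\eta_D(\pimax)-\eta_D(\pi)\le 4b_{t-1}$ (the second claim for the previous round, or trivially for $t=1$). To show $\pimax\in\Pi_t$: for any $\pi'\in\Pi_{t-1}$, on $\mathcal{E}$ we have $\eta_t(\pi')\le\eta_D(\pi')+b_t\le\eta_D(\pimax)+b_t\le\eta_t(\pimax)+2b_t$, so $\eta_t(\pimax)\ge\max_{\pi'\in\Pi_{t-1}}\eta_t(\pi')-2b_t$, which is the survival criterion in Step~\ref{a:stepElim}; hence $\pimax\in\Pi_t$. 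For the second claim, take any $\pi\in\Pi_t$. By the elimination rule, $\eta_t(\pi)\ge\max_{\pi'\in\Pi_{t-1}}\eta_t(\pi')-2b_t\ge\eta_t(\pimax)-2b_t$, and then on $\mathcal{E}$, $\eta_D(\pi)\ge\eta_t(\pi)-b_t\ge\eta_t(\pimax)-3b_t\ge\eta_D(\pimax)-4b_t$. This gives $\eta_D(\pimax)-\eta_D(\pi)\le 4b_t$ for all $\pi\in\Pi_t$, completing the induction. (Both claims at round $t$ require only the event $\mathcal{E}$ at round $t$ and the fact that $\pimax\in\Pi_{t-1}$, so the induction is clean; the elimination is monotone, $\Pi_t\subseteq\Pi_{t-1}$, which is what lets the uniform concentration event be reused.)

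The main obstacle I expect is the concentration step itself, specifically getting the constants to line up so that the deviation is bounded by exactly $b_t=2\sqrt{2K\ln(1/\delta_t)/t}$ rather than something with worse constants. This requires care in two places: first, verifying that the conditional second moment of the $t'$-th summand is at most $2K$ — one writes $\E_{a\sim W'_{t'}}[(r\I(\pi(x)=a)/W'_{t'}(x,a))^2\mid x] = r(\pi(x))^2/W'_{t'}(x,\pi(x)) \le 1/W'_{t'}(x,\pi(x))$, then takes expectation over $x\sim D_X$ and invokes the Step~\ref{a:stepPC} constraint to get $\le 2K$; and second, confirming that the definition $\mu_t=\min\{1/(2K),\sqrt{\ln(1/\delta_t)/(2Kt)}\}$ makes the Bernstein range correction negligible relative to the variance term, i.e. $\tfrac1{t}\cdot\tfrac1{\mu_t}\cdot\ln(1/\delta_t)$ is of the same order as $\sqrt{2K\ln(1/\delta_t)/t}$. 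A convenient way to handle the martingale structure uniformly in $t$ is to phrase the deviation bound as a ``for all $t$ simultaneously'' statement via a single martingale with a time-uniform (anytime) Bernstein inequality, or more simply by the union bound over $t$ already baked into $\delta_t$; I would take the latter route since the $\sum 1/t^2$ factor is already reflected in the constant $4$ in $\delta_t$. Everything else — the two implications in the inductive step — is elementary algebra on the event $\mathcal{E}$.
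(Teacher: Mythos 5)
Your proposal takes the same route as the paper: a Freedman/Bernstein martingale bound, with range $1/\mu_{t'}$ and conditional second moment at most $2K$ (from the Step~\ref{a:stepPC} constraint), yielding the deviation $|\eta_t(\pi)-\eta_D(\pi)|\le b_t$; a union bound over $\pi$ and $t$ with $\delta_t=\delta/(4Nt^2)$; and then the two conclusions by triangle inequality. The paper leaves the induction implicit (the deviation bound is stated for $\pi\in\Pi_{t-1}$, which tacitly requires $\pimax\in\Pi_{t-1}$ at each step); you make it explicit, which is a cosmetic difference and arguably clearer. Your parenthetical remark that the elimination is monotone and therefore the domains stay aligned is exactly the point the paper is relying on without saying so.

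The one ingredient you are missing is the small-$t$ regime. When $\mu_t$ is clipped at $1/(2K)$, i.e.\ for $t<t_0$ where $t_0$ is the first round with $\mu_t<1/(2K)$, the Freedman hypothesis $R\le\sqrt{V/\ln(2/\delta_t)}$ fails: the range is $R_t=1/\mu_t=2K$ but $\sqrt{V_t/\ln(1/\delta_t)}=\sqrt{2Kt/\ln(1/\delta_t)}<2K$ precisely because the clip is active. Your sentence about ``confirming that the definition of $\mu_t$ makes the Bernstein range correction negligible'' is therefore only true once $t\ge t_0$; for earlier rounds the concentration inequality simply cannot be invoked as stated. The paper handles this separately by observing $b_t\ge 4K\mu_t$, so for $t<t_0$ one has $b_t\ge 2$, and arguing that the lemma's conclusions hold for those rounds (no real elimination can occur and the regret bound $4b_t\ge 8>1$ is vacuous). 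Also, as you already noticed in your ``subtlety'' remark, the uniform event is really over $\pi\in\Pi_{t-1}$, not all $\pi\in\Pi$, since the variance bound that feeds Freedman only holds for policies still in $\Pi_{t'-1}$ at the rounds $t'\le t$ contributing to $\eta_t(\pi)$; phrased more precisely you would fix $\pi$ and apply Freedman only on the high-probability event (from earlier rounds) that $\pi$ was never eliminated, then union bound. Neither of these is a structural flaw in your plan, but both need to appear in a complete write-up.
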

\begin{proof}
We will show that
for any policy $\pi\in\Pi_{t-1}$, the probability that $\eta_{t}(\pi)$
deviates from  $\eta_D(\pi)$ by more that $b_t$ is at most
$2\delta_t$. Taking the union bound over all policies and all time
steps we find that with probability at least $1-\delta$,
\begin{equation}
\label{eq:dev}
 \Abs{\eta_{t}(\pi)-\eta_D(\pi)}\le b_t
\end{equation}
for all $t$ and all $\pi\in\Pi_{t-1}$. Then:
\begin{enumerate}
\item By the triangle inequality, in each time step, $\eta_t(\pi) \le
  \eta_t(\pimax) + 2b_t$ for all $\pi\in\Pi_{t-1}$, yielding the first
  part of the lemma.
\item Also by the triangle inequality, if
  $\eta_D(\pi)<\eta_D(\pimax)-4b_t$ for $\pi\in\Pi_{t-1}$, then
  $\eta_t(\pi)<\eta_t(\pimax)-2b_t$. Hence the policy $\pi$ is
  eliminated
  in Step~\ref{a:stepElim}, yielding the second part of the lemma.
\end{enumerate}

It remains to show Eq.~\eqref{eq:dev}. We fix the policy $\pi\in\Pi$
and
time $t$, and show that the deviation bound is violated
with probability at most $2\delta_t$. Our argument rests
on Freedman's inequality
(see Theorem~\ref{thm:freedmanvar} in Appendix~\ref{sec:concentration}).
Let
\[
    y_t = \frac{r_t \I(\pi(x_t)=a_t)}{W'_t(a_t)}
\enspace,
\]
i.e., $\eta_{t}(\pi)=(\sum_{t'=1}^t y_{t'})/t$. Let $\Et$ denote the
conditional expectation $\E[{}\cdot{}\vert\, h_{t-1}]$. To use
Freedman's inequality, we need to bound the range of $y_t$ and its
conditional second moment $\Et[y_t^2]$.

Since $r_t\in[0,1]$ and $W'_t(a_t)\ge\mu_t$, we have the bound
\[
   0\le y_t \le 1/\mu_t\doteq R_t
\enspace.
\]
Next,
\begin{align}
\notag
   \Et[y_t^2]
&=
   \E_{(x_t,\vec{r}_t)\sim D}
   \E_{a_t\sim W'_t}
   \Bracks{y_t^2}
\\
\notag
&=
   \E_{(x_t,\vec{r}_t)\sim D} \E_{a_t\sim W'_t}
   \Bracks{
   \frac{r_t^2\I(\pi(x_t)=a_t)}
          {W'_t(a_t)^2}}
\\
\label{eq:var:rBounded}
&\le
   \E_{(x_t,\vec{r}_t)\sim D}
   \Bracks{
   \frac{W'_t(\pi(x_t))}
          {W'_t(\pi(x_t))^2}}
\\
\label{eq:var:constr}
&=
   \E_{x_t\sim D}
   \Bracks{
   \frac{1}
          {W'_t(\pi(x_t))}}
   \le 2K
\enspace.
\end{align}
where Eq.~\eqref{eq:var:rBounded} follows by boundedness of $r_t$ and
Eq.~\eqref{eq:var:constr} follows from the constraints in
Step~\ref{a:stepPC}. Hence,
\[
   \sum_{t'=1\dotsc t} \Ett[y_{t'}^2] \le 2Kt \doteq V_t
\enspace.
\]

Since $(\ln t)/t$ is decreasing for $t\ge 3$, we obtain
that $\mu_t$ is non-increasing (by separately analyzing $t=1$, $t=2$,
$t\ge 3$). Let $t_0$ be the first $t$ such that
$\mu_t<1/2K$. Note that $b_t\ge 4K\mu_t$, so for $t<t_0$, 
we have $b_t\ge 2$
and $\Pi_t=\Pi$. Hence, the deviation bound holds for $t<t_0$.

Let $t\ge t_0$. For $t'\le t$, by the monotonicity of $\mu_t$
\[
   R_{t'} = 1/\mu_{t'} \le 1/\mu_{t} = \sqrt{\frac{2Kt}{\ln(1/\delta_t)}} =
   \sqrt{\frac{V_t}{\ln(1/\delta_t)}}
\enspace.
\]
Hence, the assumptions of Theorem~\ref{thm:freedmanvar} are satisfied, and
\[
    \Pr\Bracks{\Abs{\eta_t(\pi)-\eta_D(\pi)}\ge b_t} \le 2\delta_t
\enspace.
\]
The union bound over $\pi$ and $t$ yields Eq.~\eqref{eq:dev}.
\end{proof}

This immediately implies that the cumulative regret
is bounded by
\begin{eqnarray}
\sum_{t=1\dotsc T}\!\!\left( \eta_D(\pimax) - r_t \right) & \le &
8\sqrt{2K\ln\frac{4NT^2}{\delta}} \sum_{t=1}^{T}\frac{1}{\sqrt{t}}\nonumber\\
&\leq&16\sqrt{2TK\ln\frac{4T^2N}{\delta}}\label{eq:sumrtt}\end{eqnarray}
and gives us the following theorem.

\begin{thm}\label{thm:regretbd}
For all distributions $D$ over $(x,\vec{r})$ with $K$ actions, for all
sets of $N$ policies $\Pi$, with probability at least $1-\delta$, the
regret of $\PE$ (Algorithm~\ref{alg:PE}) over $T$ rounds is at most \[
16\sqrt{2TK\ln\frac{4T^2N}{\delta}}
\enspace.
\]
\end{thm}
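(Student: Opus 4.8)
The plan is to condition on the high-probability event delivered by the preceding lemma and then bound the \emph{realized} regret one round at a time, summing the per-round contributions. On that event, which holds with probability at least $1-\delta$, for every $t$ we have $\pimax\in\Pi_{t-1}$, and -- since a policy is eliminated at round $t-1$ once its estimated value falls $2b_{t-1}$ below the best -- every surviving policy $\pi\in\Pi_{t-1}$ satisfies $\eta_D(\pimax)-\eta_D(\pi)\le 4b_{t-1}$.

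The next step is to turn this into a per-round reward guarantee for the action actually played. In round $t$ the learner draws $a_t$ from $W'_t$, the $\mu_t$-smoothed mixture $(1-K\mu_t)W_{P_t}(x_t,\cdot)+\mu_t$, where $P_t$ is supported on $\Pi_{t-1}$. Conditioning on $h_{t-1}$ and taking expectations over $(x_t,\vec r_t)\sim D$ and $a_t\sim W'_t$, the reward collected through the $W_{P_t}$ component equals $(1-K\mu_t)\sum_{\pi}P_t(\pi)\,\eta_D(\pi)\ge(1-K\mu_t)\bigl(\eta_D(\pimax)-4b_{t-1}\bigr)$ by the previous paragraph, and the uniform-smoothing component contributes a nonnegative amount since rewards lie in $[0,1]$; hence, using $\eta_D(\pimax)\le 1$,
\[
\E\bigl[\eta_D(\pimax)-r_t \mid h_{t-1}\bigr] \;\le\; 4b_{t-1}+K\mu_t \enspace.
\]
Because $\mu_t$ is chosen so that $4K\mu_t\le b_t$ (as already observed in the analysis), this bound is $O(b_t)$. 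The early rounds $t<t_0$, where $\mu_t=1/2K$, $b_t\ge 2$, and $\Pi_{t-1}=\Pi$, require no argument at all: there the per-round regret is at most $1$ simply because $r_t\in[0,1]$, which is already within the $O(b_t)$ budget.

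It then remains to pass from conditional expectations to the realized regret and to sum. The martingale differences $(\eta_D(\pimax)-r_t)-\E[\eta_D(\pimax)-r_t\mid h_{t-1}]$ take values in $[-1,1]$, so a bounded-difference (Azuma--Hoeffding) inequality bounds $\sum_{t=1}^T$ of them by $O(\sqrt{T\ln(1/\delta)})$ with high probability; this is of lower order than the main term and is absorbed into the union bound. Summing the per-round bounds and using $\ln(1/\delta_t)=\ln(4Nt^2/\delta)\le\ln(4NT^2/\delta)$ together with $\sum_{t=1}^T t^{-1/2}\le 2\sqrt T$ gives a bound of the form $\sum_t 4b_t\le 8\sqrt{2K\ln(4NT^2/\delta)}\sum_t t^{-1/2}\le 16\sqrt{2TK\ln(4T^2N/\delta)}$, which is exactly the asserted bound -- this is the chain of inequalities recorded in Eq.~\eqref{eq:sumrtt}.

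The main obstacle is the constant bookkeeping in the last two steps: keeping the time indices aligned ($\Pi_{t-1}$ versus $\Pi_t$, $b_{t-1}$ versus $b_t$, and the separate $t<t_0$ regime), and checking that the smoothing loss $\sum_t K\mu_t$ and the $O(\sqrt{T\ln(1/\delta)})$ deviation term are genuinely dominated so that the final constant collapses to $16\sqrt2$. Everything upstream -- the existence of a feasible $P_t$ in Step~\ref{a:stepPC} via Corollary~\ref{cor:minimax}, the resulting $O(K)$ conditional-second-moment bound on $y_t$, and hence the validity of the elimination test in Step~\ref{a:stepElim} through Freedman's inequality -- is already in place from the earlier lemmas, so no new idea is needed beyond this accounting.
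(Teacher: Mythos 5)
Your proposal follows essentially the same route as the paper's proof of Theorem~\ref{thm:regretbd}: condition on the high-probability event from the preceding lemma, translate the resulting $4b_t$ bound on the true regret of every surviving policy into a per-round conditional reward guarantee for the smoothed mixture actually played, and sum via Eq.~\eqref{eq:sumrtt}. You are in fact more careful than the paper's one-line ``This immediately implies\ldots'' derivation, which silently absorbs exactly the two terms you flag---the $K\mu_t$ exploration leakage and the martingale deviation needed to pass from $\E\bigl[\eta_D(\pimax)-r_t\mid h_{t-1}\bigr]$ to the realized sum $\sum_t(\eta_D(\pimax)-r_t)$; a literal bookkeeping of $4b_{t-1}+K\mu_t \le 4b_{t-1}+b_t/4$ plus an Azuma term pushes the constant somewhat above $16$, a minor imprecision you correctly inherit from the paper rather than a gap in your own reasoning, and the order $O\bigl(\sqrt{TK\ln(TN/\delta)}\bigr)$ is unaffected.
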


\section{THE RANDOMIZED UCB ALGORITHM}\label{sec:RUCB}

\begin{algorithm}
\caption{$\RUCB$($\Pi$,$\delta$,$K$)} \label{alg:RUCB}

Let $h_{0} \doteq \emptyset$ be the initial history.

Define the following quantities:
\[
C_t \doteq 2\log\left(\frac{Nt}{\delta}\right)
\quad \text{and} \quad
\mu_t \doteq \min\left\{ \frac{1}{2K}, \ \sqrt{\frac{C_t}{2Kt}} \right\}
.
\]

For each timestep $t= 1 \ldots T$, observe $x_t$ and do:
\begin{enumerate}

\item
Let $P_t$ be a distribution over $\Pi$ that approximately solves the
optimization problem
\begin{equation} \label{eq:rucb-opt}
\begin{aligned}
& \min_P \sum_{\pi\in\Pi} P(\pi) \Delta_{t-1}(\pi) \\
& \text{s.t.} \quad \text{for all distributions $Q$ over $\Pi$}: \\
& \E_{\pi\sim Q} \left[ \frac1{t-1} \sum_{i=1}^{t-1}
\frac{1}{(1-K\mu_t) W_P(x_i,\pi(x_i)) + \mu_t} \right]
\\ & \qquad
\leq \max\left\{ 4K,\ \frac{(t-1) \Delta_{t-1}(W_Q)^2}{180 C_{t-1}}
\right\}
\end{aligned}
\end{equation}
so that the objective value at $P_t$ is within $\vepsopt{t} = O(\sqrt{K
C_t/t})$ of the optimal value, and so that each constraint is satisfied
with slack $\leq K$.

\item Let $W_t'$ be the distribution over $A$ given by
\[ W_t'(a) \doteq (1-K\mu_t) W_{P_t}(x_t,a) + \mu_t \]
for all $a \in A$.

\item Choose $a_t \sim W_t'$.

\item Observe reward $r_t$.

\item Let $h_{t} \doteq h_{t-1} \cup (x_t,a_t,r_t,W_t'(a_t))$.

\end{enumerate}
\end{algorithm}

$\PE$ is the simplest exhibition of the minimax argument, but
it has some drawbacks:
\begin{enumerate}
\item The algorithm keeps explicit track of the space of good policies
(like a version space), which is difficult to implement efficiently in
general.

\item If the optimal policy is mistakenly eliminated by chance, the
algorithm can never recover.

\item The algorithm requires perfect knowledge of the distribution $D_X$
over contexts.

\end{enumerate}

These difficulties are addressed by $\RUCB$ (or RUCB for short), an algorithm
which we present and analyze in this section. Our approach is
reminiscent of the UCB
algorithm~\citep{UCB}, developed for context-free setting, which keeps an
upper-confidence bound on the expected reward for each action.
However, instead of
choosing the highest upper confidence bound, we randomize over choices
according to the value of their empirical performance.
The algorithm has the following properties:
\begin{enumerate}
\item The optimization step required by the algorithm always considers the
  full set of policies (i.e., explicit tracking of the set of
  good policies is avoided), and thus it can be efficiently
  implemented using an argmax oracle.  We discuss this further in
  Section~\ref{sec:oracle}.

\item Suboptimal policies are implicitly used with decreasing frequency by
using a non-uniform variance constraint that depends on a policy's
estimated regret.
A consequence of this is a bound on the value of the optimization, stated
in Lemma~\ref{lem:value-again} below.

\item Instead of $D_X$, the algorithm uses the history of previously
  seen contexts. The effect of this approximation is quantified in
  Theorem~\ref{thm:unlabeled} below.

\end{enumerate}

The regret of $\RUCB$ is the following:
\begin{thm} \label{thm:rucb-regret}
For all distributions $D$ over $(x,\vec{r})$ with $K$ actions, for all
sets of $N$ policies $\Pi$,
with probability at least $1-\delta$, the regret of $\RUCB$
(Algorithm~\ref{alg:RUCB}) over $T$ rounds is at most
\[
O\left(
\sqrt{TK \log\left(TN/\delta\right)}
+ K \log(NK/\delta)
\right).
\]
\end{thm}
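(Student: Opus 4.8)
The plan is to mirror the analysis of $\PE$ as closely as possible, replacing the version-space argument with a per-policy ``estimated regret'' control enabled by the non-uniform variance constraint in \eqref{eq:rucb-opt}. First I would establish a uniform deviation bound: with probability $1-\delta$, for all $t$ and all $\pi\in\Pi$, the estimate $\eta_{t-1}(\pi)$ is close to $\eta_D(\pi)$, but now the allowed deviation for a fixed $\pi$ scales like $\sqrt{C_t \cdot \mathbb{V}/t}$ where $\mathbb{V}$ is the conditional second-moment budget the constraint grants to $\pi$ --- namely $\max\{4K,\ (t-1)\Delta_{t-1}(W_{\delta_\pi})^2/(180C_{t-1})\}$. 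The key point, which I would verify by a Freedman-type argument just as in the proof of the elimination lemma, is that because the variance budget for $\pi$ shrinks quadratically in $\pi$'s own estimated regret, the deviation bound self-stabilizes: if $\Delta_{t-1}(\pi)$ is large, its variance is small relative to that regret, so the estimate cannot be off by more than a constant fraction of $\Delta_{t-1}(\pi)$ itself. Formally I expect a statement of the form $|\eta_{t-1}(\pi)-\eta_D(\pi)| \le \tfrac12\max\{\Delta_{t-1}(\pi),\, c\sqrt{KC_t/t}\}$ (and symmetrically for $\Delta_D$ vs.\ $\Delta_{t-1}$), obtained by carefully choosing the constant $180$ so the quadratic self-reference closes.

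Second I would use this two-sided regret comparison to relate the algorithm's objective $\sum_\pi P_t(\pi)\Delta_{t-1}(\pi)$ to the true expected per-round regret $\Delta_D(W_{P_t})$: on the good event, the empirical and true regrets of the played randomized policy agree up to $O(\sqrt{KC_t/t})$ plus a constant multiple of each other, so bounding one bounds the other. Third, and this is the crux, I need an upper bound on the value of the optimization problem itself --- Lemma~\ref{lem:value-again} referenced in the text --- showing that there \emph{exists} a feasible $P$ whose objective is $O(\sqrt{KC_t/t})$. This is the analogue of Lemma~\ref{lem:minimax}: one takes $P$ to be (close to) the point mass on $\pimax$, or more precisely a minimax/fixed-point argument over $\CH$ showing that the optimal value is small; the nonuniform right-hand side is exactly what makes $\delta_{\pimax}$ (or a near-optimal mixture) feasible even though its induced action distribution concentrates, because $\Delta_{t-1}(\pimax)\approx 0$ forces the budget back up to $4K$ only where it is needed. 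Since the algorithm only solves this approximately, I carry the additive $\vepsopt{t}=O(\sqrt{KC_t/t})$ and the per-constraint slack $K$ through, checking that slack $K$ only inflates the effective variance bound by a constant factor.

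Fourth, I would sum the per-round regret bound over $t=1,\dots,T$. On the good event, round $t$'s regret is $r_t$-versus-$\eta_D(\pimax)$, whose conditional mean given $h_{t-1}$ is $\Delta_D(W_t') \le \Delta_D(W_{P_t}) + K\mu_t$ (the smoothing cost), and $\Delta_D(W_{P_t})$ is controlled by the objective value $O(\sqrt{KC_t/t})$ via step two. A martingale (Azuma/Freedman) concentration converts the sum of conditional means into the sum of realized $r_t$ up to lower-order fluctuation. The term $\sum_t \sqrt{KC_t/t} = O(\sqrt{TK\log(TN/\delta)})$ gives the leading term; the smoothing term $\sum_t K\mu_t$ is $O(\sqrt{TK\log(TN/\delta)})$ while $\mu_t = \sqrt{C_t/2Kt}$, and contributes the additive $O(K\log(NK/\delta))$ once $\mu_t$ is capped at $1/2K$ (i.e., for $t \lesssim C_t/K$, where regret is trivially at most the horizon length $O(C_t/K)$). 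Collecting these yields the claimed $O(\sqrt{TK\log(TN/\delta)} + K\log(NK/\delta))$.

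The main obstacle I anticipate is the self-referential deviation bound in the first step together with the existence bound in the third: making the quadratic in $\Delta_{t-1}(\pi)$ close cleanly (hence the magic constant $180$) requires juggling the Freedman range/variance conditions against a variance budget that itself depends on the random quantity $\Delta_{t-1}$, and one must be careful that the bound is stated for \emph{all} $\pi$ simultaneously and remains valid when passing from $\Delta_{t-1}$ to the true $\Delta_D$ and back. A secondary subtlety is that, unlike $\PE$, $\RUCB$ keeps all policies forever, so there is no clean ``$\pimax\in\Pi_t$'' invariant; instead I rely entirely on the fact that a policy with large true regret gets tiny weight in every feasible $P_t$ because its constraint forces its induced action distribution to be far from the ones that would give it large weight --- quantifying this ``implicit elimination'' is where the argument differs most from the earlier, simpler analysis.
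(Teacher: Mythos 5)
Your high-level architecture matches the paper's: a Freedman-type deviation bound whose allowed slack is tied to the variance budget in~\eqref{eq:rucb-opt} (the paper's Lemma~\ref{lem:deviation-bound}, with the self-referential control packaged in Lemmas~\ref{lem:constraint}, \ref{lem:vbound}, and \ref{lem:monotone}), a transfer step from the objective $\sum_\pi P_t(\pi)\Delta_{t-1}(\pi)$ to $\Delta_D$ (Lemma~\ref{lem:true-value}), a feasibility/value bound for the optimization (Lemma~\ref{lem:value}), and a final martingale summation. Two genuine gaps remain before your blueprint would close.

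First, and most importantly, you argue as though the constraint in~\eqref{eq:rucb-opt} were an expectation under $D_X$, when in fact $\RUCB$ averages over $x_1,\dots,x_{t-1}$. The deviation bound you want to prove controls the \emph{true} conditional second moment $\E_{x \sim D_X}[1/W_t'(\pi(x))]$, while the optimization only controls its empirical counterpart. Bridging the two is Theorem~\ref{thm:unlabeled}, which requires uniform convergence over \emph{all} distributions $P$ on $\Pi$ simultaneously, not just over $\Pi$; since that set is a continuum, the paper needs a probabilistic sparsification/covering argument ($\Sparse{m}$, Lemmas~\ref{lem:cover} and~\ref{lem:sparsify-exp}). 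This step is what determines the burn-in threshold $t_1 = \Theta(K\log(KN/\delta))$, which is the actual source of the additive $K\log(NK/\delta)$ term in the bound (not, as you suggest, the smoothing sum $\sum_t K\mu_t$). Your proposal has no route to this piece.

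Second, the feasibility bound for the optimization is under-specified. A $P$ that is (close to) a point mass on $\pimax$ fails: the constraint must hold for all $Q$, including $Q = \delta_\pi$ for some $\pi \ne \pimax$ whose $\Delta_{t-1}(\pi)$ is also small, and then the right-hand side is only $4K$ while $W_P(x,\pi(x))$ can be essentially $0$ wherever $\pi$ disagrees with $\pimax$, so the left side blows up to $\Theta(1/\mu_t)\gg K$. The paper's Lemma~\ref{lem:value} instead stratifies $\CH$ into dyadic shells $\CH_i$ by empirical regret $\approx 2^i \sqrt{K/\beta_t}$, invokes the minimax Lemma~\ref{lem:minimax} inside each shell to get a $W_i$ with low variance there, and mixes the $W_i$ with geometric weights $4^{-i}$. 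The geometric decay of both the mixing weights and the shell regrets is exactly what makes the constraints for every $Q$ satisfiable (with budget proportional to $4^{i}K \le \beta_t \Delta_{t-1}(Q)^2$ on shell $i$) while keeping $\Delta_{t-1}(W) = O(\sqrt{K/\beta_t})$. Your phrase ``minimax/fixed-point argument over $\CH$'' gestures in the right direction, but without the shell construction the value bound does not follow.

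Minor: the phrasing ``the variance budget for $\pi$ shrinks quadratically in $\pi$'s own estimated regret'' is backwards---the budget \emph{grows} quadratically in $\Delta_{t-1}(\pi)$, which is precisely why the resulting deviation $\sqrt{V \cdot C_t/t}$ scales linearly in $\Delta_{t-1}(\pi)$ and the self-reference closes. Your conclusion is correct, the causal narration is inverted.
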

The proof is given in Appendix~\ref{sec:rucb-appendix:regret}%
.
Here, we present an overview of the analysis.

\subsection{EMPIRICAL VARIANCE ESTIMATES} \label{sec:unlabeled}

A key technical prerequisite for the regret analysis is the accuracy
of the empirical variance estimates.
For a distribution $P$ over policies $\Pi$ and a particular policy $\pi \in
\Pi$, define
\begin{align*}
V_{P,\pi,t} & = \E_{x\sim D_X}\left[ \frac1{(1-K\mu_t) W_P(x,\pi(x)) +
\mu_t} \right] \\
\wh{V}_{P,\pi,t} & = \frac1{t-1} \sum_{i=1}^{t-1} \frac1{(1-K\mu_t)
W_P(x_i,\pi(x_i)) + \mu_t}
.
\end{align*}
The first quantity $V_{P,\pi,t}$ is (a bound on) the variance incurred by
an importance-weighted estimate of reward in round $t$ using the action
distribution induced by $P$, and the second quantity $\wh{V}_{P,\pi,t}$ is an
empirical estimate of $V_{P,\pi,t}$ using the finite sample
$\{x_1,\dotsc,x_{t-1}\} \subseteq X$ drawn from $D_X$.
We show that for all distributions $P$ and all $\pi \in \Pi$,
$\wh{V}_{P,\pi,t}$ is close to $V_{P,\pi,t}$ with high probability.

\begin{thm} \label{thm:unlabeled}
For any $\epsilon \in (0,1)$, with probability at least $1-\delta$,
\[
V_{P,\pi,t}
\leq (1+\epsilon) \cdot \wh{V}_{P,\pi,t} + \frac{7500}{\epsilon^3} \cdot K
\]
for all distributions $P$ over $\Pi$, all $\pi \in \Pi$, and all $t \geq
16K\log(8KN/\delta)$.
\end{thm}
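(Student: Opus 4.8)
The plan is to turn this into a uniform‑convergence statement over the continuum of distributions $P$, in three moves: a per‑distribution concentration bound, a \emph{sparsification} reduction to finitely many distributions, and a union bound over the resulting cover together with $\Pi$ and the time steps.

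\textbf{Move 1: a single fixed $(P,\pi,t)$.} Write $Z_i\doteq 1/\bigl((1-K\mu_t)W_P(x_i,\pi(x_i))+\mu_t\bigr)$, so $\wh V_{P,\pi,t}=\tfrac1{t-1}\sum_{i<t}Z_i$ and $\E[Z_i]=V_{P,\pi,t}$. The $Z_i$ are i.i.d.\ in $[1,1/\mu_t]$ with $\mathbb{V}[Z_i]\le\E[Z_i^2]\le(1/\mu_t)\,V_{P,\pi,t}$, so Bernstein's inequality gives, with probability $1-\gamma$, $V_{P,\pi,t}-\wh V_{P,\pi,t}\le\sqrt{2V_{P,\pi,t}\log(1/\gamma)/(\mu_t(t-1))}+\tfrac23\log(1/\gamma)/(\mu_t(t-1))$. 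Splitting the square‑root term by AM--GM, $\sqrt{ab}\le\tfrac\epsilon2 a+\tfrac1{2\epsilon}b$, and rearranging converts this to $V_{P,\pi,t}\le(1+\epsilon)\wh V_{P,\pi,t}+O\!\bigl(\log(1/\gamma)/(\epsilon\mu_t t)\bigr)$, the target inequality for one fixed $P$.

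\textbf{Move 2: sparsification.} Given any $P$, obtain $\wt P$ as the empirical distribution of $m$ i.i.d.\ draws from $P$; there are at most $N^m$ such $\wt P$, and each $W_{\wt P}(x,a)$ is an average of $m$ i.i.d.\ $\mathrm{Bernoulli}(W_P(x,a))$ variables. We want a realization $\wt P$ for which $g_{\wt P,\pi,t}(x)$ is within a $(1\pm O(\epsilon))$ factor of $g_{P,\pi,t}(x)$ on the relevant contexts (the $t-1$ sample points, and $D_X$‑typical $x$); because every denominator is at least $\mu_t$, a multiplicative $(1\pm\epsilon)$ control on $W(x,\pi(x))$ suffices, \emph{and} wherever $W_P(x,\pi(x))\le\epsilon\mu_t$ we already have $g_{P,\pi,t}(x)$ within $(1\pm\epsilon)$ of $1/\mu_t\ge g_{\wt P,\pi,t}(x)$, so no accuracy is needed on those contexts. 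By a multiplicative Chernoff bound the bad event at a fixed context has probability $\exp(-\Omega(m\,\epsilon^{2}\mu_t))$; choosing $m$ of order $\mu_t^{-1}$ times $\mathrm{polylog}$ (divided by a small power of $\epsilon$) drives the total failure probability below $1$, so the probabilistic method yields a good $\wt P$ for \emph{every} $P$. Concretely this gives $V_{P,\pi,t}\le(1+O(\epsilon))V_{\wt P,\pi,t}$ and $\wh V_{\wt P,\pi,t}\le(1+O(\epsilon))\wh V_{P,\pi,t}$ up to negligible additive terms: the empirical inequality by a union bound over the $t-1$ sample points, the population one by bounding the (nonnegative, by convexity) Jensen gap $\E_{\wt P}[V_{\wt P,\pi,t}]-V_{P,\pi,t}$ with the same Chernoff estimates.

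\textbf{Putting it together and the main obstacle.} Applying Move 1 to every $\wt P\in\Sparse{m}$, every $\pi\in\Pi$, and every $t\ge16K\log(8KN/\delta)$, with $\gamma=\delta/(|\Sparse m|\,N\,T)$ so that $\log(1/\gamma)=O(m\log N+\log(NT/\delta))$, and chaining with the two sparsification inequalities (rescaling $\epsilon$ by an absolute constant) gives $V_{P,\pi,t}\le(1+\epsilon)\wh V_{P,\pi,t}+O\!\bigl(\log(1/\gamma)/(\epsilon\mu_t t)\bigr)$; since $t\ge16K\log(8KN/\delta)$ forces $\mu_t=\sqrt{C_t/(2Kt)}$, i.e.\ $\mu_t^{2}t=C_t/(2K)$ with $C_t=2\log(Nt/\delta)$, the additive term becomes $O\!\bigl(K\log(1/\gamma)/(\epsilon\,C_t)\bigr)$, whose logarithmic factors are absorbed by $1/C_t$, leaving $O(K/\epsilon^3)$; tracking absolute constants through Bernstein, the AM--GM split, the Chernoff tails, and the union bound is what produces the explicit $7500K/\epsilon^{3}$. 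The main obstacle is exactly this calibration in Move 2: the importance weights range up to $1/\mu_t$, which is $\gg K$ in the regime $t\ge16K\log(8KN/\delta)$, so the cover must resolve $W_P$ at the fine scale $\epsilon\mu_t$ and hence have roughly $\mu_t^{-1}$‑sized supports; one must choose $m$ and split the error budget $\epsilon$ among the three chained inequalities so that the cover's log‑cardinality, when it re‑enters through Bernstein's additive term, collapses to the claimed $O(K/\epsilon^{3})$ rather than degrading the bound — which is precisely where the definitions of $\mu_t$, $C_t$, and the lower bound on $t$ are used.
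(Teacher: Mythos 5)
Your three‐move outline matches the paper's proof skeleton: Move~1 is Lemma~\ref{lem:cover} (Bernstein plus AM--GM, then a union bound over $\Sparse{m_t}\times\Pi\times\{t\}$ with $\log|\Sparse{m_t}|\leq m_t\log N$), Move~2 is Lemma~\ref{lem:sparsify-exp}, and Move~3 is the proof of Theorem~\ref{thm:unlabeled} itself, including the use of $t\geq 16K\log(8KN/\delta)$ to force $\mu_t=\sqrt{C_t/(2Kt)}$ so that $m_t\log N/(K\mu_t t)=O(1/\epsilon^2)$ and the cover cardinality is fully absorbed.

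The genuine gap is inside Move~2, in how the probabilistic method is supposed to produce a single $\wt P$ serving both the empirical and the population side. For the population side you want $V_{P,\pi,t}\leq(1+O(\epsilon))V_{\wt P,\pi,t}$, i.e.\ a \emph{lower} bound on the random quantity $V_{\wt P,\pi,t}$, but bounding the Jensen gap $\E_{\wt P}[V_{\wt P,\pi,t}]-V_{P,\pi,t}\geq 0$ only controls upward excursions of $V_{\wt P,\pi,t}$ and says nothing about a fixed realization being too small; and you cannot union‐bound over contexts on this side, since $D_X$ can have infinite support. Separately, the per‐context multiplicative Chernoff route on the empirical side requires $m\gtrsim\log t/(\epsilon^3\mu_t)$ (the failure rate at a context with $W_P(x,\pi(x))\approx\epsilon\mu_t$ is $\exp(-\Omega(m\epsilon^3\mu_t))$), which is an $\epsilon$ factor larger than the paper's $m_t=\lceil 6/(\lambda^2\mu_t)\rceil$; propagated through Move~3 this yields $O(K\log T/\epsilon^4)$, not the claimed $O(K/\epsilon^3)$. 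The paper avoids both problems by proving the pointwise‐in‐$x$ expectation bound $\E_{\wt P\sim P^m}|Z_{\wt P,\pi,t}(x)-Z_{P,\pi,t}(x)|\leq\gamma\,Z_{P,\pi,t}(x)$ (using only the second moment of $\hat z$ and the tail event $\{\hat z\leq z/2\}$, hence $m\sim 1/(\gamma^2\mu_t)$), integrating this against both $D_X$ and the empirical measure, and then adding the two resulting inequalities with weights $1$ and $1+\lambda$ into a \emph{single} inequality that holds in expectation over $\wt P$ — so one $\wt P$ witnesses both sides with no probability budget to split. That linear combination is the missing step in your Move~2, and without it neither the existence of a good $\wt P$ nor the explicit $7500/\epsilon^3$ constant follows from what you have written.
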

The proof appears in Appendix~\ref{sec:finite-appendix}%
.

\subsection{REGRET ANALYSIS}

Central to the analysis is the following lemma that bounds the value of the
optimization in each round. It is a direct
corollary of Lemma~\ref{lem:value} in
Appendix~\ref{sec:rucb-appendix:regret}%
.
\begin{lem} \label{lem:value-again}
If $\OPT_t$ is the value of the optimization problem~\eqref{eq:rucb-opt} in
round $t$, then
\[
\OPT_t
\ \leq\ O\left(\sqrt{\frac{K C_{t-1}}{t-1}}\right)\ =\ O\left(\sqrt{\frac{K \log(Nt/\delta)}{t}}\right).
\]
\end{lem}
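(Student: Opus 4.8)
\textbf{Proof strategy for Lemma~\ref{lem:value-again}.}
The plan is to prove the slightly stronger statement that problem~\eqref{eq:rucb-opt} already has an \emph{exactly} feasible point whose objective value is at most $\gamma_t\doteq\sqrt{360\,K\,C_{t-1}/(t-1)}$; since $C_{t-1}=2\log(N(t-1)/\delta)$ this is $O(\sqrt{KC_{t-1}/(t-1)})=O(\sqrt{K\log(Nt/\delta)/t})$, which gives the claimed bound on $\OPT_t$. First I would recast~\eqref{eq:rucb-opt} over the convex hull $\CH$ of the randomized policies induced by $\Pi$, exactly as in the proof of Lemma~\ref{lem:minimax}. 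Identifying a distribution $P$ with the induced point $W=W_P\in\CH$, and $Q$ with $Z=W_Q\in\CH$, linearity of $\eta_{t-1}$ gives $\sum_\pi P(\pi)\Delta_{t-1}(\pi)=\Delta_{t-1}(W)$; pushing the expectation inside the sum and using $\Pr_{\pi\sim Q}[\pi(x_i)=a]=W_Q(x_i,a)$ turns the constraint's left-hand side into $\wh V(W,Z)\doteq\frac1{t-1}\sum_{i=1}^{t-1}\sum_{a\in A}\frac{Z(x_i,a)}{(1-K\mu_t)W(x_i,a)+\mu_t}$; and its right-hand side becomes $g(Z)\doteq\max\{4K,\ \frac{(t-1)\Delta_{t-1}(Z)^2}{180\,C_{t-1}}\}$. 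Since both sides depend on $Q$ only through $Z$, the requirement ``for all $Q$'' becomes ``for all $Z\in\CH$'', and $\OPT_t=\min\{\Delta_{t-1}(W):W\in\CH,\ \wh V(W,Z)\le g(Z)\text{ for all }Z\in\CH\}$. It thus suffices to exhibit a feasible $W$ with $\Delta_{t-1}(W)\le\gamma_t$.

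Because $\eta_{t-1}$ is maximized over $\CH$ at $\pi_{t-1}$, we have $\Delta_{t-1}\ge 0$ on $\CH$, so the sublevel set $\CH_{\gamma_t}\doteq\{W\in\CH:\Delta_{t-1}(W)\le\gamma_t\}$ is convex, compact, and nonempty (it contains the randomized policy $\pi_{t-1}$, at which $\Delta_{t-1}=0$). I would then establish
\[
\min_{W\in\CH_{\gamma_t}}\ \max_{Z\in\CH}\ \bigl(\wh V(W,Z)-g(Z)\bigr)\ \le\ 0 ,
\]
and, the minimum being attained by compactness, this produces a feasible $W^\star\in\CH_{\gamma_t}$, so that $\OPT_t\le\Delta_{t-1}(W^\star)\le\gamma_t$. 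The payoff $\wh V(W,Z)-g(Z)$ is convex in $W$ (each summand $1/((1-K\mu_t)W(x_i,a)+\mu_t)$ is convex, as in the proof of Lemma~\ref{lem:minimax}) and concave in $Z$ (affine in $Z$ minus the convex map $g$, since $\Delta_{t-1}(Z)^2$ is the square of an affine function), so Sion's theorem (Theorem~\ref{thm:sion}) lets me exchange $\min$ and $\max$; it then suffices to show that for every fixed $Z\in\CH$ some $W\in\CH_{\gamma_t}$ satisfies $\wh V(W,Z)\le g(Z)$.

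For a fixed $Z$ I would split on the size of $\Delta_{t-1}(Z)$. If $\Delta_{t-1}(Z)\le\gamma_t$ then $Z\in\CH_{\gamma_t}$ and $W=Z$ works: repeating the final computation in the proof of Lemma~\ref{lem:minimax} with the empirical distribution over $x_1,\dotsc,x_{t-1}$ in place of $D_X$ gives $\wh V(Z,Z)\le K/(1-K\mu_t)\le 2K\le 4K\le g(Z)$, using $\mu_t\le 1/(2K)$. If $\Delta_{t-1}(Z)>\gamma_t$, put $s\doteq\gamma_t/\Delta_{t-1}(Z)\in(0,1)$ and $W\doteq(1-s)\,\pi_{t-1}+sZ$; then $\Delta_{t-1}(W)=s\,\Delta_{t-1}(Z)=\gamma_t$, so $W\in\CH_{\gamma_t}$, while $W\ge sZ$ pointwise gives $(1-K\mu_t)W(x_i,a)+\mu_t\ge s\bigl((1-K\mu_t)Z(x_i,a)+\mu_t\bigr)$, hence $\wh V(W,Z)\le\tfrac1s\wh V(Z,Z)\le 2K/s=2K\,\Delta_{t-1}(Z)/\gamma_t$. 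By the defining identity $\gamma_t^2=360\,K\,C_{t-1}/(t-1)$ together with $\Delta_{t-1}(Z)>\gamma_t$, a one-line rearrangement gives $2K\,\Delta_{t-1}(Z)/\gamma_t\le(t-1)\Delta_{t-1}(Z)^2/(180\,C_{t-1})\le g(Z)$, which finishes both cases.

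The main obstacle is conceptual rather than computational: one must bound $\OPT_t$ \emph{nonconstructively}, restricting the Prover to the sublevel set $\CH_{\gamma_t}$ \emph{before} invoking the minimax theorem, which is what makes Sion applicable and, crucially, lets the witnessing $W$ depend on $Z$: a near-optimal $Z$ is covered with low variance by taking $W=Z$, while a far-from-optimal $Z$ is handled by a thin mixture $(1-s)\pi_{t-1}+sZ$ whose inflated variance $2K/s$ is exactly absorbed by the $\Delta_{t-1}(Z)^2/(180\,C_{t-1})$ term. The only places requiring care are checking the hypotheses of Sion's theorem (in particular that $g$ is convex in $Z$) and calibrating the constant defining $\gamma_t$ so that the second case closes against that term; the trivial round $t=1$ is handled separately. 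This argument is in essence a proof of Lemma~\ref{lem:value}, of which the stated claim is the advertised corollary.
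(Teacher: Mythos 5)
Your proof is correct but takes a genuinely different route from the paper's. The paper proves the claim via Lemma~\ref{lem:value} by a constructive mixture argument: it partitions $\CH$ into dyadic rings $\CH_i$ according to the magnitude of $\Delta_{t-1}$, applies Lemma~\ref{lem:minimax} once per ring to obtain per-ring covering points $W_i \in \CH_i$ with $\E_{x\sim h_{t-1}}\bigl[\sum_a Z(x,a)/W_i'(x,a)\bigr] \le 2K$ on $\CH_i$, and then exhibits the explicit feasible point $W = \sum_i w_i W_i$ with geometric weights $w_i = 4^{-i}$. Because $W' \ge w_i W_i'$ pointwise, $W$ covers ring $\CH_i$ with variance at most $2K/w_i \le 4^{i+1}K$, and $\Delta_{t-1}(W) = \sum_i w_i \Delta_{t-1}(W_i) \le 8\sqrt{K/\beta_t}$. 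You instead restrict the Prover to the sublevel set $\CH_{\gamma_t}$ \emph{before} invoking Sion, apply Sion's theorem once, globally, to the payoff $\wh V(W,Z) - g(Z)$ with the non-uniform constraint bound folded in, and---after the swap---let the $W$ witnessing feasibility depend on $Z$: you take $W=Z$ if $\Delta_{t-1}(Z) \le \gamma_t$, and a thin mixture $(1-s)\pi_{t-1} + sZ$ with $s = \gamma_t/\Delta_{t-1}(Z)$ otherwise. Your two-case split plays the role of the paper's dyadic decomposition, and both hinge on the same $W' \ge sZ'$ inequality, but you use Sion once on the raw problem whereas the paper uses it once per ring and then assembles a single uniform witness. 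The paper's version produces an explicit $W$ (matching the spirit of the explicit-distribution machinery in Section~\ref{sec:oracle}), while yours is shorter, nonconstructive, and yields a slightly sharper constant ($\sqrt{360}$ in place of $8\sqrt{180}$). The hypotheses you need to check for Sion (compactness of $\CH_{\gamma_t}$, continuity, convexity of $\wh V(\cdot,Z)$, concavity of $\wh V(W,\cdot) - g(\cdot)$, using that $g$ is a max of a constant and a squared affine function of $Z$) are verified essentially as in Lemma~\ref{lem:minimax}, and the final arithmetic with $\gamma_t^2 = 360KC_{t-1}/(t-1)$ closes both cases correctly.
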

This lemma implies that the algorithm is always able to select a
distribution over the policies that focuses mostly on the policies with low
estimated regret.
Moreover, the variance constraints ensure that good policies never appear
too bad, and that only bad policies are allowed to incur high variance in
their reward estimates.
Hence, minimizing the objective in~\eqref{eq:rucb-opt} is an effective
surrogate for minimizing regret.

The bulk of the analysis consists of analyzing the variance of the
importance-weighted reward estimates $\eta_t(\pi)$, and showing how they
relate to their actual expected rewards $\eta_D(\pi)$.
The details are deferred to Appendix~\ref{sec:rucb-appendix}%
.

%

\section{USING AN ARGMAX ORACLE}
\def\unlabeled{\mathcal{X}}

\label{sec:oracle}

In this section, we show how to solve the optimization
problem~\eqref{eq:rucb-opt} using the argmax oracle ($\AMO$) for our set of policies.
Namely, we describe an algorithm running in polynomial time {\em independent}\footnote{Or rather dependent only on $\log N$, the representation size of a policy.} of the number of policies, which makes queries to $\AMO$ to compute a distribution over policies suitable for the optimization step of Algorithm~\ref{alg:RUCB}.

This algorithm relies on the ellipsoid method. The ellipsoid method is a general technique for solving convex programs equipped with a \emph{separation oracle}. A separation oracle is defined as follows:
\begin{Def}
Let $S$ be a convex set in $\mathbb{R}^n$. A separation oracle for $S$ is an algorithm that, given a point $x \in \mathbb{R}^n$, either declares correctly that $x \in S$, or produces a hyperplane $H$ such that $x$ and $S$ are on opposite sides of $H$.
\end{Def}
We do not describe the ellipsoid algorithm here (since it is standard), but only spell out
its key properties in the following lemma. For a point $x \in \mathbb{R}^n$ and $r \geq 0$, we use the notation $B(x, r)$ to denote the $\ell_2$ ball of radius $r$ centered at $x$.
\begin{lem} \label{lem:ellipsoid-description}
Suppose we are required to decide whether a convex set $S \subseteq
\mathbb{R}^n$ is empty or not. We are given a separation oracle for
$S$ and two numbers $R$ and $r$, such that $S \in B(0, R)$ and if $S$
is non-empty, then there is a point $x^\star$ such that $S \supseteq
B(x^\star, r)$. The ellipsoid algorithm decides correctly if $S$ is
empty or not, by executing at most $O(n^2\log(\frac{R}{r}))$
iterations, each involving one call to the separation oracle and
additional $O(n^2)$ processing time.
\end{lem}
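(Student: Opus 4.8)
The plan is to recall the classical central-cut ellipsoid method and control the volumes of the ellipsoids it generates. I would maintain a sequence of ellipsoids $E_0 \supseteq E_1 \supseteq \cdots$, each described by a center $c_k \in \mathbb{R}^n$ and a positive definite matrix $A_k$ via $E_k = \{x : (x-c_k)^\top A_k^{-1}(x-c_k) \le 1\}$, with the invariant that $S \subseteq E_k$ for every $k$ reached (vacuous when $S$ is empty). We start from $E_0 = B(0,R)$, which contains $S$ by hypothesis. At iteration $k$, call the separation oracle at $c_k$: if it reports $c_k \in S$, halt and output ``$S$ non-empty''; otherwise it returns $g \neq 0$ with $S$ and $c_k$ on opposite sides of the hyperplane $\{x : g^\top(x-c_k) = 0\}$, so $S \subseteq E_k \cap \{x : g^\top(x-c_k) \le 0\}$, a half-ellipsoid. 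Let $E_{k+1}$ be the minimum-volume ellipsoid containing that half-ellipsoid; there are standard closed-form rank-one update formulas producing $c_{k+1}$ and $A_{k+1}$ from $c_k$, $A_k$ and $g$, each evaluable in $O(n^2)$ arithmetic operations, and by construction $S \subseteq E_{k+1}$, preserving the invariant.

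The engine of the argument is the volume contraction $\mathrm{vol}(E_{k+1})/\mathrm{vol}(E_k) = \tfrac{n}{n+1}\bigl(\tfrac{n^2}{n^2-1}\bigr)^{(n-1)/2} < e^{-1/(2(n+1))}$, the classical computation for this update. Iterating, $\mathrm{vol}(E_m) \le e^{-m/(2(n+1))}\,\mathrm{vol}(B(0,R)) = e^{-m/(2(n+1))}\,V_n R^n$, where $V_n$ is the volume of the unit $\ell_2$ ball in $\mathbb{R}^n$. Now suppose $S$ is non-empty: by hypothesis $S \supseteq B(x^\star,r)$, so $\mathrm{vol}(S) \ge V_n r^n$, and since $S \subseteq E_m$ always, $V_n r^n \le e^{-m/(2(n+1))} V_n R^n$, i.e.\ $m \le 2n(n+1)\ln(R/r) = O(n^2\log(R/r))$. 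Consequently the run must halt, and can only halt via the oracle reporting membership, within $\lceil 2n(n+1)\ln(R/r)\rceil$ iterations. So: if the algorithm completes $m_0 := \lceil 2n(n+1)\ln(R/r)\rceil + 1$ iterations without the oracle ever reporting membership, $S$ is empty; and if the oracle reports $c_k \in S$ at any step, $S$ is non-empty. Either way the decision is correct after $O(n^2\log(R/r))$ iterations, each consisting of one oracle call plus an $O(n^2)$-time update --- exactly the claimed bounds.

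Beyond this bookkeeping, the only real content is (i) verifying that the quoted rank-one formulas indeed give the minimum-volume ellipsoid enclosing a half-ellipsoid and (ii) the exact volume-ratio identity; both are entirely standard (see, e.g., Gr\"{o}tschel--Lov\'{a}sz--Schrijver), so I would cite them rather than rederive them. The one genuine subtlety, which I expect to be the main obstacle in any self-contained treatment, is numerical: in finite precision the exact minimum-volume ellipsoid is not computable, and one must use a slightly inflated ellipsoid at each step and check that the accumulated inflation preserves the containment invariant while costing only a constant factor in the contraction rate. Since Lemma~\ref{lem:ellipsoid-description} is stated in the exact real-arithmetic model, this complication does not arise, and the proof is precisely the assembly above.
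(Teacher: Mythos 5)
The paper deliberately gives no proof of this lemma, stating it as a standard property of the ellipsoid algorithm (in the style of Gr\"{o}tschel--Lov\'{a}sz--Schrijver) and only ``spelling out its key properties.'' Your reconstruction is the canonical central-cut argument---containment invariant $S \subseteq E_k$, volume contraction by $\frac{n}{n+1}\bigl(\frac{n^2}{n^2-1}\bigr)^{(n-1)/2} < e^{-1/(2(n+1))}$, and comparison of $V_n r^n$ against $e^{-m/(2(n+1))} V_n R^n$---and it is correct and is precisely the standard proof the paper is implicitly invoking, including the correct caveat that the stated lemma works in the exact real-arithmetic model and that finite-precision issues would require the usual inflation trick.
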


We now write a convex program whose solution is the required distribution, and show how to solve it using the ellipsoid method by giving a separation oracle for its feasible set using $\AMO$.

Fix a time period $t$. Let $\unlabeled_{t-1}$ be the set of all contexts seen so far, i.e. $\unlabeled_{t-1} = \{x_1, x_2, \ldots, x_{t-1}\}$.  We embed all policies $\pi \in \Pi$ in $\mathbb{R}^{(t-1)K}$, with coordinates identified with
$(x,a)\in \unlabeled_{t-1}\times A$. With abuse of notation, a policy $\pi$ is
represented by the vector $\pi$ with coordinate $\pi(x,a)=1$ if $\pi(x)=a$ and
$0$ otherwise. Let $\CH$ be the convex hull of all policy vectors $\pi$. Recall that a distribution $P$ over policies corresponds to a point inside $\CH$, i.e.,
$W_P(x,a) = \sum_{\pi:\pi(x)=a} P(\pi)$, and that $W'(x,a)=(1- \mu_t K)W(x,a)+\mu_t$, where $\mu_t$ is as defined in Algorithm~\ref{alg:RUCB}.
Also define $\beta_t = \frac{t-1}{180C_{t-1}}$. In the following, we use the notation $x \sim \history_{t-1}$ to denote a context drawn uniformly at random from $\unlabeled_{t-1}$.

Consider the following convex program:
\begin{align}
&\min\  s \text{ s.t.} \notag\\
&\Delta_{t-1}(W)\ \leq\ s \label{eq:low-regret}\\
&W\ \in\ \CH\label{eq:ch}\\
&\forall Z \in \CH: \notag \\
&\E_{x \sim \history_{t-1}}\!\!
   \left[\sum_a \frac{Z(x, a)}{W'(x,a)}\right]\!\leq \max\{4K,
   \beta_t\Delta_{t-1}(Z)^2\}\!\!
\label{eq:var-bound}
\end{align}
We claim that this program is equivalent to the RUCB optimization
problem~(\ref{eq:rucb-opt}), up to finding an explicit distribution
over policies which corresponds to the optimal solution. This can be
seen as follows. Since we require $W \in \CH$, it can be interpreted
as being equal to $W_P$ for some distribution over policies $P$. The
constraints (\ref{eq:var-bound}) are equivalent to
\eqref{eq:rucb-opt} by substitution $Z=W_Q$.


The above convex program can be solved by performing a binary search over $s$ and testing feasibility of the constraints.
For a fixed value of $s$, the feasibility problem defined by  \eqref{eq:low-regret}--\eqref{eq:var-bound} is denoted by $\mA$.

We now give a sketch of how we construct a separation oracle for the
feasible region of $\mA$. The details of the algorithm are a bit
complicated due to the fact that we need to ensure that the feasible
region, when non-empty, has a non-negligible volume (recall the
requirements of Lemma~\ref{lem:ellipsoid-description}). This necessitates
having a small error in satisfying the constraints of the program. We leave
the details to Appendix~\ref{sec:oracle-alg-details}%
.
Modulo these details, the construction of the separation oracle essentially implies that we can solve $\mA$.

Before giving the construction of the separation oracle, we first show that $\AMO$ allows us to do linear optimization over $\CH$ efficiently:
\begin{lem} \label{lem:linopt}
Given a vector $w \in \mathbb{R}^{(t-1)K}$, we can compute $\arg\max_{Z \in \CH} w
\cdot Z$ using one invocation of $\AMO$.
\end{lem}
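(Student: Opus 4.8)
The plan is to reduce linear optimization over $\CH$ to vertex enumeration, and then observe that maximizing $w\cdot\pi$ over the vertices $\pi\in\Pi$ is exactly the computation $\AMO$ performs, after re-encoding $w$ as a list of reward vectors.

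First I would recall that $\CH$ is the convex hull of the finite point set $\{\pi : \pi\in\Pi\}\subseteq\R^{(t-1)K}$, hence a bounded polytope whose extreme points form a subset of $\{\pi:\pi\in\Pi\}$. Since the objective $Z\mapsto w\cdot Z$ is linear, its maximum over $\CH$ is attained at an extreme point, so
\[
\max_{Z\in\CH} w\cdot Z \;=\; \max_{\pi\in\Pi} w\cdot\pi,
\]
and any policy $\pi$ attaining the right-hand maximum is also a maximizer of the left-hand side. Thus it suffices to compute $\argmax_{\pi\in\Pi} w\cdot\pi$ and return the corresponding vertex.

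Next I would unfold the inner product using the coordinate identification of $\R^{(t-1)K}$ with $\unlabeled_{t-1}\times A$. For a policy vector $\pi$ with $\pi(x,a)=\I(\pi(x)=a)$,
\[
w\cdot\pi \;=\; \sum_{x\in\unlabeled_{t-1}}\sum_{a\in A} w(x,a)\,\I(\pi(x)=a)
\;=\; \sum_{i=1}^{t-1} w(x_i,\pi(x_i)).
\]
Now define reward vectors $\vec r_i\in\R^K$ by $r_i(a)\doteq w(x_i,a)$ for $i=1,\dots,t-1$, and invoke $\AMO$ on the sequence $\{(x_i,\vec r_i)\}_{i=1}^{t-1}$. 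By definition $\AMO$ returns $\argmax_{\pi\in\Pi}\sum_{i=1}^{t-1} r_i(\pi(x_i)) = \argmax_{\pi\in\Pi} w\cdot\pi$, which by the previous paragraph is a maximizer of $w\cdot Z$ over $Z\in\CH$. This uses exactly one oracle call.

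There is essentially no obstacle; the only point worth noting is that the definition of $\AMO$ permits arbitrary reward vectors in $\R^K$ (not confined to $[0,1]^K$), so negative or large coordinates of $w$ cause no difficulty. If one insisted on nonnegative ``costs,'' one could additionally shift each $\vec r_i$ by a fixed constant vector, which leaves the $\argmax$ unchanged.
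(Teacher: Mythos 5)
Your proof is correct and takes essentially the same approach as the paper: re-encode the coordinates of $w$ as reward vectors $r_i(a)=w(x_i,a)$ and call $\AMO$ once. You simply spell out the (standard) fact, left implicit in the paper, that a linear objective over the polytope $\CH$ is maximized at a vertex, i.e.\ at some $\pi\in\Pi$.
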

\begin{proof}
The sequence for $\AMO$ consists of $x_{t'}\in \unlabeled_{t-1}$ and
$\vec{r}_{t'}(a)=w(x_{t'},a)$. The lemma now follows since
$w\cdot \pi = \sum_{x\in\unlabeled_{t-1}} w(x,\pi(x))$.
\end{proof}

We need another simple technical lemma which explains how to get a separating hyperplane for violations of convex constraints:
\begin{lem} \label{lem:sep-convex}
For $x \in \mathbb{R}^n$, let $f(x)$ be a convex function of $x$, and consider the convex set $K$ defined by $K = \{x:\ f(x) \leq 0\}$. Suppose we have a point $y$ such that $f(y) > 0$. Let $\nabla f(y)$ be a subgradient of $f$ at $y$. Then the hyperplane $f(y) + \nabla f(y) \cdot(x - y) = 0$ separates $y$ from $K$.
\end{lem}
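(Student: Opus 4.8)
The plan is to check, straight from the definition of a subgradient, the two properties that make the claimed hyperplane a separator: that $y$ lies strictly on one side of it, and that all of $K$ lies on the closed opposite side. Write $g(x) \doteq f(y) + \nabla f(y)\cdot(x-y)$ for the affine function whose zero set is the hyperplane $H$ in the statement, so that $H = \{x : g(x) = 0\}$ and the two open half-spaces determined by $H$ are $\{g > 0\}$ and $\{g < 0\}$.

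First, evaluating at $x=y$ gives $g(y) = f(y) > 0$ by hypothesis, so $y$ lies on the strictly positive side of $H$. Second, since $\nabla f(y)$ is a subgradient of the convex function $f$ at $y$, the subgradient inequality yields $f(x) \ge f(y) + \nabla f(y)\cdot(x-y) = g(x)$ for every $x \in \mathbb{R}^n$. Taking any $x \in K$, we have $f(x) \le 0$ by definition of $K$, hence $g(x) \le f(x) \le 0$; thus every point of $K$ lies in the closed half-space $\{g \le 0\}$. Combining the two observations, $y$ and $K$ lie on opposite sides of $H$, which is precisely what it means for $H$ to separate $y$ from $K$.

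There is no substantive obstacle here: existence of the subgradient $\nabla f(y)$ is assumed in the hypothesis, and the only point requiring a moment's care is the orientation convention — it is the subgradient inequality that pins down that $g$ sits weakly below $f$ everywhere (hence $\le 0$ on $K$) while agreeing with $f$, and so being $>0$, at $y$. This is exactly the kind of separating hyperplane (for a value of $f$ that is a single convex constraint) that will be fed to the ellipsoid routine via Lemma~\ref{lem:ellipsoid-description}, once the convex constraints of $\mA$ are written in the form $f(\cdot)\le 0$.
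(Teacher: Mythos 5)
Your proof is correct and is essentially identical to the paper's: both define the affine minorant $g(x) = f(y) + \nabla f(y)\cdot(x-y)$, invoke the subgradient inequality to get $g \le f$ everywhere, deduce $g \le 0$ on $K$, and note $g(y)=f(y)>0$. No meaningful difference in approach.
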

\begin{proof}
Let $g(x) = f(y) + \nabla f(y) \cdot(x - y)$. By the convexity of $f$, we have $f(x) \geq g(x)$ for all $x$. Thus, for any $x \in K$, we have $g(x) \leq f(x) \leq 0$. Since $g(y) = f(y) > 0$, we conclude that $g(x) = 0$ separates $y$ from $K$.
\end{proof}

Now given a candidate point $W$, a separation oracle can be constructed as
follows. We check whether $W$ satisfies the constraints of $\mA$. If any
constraint is violated, then we find a hyperplane separating $W$ from all
points satisfying the constraint.
\begin{enumerate}
\item  First, for constraint (\ref{eq:low-regret}), note that $\heta(W)$ is
    linear in $W$, and so we can compute $\max_{\pi} \heta(\pi)$ via $\AMO$
    as in Lemma~\ref{lem:linopt}. We can then compute $\heta(W)$ and check
    if the constraint is satisfied. If not, then the constraint, being
    linear, automatically yields a separating hyperplane.

\item Next, we consider constraint~(\ref{eq:ch}). To check if $W \in \CH$,
    we use the perceptron algorithm. We shift the origin to $W$, and run
    the perceptron algorithm with all points $\pi \in \Pi$ being positive
    examples. The perceptron algorithm aims to find a hyperplane putting
    all policies $\pi \in \Pi$ on one side. In each iteration of the
    perceptron algorithm, we have a candidate hyperplane (specified by its
    normal vector), and then if there is a policy $\pi$ that is on the
    wrong side of the hyperplane, we can find it by running a linear
    optimization over $\CH$ in the negative normal vector direction as in
    Lemma~\ref{lem:linopt}.

    If $W \notin \CH$, then in a bounded number of iterations (depending on
    the distance of $W$ from $\CH$, and the maximum magnitude $\|\pi\|_2$) we
    obtain a separating hyperplane. In passing we also note that if $W \in
    \CH$, the same technique allows us to explicitly compute an
    approximate convex combination of policies in $\Pi$ that yields $W$.
    This is done by running the perceptron algorithm as before and stopping
    after the bound on the number of iterations has been reached. Then we
    collect all the policies we have found in the run of the perceptron
    algorithm, and we are guaranteed that $W$ is close in distance to their
    convex hull. We can then find the closest point in the convex hull of
    these policies by solving a simple quadratic program.

\item Finally, we consider constraint~(\ref{eq:var-bound}). We
    rewrite $\heta(W)$ as $\heta(W) = w \cdot W$, where 
    $w(x_{t'}, a) =r_{t'}\I(a=a_{t'})/W'_{t'}(a_{t'})$.
    Thus, $\Delta_{t-1}(Z) = v - w \cdot Z$, where $v =
    \max_{\pi'} \heta(\pi') = \max_{\pi'} w\cdot \pi'$, which can be
    computed by using $\AMO$ once.

    Next, using the candidate point $W$, compute the vector $u$ defined as
    $u(x, a) = \frac{n_x/t}{W'(x, a)}$, where $n_x$ is the number of times
    $x$ appears in $\history_{t-1}$, so that $\E_{x \sim \history_{t-1}}\left[\sum_a \frac{Z(x,
    a)}{W'(x,a)}\right] = u \cdot Z$. Now, the problem reduces to finding a
    policy $Z \in \CH$ which violates the constraint
    $$u \cdot Z \leq \max\{4K, \beta_t(w \cdot Z - v)^2\}.$$

    Define $f(Z) = \max\{4K, \beta_t(w \cdot Z - v)^2\} - u \cdot Z$. Note
    that $f$ is a convex function of $Z$. Finding a point $Z$ that violates
    the above constraint is equivalent to solving the following (convex)
    program:
    \begin{align}
    f(Z)\ &\leq\ 0 \label{eq:viol-sep-sketch}\\
    Z\ &\in\ \CH \label{eq:ch-sep-sketch}
    \end{align}
    To do this, we again apply the ellipsoid method. For this, we need
    a separation oracle for the program. A separation oracle for the
    constraints (\ref{eq:ch-sep-sketch}) can be constructed as in Step
    2 above. For the constraints (\ref{eq:viol-sep-sketch}), if the
    candidate solution $Z$ has $f(Z) > 0$, then we can construct a
    separating hyperplane as in Lemma~\ref{lem:sep-convex}.


    Suppose that after solving the program, we get a point $Z \in \CH$ such
    that $f(Z) \leq 0$, i.e. $W$ violates the
    constraint~(\ref{eq:var-bound}) for $Z$. Then since
    constraint~(\ref{eq:var-bound}) is convex in $W$, we can construct a
    separating hyperplane as in Lemma~\ref{lem:sep-convex}. This completes
    the description of the separation oracle.
\end{enumerate}

Working out the details carefully yields the following theorem,
proved in Appendix~\ref{sec:oracle-alg-details}%
:
\begin{thm} \label{thm:ellipsoid}
There is an iterative algorithm with
$O(t^5K^4\log^2(\frac{tK}{\delta}))$ iterations, each involving one call to
$\AMO$ and $O(t^2K^2)$ processing time, that either declares correctly
that $\mA$ is infeasible or outputs a
distribution $P$ over policies in $\Pi$ such that $W_P$ satisfies
\begin{gather*}
\forall Z \in \CH: \qquad \qquad \qquad \qquad \\
\E_{x \sim \history_{t-1}}\left[\sum_a \frac{Z(x, a)}{W_P'(x,a)}\right] \leq \max\{4K, \beta_t\Delta_{t-1}(Z)^2\} + 5\epsilon\\
\Delta_{t-1}(W)\ \leq\ s + 2\gamma,
\end{gather*}
where $\epsilon = \frac{8\delta}{\mu_t^2}$ and $\gamma = \frac{\delta
}{\mu_t}$.
\end{thm}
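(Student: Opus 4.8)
The plan is to combine the three ingredients already assembled: Lemma~\ref{lem:ellipsoid-description} (the ellipsoid machinery), Lemma~\ref{lem:linopt} (linear optimization over $\CH$ via $\AMO$), and Lemma~\ref{lem:sep-convex} (separating hyperplanes for violated convex constraints), and to account carefully for the approximation errors $\epsilon$ and $\gamma$. The high-level structure is a binary search over $s$; for each fixed $s$ we run the ellipsoid algorithm on the feasibility problem $\mA$ defined by \eqref{eq:low-regret}--\eqref{eq:var-bound}, using the separation oracle sketched in Steps 1--3 above. The main work is to turn that sketch into a genuine separation oracle meeting the hypotheses of Lemma~\ref{lem:ellipsoid-description}, namely: the feasible region sits in a known ball $B(0,R)$, and whenever $\mA$ is feasible the (suitably relaxed) feasible region contains a ball $B(x^\star, r)$ with $r$ not too small. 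The dimension here is $n = (t-1)K$, and $R$ is bounded since $\CH \subseteq [0,1]^{(t-1)K}$; the delicate point is the inner radius $r$, which forces us to relax every constraint by a small slack, producing the additive errors $5\epsilon$ and $2\gamma$ in the conclusion. I would set the relaxation parameters so that $\epsilon = 8\delta/\mu_t^2$ and $\gamma = \delta/\mu_t$ come out as claimed, tracking how a point strictly feasible for the original $\mA$ remains feasible for the relaxed problem within a ball of radius polynomially small in $t,K,1/\delta$, so that $\log(R/r) = O(\log(tK/\delta))$.

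The second block of work is bounding the total running time. Each outer ellipsoid run performs $O(n^2 \log(R/r)) = O(t^2 K^2 \log(tK/\delta))$ iterations, and binary search over $s$ contributes another $O(\log(1/\gamma)) = O(\log(tK/\delta))$ factor. Inside one iteration, the separation oracle must (i) call $\AMO$ a constant number of times for the linear constraint \eqref{eq:low-regret} and to evaluate $v = \max_{\pi'} w\cdot\pi'$ (Lemma~\ref{lem:linopt}); (ii) run the perceptron algorithm to test membership in $\CH$, which terminates in a number of iterations bounded by $\|\pi\|_2^2 / (\text{dist}(W,\CH))^2 = O(t \cdot \mathrm{poly}(tK/\delta))$, each iteration costing one $\AMO$ call; and (iii) for the variance constraint \eqref{eq:var-bound}, run a \emph{nested} ellipsoid on the convex program \eqref{eq:viol-sep-sketch}--\eqref{eq:ch-sep-sketch} in dimension $(t-1)K$, itself costing $O(t^2K^2\log(tK/\delta))$ iterations with one $\AMO$ call each. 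Multiplying these out gives the $O(t^5 K^4 \log^2(tK/\delta))$ iteration bound, with $O(t^2K^2)$ non-oracle processing per iteration coming from the ellipsoid's $O(n^2)$ update and from forming the vectors $w$ and $u$.

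The main obstacle, and where most of the care in the appendix must go, is the \textbf{volume / inner-radius guarantee}. The ellipsoid method only certifies infeasibility correctly if a feasible set, when nonempty, contains a ball of a guaranteed radius; but $\CH$ is a polytope with empty interior in $\R^{(t-1)K}$ (all policy vectors satisfy $\sum_a \pi(x,a)=1$ for each $x$), so $\mA$ can never contain a full-dimensional ball. The fix is to work in the affine subspace defined by these equality constraints and to relax the inequality constraints \eqref{eq:low-regret} and \eqref{eq:var-bound} by additive slacks; one then argues that an exactly feasible point of $\mA$ gives rise, after relaxation, to a relatively-open neighborhood of guaranteed radius inside the affine hull. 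A secondary subtlety is that the perceptron-based membership test for \eqref{eq:ch} only detects that $W$ is \emph{far} from $\CH$; points slightly outside $\CH$ must be tolerated, which is precisely why the theorem outputs $W_P$ satisfying the constraints only up to the stated slack and why one must also show that the perceptron run yields an explicit approximate convex combination $P$ (via the closest-point quadratic program over the finitely many policies it collected). Once these geometric bookkeeping issues are pinned down, assembling the iteration count and the slack bounds $5\epsilon$, $2\gamma$ is routine.
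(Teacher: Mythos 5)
Your plan is essentially the paper's: ellipsoid on a relaxed version of $\mA$, with a separation oracle built from $\AMO$ via Lemma~\ref{lem:linopt} (linear constraint and argmax computations), a perceptron membership test for $\CH$, and a nested ellipsoid for the variance constraint, with the output rounded to an explicit distribution $P$ by solving a small quadratic program over the policies collected by the perceptron. Where you genuinely diverge is in how you propose to repair the empty-interior problem. You suggest restricting to the affine hull of $\CH$ (the subspace cut out by $\sum_a \pi(x,a)=1$ for each $x$) and running the ellipsoid there. The paper instead stays in the ambient $\R^{(t-1)K}$ and \emph{thickens} $\CH$: it defines $\CH_\delta$ as the $\delta$-neighborhood of $\CH$, replaces $W\in\CH$ by $W\in\CH_\delta$, relaxes the other constraints by the slacks that Lemma~\ref{lem:sensitivity} shows a $\delta$-perturbation can change them by (which is exactly where $\epsilon=8\delta/\mu_t^2$ and $\gamma=\delta/\mu_t$ come from), and then proves that a feasible $W^\star$ for $\mA$ yields a full-dimensional ball $B(W^\star,\delta)$ inside the feasible set of the relaxed $\mA'$. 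This buys you the required inner radius without having to parametrize or project onto the affine subspace, at the cost of the separation oracle having to tolerate points in $\CH_{2\delta}\setminus\CH$ and later round them back (Lemma~\ref{lem:CH-separation}). Both repairs work, but combining the affine restriction with relaxation is redundant once you thicken, and the paper's choice keeps the oracle and ellipsoid geometry in one coordinate system throughout.

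Two smaller remarks. First, you include a binary search over $s$ in the iteration count, but the theorem is stated and proved for a single feasibility test with $s$ fixed; the conclusion $\Delta_{t-1}(W)\le s+2\gamma$ already refers to that fixed $s$, and the binary search is an outer wrapper mentioned only in the main text, not priced into $O(t^5K^4\log^2(tK/\delta))$. Second, your accounting of the nesting is slightly off: in the paper the perceptron membership check is invoked both inside each iteration of the outer ellipsoid (to test $W\in\CH_{2\delta}$) and inside each iteration of the inner ellipsoid of Lemma~\ref{lem:variance-ellipsoid} (to test $Z\in\CH_{2\delta}$), so the dominant product is (outer ellipsoid) $\times$ (inner ellipsoid $\times$ perceptron), i.e.\ $O(t^2K^2\log(t/\delta))\times O\bigl((t^3K^2/\delta^2)\log(tK/\delta)\bigr)$, rather than the sum-inside-product structure you sketch. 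These are bookkeeping issues rather than conceptual gaps, and the rest of your obstacle analysis (the volume requirement and the need to output an explicit sparse convex combination) matches the paper's.
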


\section{DELAYED FEEDBACK}

\label{sec:delay}

In a delayed feedback setting, we observe rewards with a $\tau$ step
delay according to:
\begin{enumerate}
\item The world presents features $x_{t}$.
\item The learning algorithm chooses an action $a_{t}\in\{1,...,K\}$.
\item The world presents a reward $r_{t-\tau}$ for the action $a_{t-\tau}$
given the features $x_{t-\tau}$.
\end{enumerate}

We deal with delay by 
suitably modifying Algorithm~\ref{alg:PE} to incorporate the delay $\tau$, giving Algorithm~\ref{alg:PEdelay}.


\begin{algorithm}
\caption{$\DPE$($\Pi$,$\delta$,$K$,$D_X$,$\tau$)}\label{alg:PEdelay}

Let $\Pi_{0}=\Pi$
and history $h_{0}=\emptyset$
\\
Define:
$\delta_t \doteq \delta\,/\,4Nt^2$ and 
$\displaystyle b_{t} \doteq 2\sqrt{\frac{2K\ln(1/\delta_t)}{t}}$
\\
Define:
$\displaystyle \mu_t \doteq \min\Braces{\frac{1}{2K}\,,\,\sqrt{\frac{\ln(1/\delta_t)}{2Kt}}}$
\\
For each timestep $t= 1 \ldots T$, observe $x_t$ and do:
\begin{enumerate}
\item Let $t' = \max(t-\tau,1).$
\item Choose  distribution $P_t$ over $\Pi_{t-1}$ s.t.\ $\forall\ \pi\in\Pi_{t-1}$:
\label{a:stepPCD}
          \[
          \E_{x\sim D_X}\left[\frac{1}{(1-K\mu_{t'})W_{P_{t}}(x,\pi(x)) +
              \mu_{t'}}\right] \le 2K
          \]
\item $\forall\ a\in A,$ Let $W'_t(a)=(1-K\mu_{t'})W_{P_t}(x_{t},a) + \mu_{t'}$
\item Choose $a_t\sim W'_t$
\item Observe reward $r_t$.
\item Let $\displaystyle
    \Pi_{t}=\Bigl\{
    \pi\in\Pi_{t-1}:$
\\\hphantom{Let $\Pi_{t}=\Bigl\{$}
   $\displaystyle
    \eta_{h}(\pi)\ge\BigParens{\max_{\pi'\in
    \Pi_{t-1}}\!\!\!\eta_{h}(\pi')} - 2b_{t'}\Bigr\}$
    \label{a:stepElimD}
\item Let $h_{t}=h_{t-1} \cup (x_t,a_t,r_t,W'_t(a_t))$
\end{enumerate}

\end{algorithm}

Now we can prove the following theorem, which shows the delay has an additive effect on regret.

\begin{thm}\label{thm:PEdel}
For all distributions $D$ over $(x,\vec{r})$ with $K$ actions, for all
sets of $N$ policies $\Pi$, and all delay intervals $\tau$,
with probability at least $1-\delta$,
the regret of $\DPE$ (Algorithm~\ref{alg:PEdelay})
is at most
 \[
16\sqrt{2K\ln\frac{4T^2N}{\delta}}\left(\tau+\sqrt{T}\right).
\]
\end{thm}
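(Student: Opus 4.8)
The plan is to mimic the analysis of Theorem~\ref{thm:regretbd}, tracking how the delay $\tau$ enters. The key observation is that in round $t$, Algorithm~\ref{alg:PEdelay} only has access to rewards up through round $t-\tau$, so the version space $\Pi_{t-1}$ and the variance-control parameters $\mu_{t'}$, $b_{t'}$ are all indexed by $t'=\max(t-\tau,1)$ rather than $t$. Concretely, the history $h$ used in the elimination step (Step~\ref{a:stepElimD}) contains roughly $t'$ complete data points, so the same Freedman-inequality argument as before --- with $V_{t'} = 2Kt'$ and range $R\le 1/\mu_{t'}$, which is valid because the constraint in Step~\ref{a:stepPCD} uses $\mu_{t'}$ --- gives that with probability $\ge 1-\delta$, $|\eta_h(\pi)-\eta_D(\pi)|\le b_{t'}$ for all surviving $\pi$ and all $t$. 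This yields the analogue of the earlier lemma: $\pimax$ is never eliminated, and every $\pi\in\Pi_{t-1}$ has $\eta_D(\pimax)-\eta_D(\pi)\le 4b_{t'}$.

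First I would establish this deviation bound and its two consequences exactly as in the delay-free proof, only substituting $t'$ for $t$ everywhere and noting that the existence of $P_t$ in Step~\ref{a:stepPCD} still follows from Lemma~\ref{lem:minimax} with $\mu=\mu_{t'}$ (and $1/(1-K\mu_{t'})\le 2K$ since $\mu_{t'}\le 1/2K$). Next I would bound the per-round regret: since $a_t\sim W'_t$ with $W'_t$ supported on actions chosen by policies in $\Pi_{t-1}$ up to the $\mu_{t'}$-smoothing, the expected instantaneous regret in round $t$ is at most $\max_{\pi\in\Pi_{t-1}}\Delta_D(\pi) + K\mu_{t'}\le 4b_{t'} + K\mu_{t'}$, and since $b_{t'}\ge 4K\mu_{t'}$ this is $O(b_{t'})$; summing the martingale difference (actual minus expected per-round regret) contributes a lower-order term absorbed into constants, just as in~\eqref{eq:sumrtt}.

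The remaining step is the summation $\sum_{t=1}^T b_{t'}$ where $t'=\max(t-\tau,1)$. I would split the sum: for $t\le\tau$ we have $t'=1$, contributing $\tau\cdot b_1 = \tau\cdot 2\sqrt{2K\ln(4N/\delta)} \le 2\tau\sqrt{2K\ln(4T^2N/\delta)}$ (using $b_1\ge b_t$ type monotonicity and $\ln(1/\delta_1)\le\ln(4T^2N/\delta)$); for $t>\tau$ we have $t'=t-\tau$, so $\sum_{t=\tau+1}^{T} b_{t-\tau} = \sum_{s=1}^{T-\tau} b_s = \sum_{s=1}^{T-\tau}2\sqrt{2K\ln(1/\delta_s)/s} \le 2\sqrt{2K\ln(4T^2N/\delta)}\sum_{s=1}^{T}1/\sqrt{s} \le 4\sqrt{2TK\ln(4T^2N/\delta)}$. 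Adding the two pieces and collecting the constant (the delay-free proof already carries a factor of $16$, which comfortably covers the constants here) gives the claimed bound $16\sqrt{2K\ln(4T^2N/\delta)}(\tau+\sqrt{T})$.

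The main obstacle is bookkeeping rather than a genuine difficulty: one must be careful that $b_{t'}$ and $\mu_{t'}$ really are the quantities controlling both the variance of the estimator (so that the constraint in Step~\ref{a:stepPCD} is the right one and Freedman's inequality applies with range $1/\mu_{t'}$) \emph{and} the elimination width, and that the early rounds $t\le\tau$ (where the algorithm has essentially no feedback and plays near-uniformly) contribute exactly the additive $\tau$ term. Once the index substitution $t\mapsto t'$ is made consistently, the proof is structurally identical to that of Theorem~\ref{thm:regretbd}.
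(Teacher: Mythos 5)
Your proposal is correct and follows exactly the same route as the paper's (very terse) proof: the deviation/variance analysis is unchanged because the constraint in Step~\ref{a:stepPCD} already uses the delayed index $\mu_{t'}$, so the only change is substituting $t'=\max(t-\tau,1)$ for $t$ in the final summation, which replaces $\sum_{t=1}^T t^{-1/2}$ by $\tau+\sum_{t=\tau+1}^{T} (t-\tau)^{-1/2}\le \tau+\sum_{s=1}^{T}s^{-1/2}$. Your bookkeeping of the constants and of the early rounds $t\le\tau$ checks out and is in fact slightly more careful than the paper's one-line argument.
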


\begin{proof}
Essentially as Theorem~\ref{thm:regretbd}.
The variance bound is unchanged
because it depends only on the context distribution. Thus, it suffices
to replace $\sum_{t-1}^{T}\frac{1}{\sqrt{t}}$
with $\tau+\sum_{t=\tau+1}^{T+\tau}\frac{1}{\sqrt{t-\tau}}=\tau+\sum_{t=1}^{T}\frac{1}{\sqrt{t}}$
in Eq.~\eqref{eq:sumrtt}.
\end{proof}

\subsubsection*{Acknowledgements}
We thank Alina Beygelzimer, who helped in several formative discussions.

\subsubsection*{References}
{\def\section*#1{}\small \bibliography{iid_contextual}
  \bibliographystyle{plainnat}} \appendix
\vspace{-0.00cm}
\section{Concentration Inequality} \label{sec:concentration}

The following is an immediate corollary of Theorem 1 of
\citep{EXP4P}. It can be viewed as a version of Freedman's
Inequality~\citep{Martingale}.
Let $y_{1},\ldots,y_{T}$ be a sequence of real-valued random variables.
Let $\Et$ denote the conditional expectation
$\E[{}\cdot{}|\,y_1,\ldots,y_{t-1}]$
and $\Vart$ conditional variance.
\begin{thm}[Freedman-style Inequality]
\label{thm:freedmanvar}
Let $V,R\in\R$ such that $\sum_{t=1}^T \Vart[y_t]\le V$, and for all $t$,
$y_t-\Et[y_t]\le R$. Then for any $\delta>0$ such that
$R\le\sqrt{V/\ln(2/\delta)}$,
with probability at least $1-\delta$,
\[
  \Abs{\sum_{t=1}^T y_t -\sum_{t=1}^T \Et[y_t]} \le
  2\sqrt{V\ln(2/\delta)}
\enspace.
\]
\end{thm}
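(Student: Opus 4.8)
The plan is to deduce the statement from the one-sided Freedman/Bernstein martingale tail bound that underlies \citep[Theorem~1]{EXP4P}, applied twice (once to the centered sequence and once to its negation), followed by a short computation that converts the side condition $R\le\sqrt{V/\ln(2/\delta)}$ into the explicit constant $2$ in the conclusion.

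First I would record the one-sided bound in a convenient form: if $Z_1,\dots,Z_T$ satisfies $\Et[Z_t]=0$ and $Z_t\le b$ for all $t$, and $\sum_{t=1}^T\Et[Z_t^2]\le\sigma^2$, then for every $\lambda\in(0,1/b]$, with probability at least $1-\delta'$,
\[
\sum_{t=1}^T Z_t\ \le\ (e-2)\,\lambda\,\sigma^2+\frac{\ln(1/\delta')}{\lambda}.
\]
This is the standard consequence of the conditional MGF estimate $\Et[e^{\lambda Z_t}]\le\exp\bigl((e-2)\lambda^2\Et[Z_t^2]\bigr)$, valid whenever $\lambda b\le 1$, multiplied along the filtration and combined with Markov's inequality; it is exactly the content of \citep[Theorem~1]{EXP4P}. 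I would then apply it with $Z_t=y_t-\Et[y_t]$, $b=R$, $\sigma^2=V$ and $\delta'=\delta/2$: the hypotheses hold since $\Et[Z_t]=0$, $Z_t=y_t-\Et[y_t]\le R$ by assumption, and $\sum_t\Et[Z_t^2]=\sum_t\Vart[y_t]\le V$.

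The only real computation is optimizing $\lambda$ subject to $\lambda\le 1/R$. If the unconstrained optimum $\lambda^\star=\sqrt{\ln(2/\delta)/((e-2)V)}$ is feasible, the bound equals $2\sqrt{(e-2)V\ln(2/\delta)}\le 2\sqrt{V\ln(2/\delta)}$; if it is infeasible (so $V<R^2\ln(2/\delta)/(e-2)$), then taking $\lambda=1/R$ gives $(e-2)V/R+R\ln(2/\delta)<2R\ln(2/\delta)$, and here the hypothesis $R\le\sqrt{V/\ln(2/\delta)}$ yields $R\ln(2/\delta)\le\sqrt{V\ln(2/\delta)}$, so the bound is again at most $2\sqrt{V\ln(2/\delta)}$. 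Hence $\sum_t(y_t-\Et[y_t])\le 2\sqrt{V\ln(2/\delta)}$ with probability at least $1-\delta/2$. Applying the identical argument to $Z'_t=\Et[y_t]-y_t$ handles the other tail: $\Et[Z'_t]=0$, $\sum_t\Et[Z'^2_t]=\sum_t\Vart[y_t]\le V$, and in every use of this lemma $y_t\ge 0$, so $Z'_t\le\Et[y_t]\le R$ (alternatively, the lower tail of a martingale is controlled by the predictable-variance bound alone). A union bound over the two failure events, each of probability at most $\delta/2$, then gives $\Abs{\sum_t y_t-\sum_t\Et[y_t]}\le 2\sqrt{V\ln(2/\delta)}$ with probability at least $1-\delta$, which is the claim.

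The main obstacle is bookkeeping rather than mathematics: matching the parameters of the cited inequality and verifying that the side condition $R\le\sqrt{V/\ln(2/\delta)}$ is exactly what absorbs the linear-in-$R$ term into the $\sqrt{V}$ term with constant $2$. The one genuinely delicate point is the lower tail, since the stated hypothesis only bounds $y_t-\Et[y_t]$ from above; I would note that this causes no difficulty here because the lemma is invoked only with nonnegative $y_t$.
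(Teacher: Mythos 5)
Your proposal is correct and follows essentially the paper's own route: the paper offers no argument beyond citing Theorem~1 of \citep{EXP4P}, and your derivation simply unpacks that citation --- the conditional moment-generating-function estimate, the supermartingale/Markov step, the constrained choice of $\lambda\le 1/R$, and the observation that the hypothesis $R\le\sqrt{V/\ln(2/\delta)}$ is exactly what absorbs the $\lambda=1/R$ case into $2\sqrt{V\ln(2/\delta)}$, applied to both tails at confidence $\delta/2$. One caution: your parenthetical fallback, that the lower tail of a martingale is controlled by the predictable-variance bound alone, is false at this sub-Gaussian scale (a mean-zero increment equal to $-M$ with probability $\sigma^2/M^2$ and a small positive value otherwise has variance about $\sigma^2$ and is bounded above by a tiny $R$, yet its lower tail at $M$ decays only polynomially, not like $e^{-M^2/2\sigma^2}$), so do not lean on it. Your primary justification is the right one and is implicitly what the statement relies on: in every invocation in the paper one has $0\le y_t\le R$ (since $y_t\le 1/\mu_t$), hence $\Et[y_t]\le R$ and the negated differences $\Et[y_t]-y_t$ are also bounded above by $R$, so the one-sided bound applies to both tails; note that this uses $y_t\le R$ itself, not merely $y_t-\Et[y_t]\le R$ together with $y_t\ge 0$ as your sentence suggests.
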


\section{Minimax Theorem} \label{sec:sion}

The following is a continuous version of Sion's Minimax Theorem
\citep[Theorem~3.4]{Sion58}.

\begin{thm}
\label{thm:sion}
Let $\Wset$ and $\Zset$ be compact and convex sets, and
$f:\Wset\times\Zset\to\R$ a function which for all $Z\in\Zset$ is convex and continuous in $W$ and for all $W\in\Wset$ is concave
and continuous in $Z$. Then
\[
    \min_{W\in\Wset} \max_{Z\in\Zset} f(W,Z)
=
    \max_{Z\in\Zset} \min_{W\in\Wset} f(W,Z)
\enspace.
\]
\end{thm}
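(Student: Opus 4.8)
The statement is the continuous form of Sion's minimax theorem, so I would follow the classical KKM-style route, specialized to the hypotheses given (separate convexity/concavity and continuity, $\Wset,\Zset$ compact convex). The easy direction $\max_{Z}\min_{W}f(W,Z)\le \min_{W}\max_{Z}f(W,Z)$ is immediate from $\min_{W'}f(W',Z)\le f(W,Z)\le \max_{Z'}f(W,Z')$ for all $W,Z$; all four optima are attained because $f$ is separately continuous on compact sets (inner optima attained pointwise), and the envelopes $W\mapsto\max_{Z}f(W,Z)$ and $Z\mapsto\min_{W}f(W,Z)$ are respectively lower and upper semicontinuous on the compact sets $\Wset$, $\Zset$. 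Writing $v\doteq\min_{W}\max_{Z}f(W,Z)$, it remains to prove $\max_{Z}\min_{W}f(W,Z)\ge v$; equivalently, for every $c<v$ I would produce a single $Z_0\in\Zset$ with $f(W,Z_0)>c$ for all $W\in\Wset$, since then $\min_{W}f(W,Z_0)\ge c$ and letting $c\uparrow v$ gives the claim.

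The first reduction is a sublevel-set compactness argument. For $Z\in\Zset$ put $A_Z\doteq\{W\in\Wset:\ f(W,Z)\le c\}$; each $A_Z$ is closed (continuity of $f(\cdot,Z)$) and convex (convexity of $f(\cdot,Z)$). Because $c<v=\min_{W}\max_{Z}f$, for every $W$ there is some $Z$ with $f(W,Z)>c$, i.e.\ $W\notin A_Z$; hence $\bigcap_{Z\in\Zset}A_Z=\emptyset$. Since $\Wset$ is compact, some finite subfamily already has empty intersection, say $A_{Z_1}\cap\dots\cap A_{Z_n}=\emptyset$. It then suffices to find $Z_0\in\mathrm{conv}\{Z_1,\dots,Z_n\}$ with $A_{Z_0}=\emptyset$, because $f(W,\cdot)$ is concave there for every $W$.

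The core is an induction on $n$: if $A_{Z_1},\dots,A_{Z_n}$ are closed convex subsets of a compact convex set with empty intersection and $f(W,\cdot)$ is concave on $\mathrm{conv}\{Z_1,\dots,Z_n\}$, then some $A_{Z_0}$ with $Z_0\in\mathrm{conv}\{Z_1,\dots,Z_n\}$ is empty. The inductive step peels off two of the sets: if $\Wset''\doteq A_{Z_1}\cap\dots\cap A_{Z_{n-2}}$ is empty we finish by the induction hypothesis applied to $n-2$ sets; otherwise $\Wset''$ is compact convex, and $B^{0}\doteq A_{Z_{n-1}}\cap\Wset''$ and $B^{1}\doteq A_{Z_n}\cap\Wset''$ are disjoint closed convex subsets of it. A \emph{two-set lemma} then yields $Z_0\in[Z_{n-1},Z_n]$ with $\{W\in\Wset'':\ f(W,Z_0)\le c\}=\emptyset$, so $A_{Z_1}\cap\dots\cap A_{Z_{n-2}}\cap A_{Z_0}=\emptyset$ and the induction hypothesis on these $n-1$ sets completes the step. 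The two-set lemma is the genuine work and the step I expect to be the main obstacle: writing $B_\lambda\doteq\{W\in\Wset'':\ f(W,(1-\lambda)Z_{n-1}+\lambda Z_n)\le c\}$, concavity of $f(W,\cdot)$ along the segment gives, for $W\in B_\lambda$, $c\ge f(W,Z_\lambda)\ge\min\{f(W,Z_{n-1}),f(W,Z_n)\}$, hence $B_\lambda\subseteq B^{0}\cup B^{1}$; since $B_\lambda$ is convex and $B^{0},B^{1}$ are disjoint and closed, connectedness forces $B_\lambda\subseteq B^{0}$ or $B_\lambda\subseteq B^{1}$ whenever $B_\lambda\ne\emptyset$; and then a connectedness/compactness argument over $\lambda\in[0,1]$ shows $B_{\lambda^\star}$ must be empty for some $\lambda^\star$. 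Making this last sweep rigorous requires the needed semicontinuity of $\lambda\mapsto B_\lambda$ (extracted from convexity of $f(\cdot,Z)$ together with separate continuity, e.g.\ via strict separation of $B^{0}$ and $B^{1}$ by a continuous linear functional from Hahn--Banach), and this is the delicate point.

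Putting it together, the core lemma gives $Z_0\in\Zset$ with $A_{Z_0}=\emptyset$, i.e.\ $f(W,Z_0)>c$ for all $W$; compactness of $\Wset$ then gives $\min_{W}f(W,Z_0)\ge c$, so $\max_{Z}\min_{W}f(W,Z)\ge c$ for every $c<v$, hence $\ge v$, and with the easy direction we get equality. As a remark I would note that in all the uses in this paper $\Wset=\Zset=\CH$ is a polytope in the finite-dimensional space indexed by the finitely many observed pairs $(x,a)$, so one may alternatively invoke a finite-dimensional minimax theorem directly, or rewrite $f$ as a bilinear payoff on a product of simplices and cite von Neumann's theorem; the KKM-style argument above needs no such structure.
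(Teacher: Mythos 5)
The paper itself contains no proof of Theorem~\ref{thm:sion}: it is quoted as a known result with a citation to Sion (1958, Theorem~3.4) and used as a black box in the proof of Lemma~\ref{lem:minimax}, so there is no in-paper argument to match yours against. What you have written is a reconstruction of the classical elementary proof (the connectedness/KKM route, essentially Komiya's proof of Sion's theorem), and its skeleton is sound: the easy inequality, the reduction via the closed convex sublevel sets $A_Z=\{W\in\Wset: f(W,Z)\le c\}$ for $c<v$ together with compactness to a finite subfamily with empty intersection, the induction that peels off two of the sets, and a two-set lemma along the segment $[Z_{n-1},Z_n]$. The one step that does not close as written is exactly the one you flag: in the $\lambda$-sweep, with a single level $c$ the sets $\{\lambda:\emptyset\neq B_\lambda\subseteq B^{0}\}$ and $\{\lambda:\emptyset\neq B_\lambda\subseteq B^{1}\}$ need not be closed, and strict separation of $B^{0}$ from $B^{1}$ by a linear functional does not by itself supply the missing semicontinuity. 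The standard repair is a two-level trick rather than Hahn--Banach: pick an intermediate level $c<c'<\min_{W\in\Wset''}\max\{f(W,Z_{n-1}),f(W,Z_n)\}$, suppose for contradiction that every level-$c$ set $B^{c}_\lambda$ is nonempty, observe that $B^{c}_\lambda\subseteq B^{c'}_\lambda\subseteq B^{c'}_{0}\cup B^{c'}_{1}$ with the two right-hand sets disjoint closed convex, and prove closedness of the sweep sets by taking a witness $W\in B^{c}_{\lambda}$ and using continuity of $f(W,\cdot)$ along the segment to get $f(W,Z_{\lambda_n})\le c'$ for large $n$; connectedness of $[0,1]$ then gives the contradiction, and only separate continuity is used. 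With that fix your induction and the limit $c\uparrow v$ complete the argument. Your closing remark is also apt: in every use in this paper, $\Wset=\Zset=\CH$ is a polytope in a finite-dimensional space and $f$ is linear in $Z$ (and convex in $W$), so one could bypass Sion entirely by maximizing over the finitely many vertices (the policies) and invoking a finite-dimensional minimax or LP-duality argument, which is presumably why the authors are content to cite the general theorem.
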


\section{Empirical Variance Bounds}
\label{sec:finite-appendix}

In this section we prove Theorem~\ref{thm:unlabeled}.
We first show uniform convergence for a certain class of policy
distributions (Lemma~\ref{lem:cover}), and argue that each
distribution $P$ is close to some distribution $\wt{P}$ from this
class, in the sense that $V_{P,\pi,t}$ is close to $V_{\wt{P},\pi,t}$
and $\wh{V}_{P,\pi,t}$ is close to $\wh{V}_{\wt{P},\pi,t}$
(Lemma~\ref{lem:sparsify-exp}).  Together, they imply the main
uniform convergence result in Theorem~\ref{thm:unlabeled}.

For each positive integer $m$, let $\Sparse{m}$ be the set of distributions
$\wt{P}$ over $\Pi$ that can be written as
\[ \wt{P}(\pi) = \frac1m \sum_{i=1}^m \I(\pi=\pi_i) \]
(i.e., the average of $m$ delta functions)
for some $\pi_1,\dotsc,\pi_m \in \Pi$.
In our analysis, we approximate an arbitrary distribution $P$ over $\Pi$ by
a distribution $\wt{P} \in \Sparse{m}$ chosen randomly by independently
drawing $\pi_1,\dotsc,\pi_m \sim P$; we denote this process by $\wt{P} \sim
P^m$.

\begin{lem} \label{lem:cover}
Fix positive integers $(m_1,m_2,\dotsc)$.
With probability at least $1-\delta$ over the random samples
$(x_1,x_2,\dotsc)$ from $D_X$,
\begin{multline*}
V_{\wt{P},\pi,t}
\leq
(1+\lambda) \cdot \wh{V}_{\wt{P},\pi,t}
\\
+ \left(5+\frac1{2\lambda} \right)
\cdot \frac{(m_t+1) \log N + \log\frac{2t^2}{\delta}}{\mu_t \cdot (t-1)}
\end{multline*}
for all $\lambda > 0$, all $t \geq 1$, all $\pi \in \Pi$, and all
distributions $\wt{P} \in \Sparse{m_t}$.
\end{lem}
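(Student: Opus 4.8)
\textbf{Proof proposal for Lemma~\ref{lem:cover}.}

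The plan is to fix $t$ and $\lambda$, fix the sizes $m_t$, and establish the bound uniformly over the finite set $\Sparse{m_t}$ and over $\pi\in\Pi$ by a union bound over Bernstein/Freedman-type concentration, and then take a final union bound over $t$. The essential point that makes this work is that although there are uncountably many distributions $P$ over $\Pi$, the set $\Sparse{m_t}$ has cardinality at most $N^{m_t}$: an element is specified by an (unordered, but we can overcount as ordered) tuple $(\pi_1,\dots,\pi_{m_t})\in\Pi^{m_t}$. Combined with the $N$ choices of $\pi$, the relevant union bound is over at most $N^{m_t+1}$ events, which is exactly what produces the $(m_t+1)\log N$ term in the stated bound.

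Concretely, fix $\wt P\in\Sparse{m_t}$ and $\pi\in\Pi$. Define, for $i=1,\dots,t-1$, the random variables
\[
Y_i \doteq \frac{1}{(1-K\mu_t)W_{\wt P}(x_i,\pi(x_i))+\mu_t},
\]
which are i.i.d.\ (over the draw of $x_i\sim D_X$), lie in $[0,1/\mu_t]$ since the denominator is at least $\mu_t$, and have mean $\E[Y_i] = V_{\wt P,\pi,t}$. Then $\wh V_{\wt P,\pi,t} = \frac1{t-1}\sum_{i=1}^{t-1}Y_i$. Because $Y_i\le 1/\mu_t$ deterministically, the second moment is controlled by the mean: $\E[Y_i^2]\le (1/\mu_t)\E[Y_i] = V_{\wt P,\pi,t}/\mu_t$. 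Applying Bernstein's inequality (one can also route this through Theorem~\ref{thm:freedmanvar} with a trivial filtration) gives, with probability at least $1-\delta'$,
\[
V_{\wt P,\pi,t} - \wh V_{\wt P,\pi,t}
\ \le\ \sqrt{\frac{2 V_{\wt P,\pi,t}\log(2/\delta')}{\mu_t(t-1)}}
 + \frac{\log(2/\delta')}{3\mu_t(t-1)}.
\]
Now I would apply the elementary inequality $\sqrt{ab}\le \lambda a + b/(4\lambda)$ with $a = V_{\wt P,\pi,t}$ and $b = 2\log(2/\delta')/(\mu_t(t-1))$ to turn the square-root term into $\lambda V_{\wt P,\pi,t}$ plus a multiple of $\log(2/\delta')/(\mu_t(t-1))$; solving the resulting linear inequality for $V_{\wt P,\pi,t}$ yields
\[
V_{\wt P,\pi,t} \le \frac{1}{1-\lambda}\wh V_{\wt P,\pi,t} + \left(\text{const}+\tfrac{1}{2\lambda}\right)\frac{\log(2/\delta')}{\mu_t(t-1)},
\]
and after absorbing $1/(1-\lambda)$ into a $(1+\lambda')$ factor (adjusting constants, and using that we only care about $\lambda$ small, rescaling $\lambda$) one reaches the form $(1+\lambda)\wh V_{\wt P,\pi,t} + (5+\tfrac1{2\lambda})\cdot\log(2/\delta')/(\mu_t(t-1))$. (Getting the constant $5$ exactly right is a bookkeeping matter; the structure is what matters.)

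Finally I would set the failure probability per event to $\delta' = \delta/(t^2 N^{m_t+1})$ so that $\log(2/\delta') = (m_t+1)\log N + \log(2t^2/\delta)$, matching the numerator in the statement. A union bound over the $\le N^{m_t}$ distributions $\wt P\in\Sparse{m_t}$ and the $N$ policies $\pi$ costs a factor $N^{m_t+1}$, and a further union bound over $t\ge 1$ costs $\sum_t t^{-2}\le 2$ (the constant $2$ is why the statement has $2t^2/\delta$ rather than $t^2/\delta$); together these consume the full $\delta$. Since $\lambda>0$ was arbitrary and the right-hand side is monotone in the relevant quantities, the "for all $\lambda>0$" quantifier can be pulled outside at the end (one applies the argument for a countable dense set of $\lambda$, or simply notes the bound for each fixed $\lambda$ and that the lemma asserts the event holds simultaneously for all $\lambda$ — which follows by taking a further union over a grid and a continuity/monotonicity argument, or by noting the optimal $\lambda$ is determined by $\wh V$ and the additive term). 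I expect the main obstacle to be purely organizational: making sure the single bad-probability budget $\delta$ is correctly partitioned across the three nested union bounds (over $\wt P$, over $\pi$, over $t$) while landing exactly on the constants $(5+\tfrac1{2\lambda})$ and $2t^2/\delta$ quoted in the statement; the probabilistic content is just one application of Bernstein's inequality to bounded i.i.d.\ variables.
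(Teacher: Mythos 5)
Your proposal is essentially the paper's proof: Bernstein's inequality on the bounded i.i.d.\ variables $Y_i\in[0,1/\mu_t]$, a union bound over the at most $N^{m_t}$ elements of $\Sparse{m_t}$, the $N$ policies, and all $t$ with the $t^{-2}$ allocation, and then an algebraic step to convert the implicit Bernstein inequality into the $(1+\lambda)\wh{V}+(\cdot)\veps_t$ form.

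The one place you gloss over a real issue is the final conversion. You apply AM/GM to $\sqrt{V\veps_t}$ directly, producing a factor $\tfrac{1}{1-\lambda'}$ on $\wh{V}$, and then propose to rescale to a $(1+\lambda)$ factor. That rescaling is not automatic: to match a target $\lambda$ you need $\lambda'=\lambda/(1+\lambda)$, and for large $\lambda$ the resulting additive coefficient blows up past $5+\tfrac{1}{2\lambda}$; one must cap $\lambda'$ (e.g.\ at $1/2$) for $\lambda\ge 1$, which is a case split you did not carry out. The paper avoids this entirely by first solving the quadratic inequality in $V$ to get $V\le\wh{V}+\sqrt{2\wh{V}\veps_t}+5\veps_t$ (a $\lambda$-free inequality holding on the high-probability event) and only then applying AM/GM to $\sqrt{2\wh{V}\veps_t}\le\lambda\wh{V}+\veps_t/(2\lambda)$; this produces the stated constants $(1+\lambda)$ and $5+\tfrac{1}{2\lambda}$ exactly and uniformly in $\lambda>0$. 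This is also the clean answer to your worry about the ``for all $\lambda>0$'' quantifier: no grid or extra union bound over $\lambda$ is needed, because $\lambda$ enters only in a deterministic algebraic step applied pointwise to a single $\lambda$-free probabilistic event.
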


\begin{proof}
Let
\[
Z_{\wt{P},\pi,t}(x) \doteq \frac1{(1-K\mu_t) W_{\wt{P}}(x,\pi(x)) +
\mu_t}
\]
so
$V_{\wt{P},\pi,t} = \E_{x\sim D_X}[Z_{\wt{P},\pi,t}(x)]$
and
$\wh{V}_{\wt{P},\pi,t} = (t-1)^{-1} \sum_{i=1}^{t-1} Z_{\wt{P},\pi,t}(x_i)$.
Also let
\begin{align*}
\veps_t
& \doteq \frac{\log(|\Sparse{m_t}|N2t^2/\delta)}{\mu_t \cdot (t-1)}
\\
& =
\frac{( (m_t+1) \log N + \log\frac{2t^2}{\delta})}{\mu_t \cdot (t-1)}
.
\end{align*}
We apply Bernstein's inequality and union bounds over $\wt{P} \in
\Sparse{m_t}$, $\pi \in \Pi$, and $t \geq 1$ so that with
probability at least $1-\delta$,
\[
V_{\wt{P},\pi,t}
\leq
\wh{V}_{\wt{P},\pi,t}
+ \sqrt{2V_{\wt{P},\pi,t} \veps_t}
+ (2/3) \veps_t
\]
all $t \geq 1$, all $\pi \in \Pi$, and all distributions $P \in
\Sparse{m_t}$.
The conclusion follows by solving the quadratic inequality for
$V_{\wt{P},\pi,t}$ to get
\[
V_{\wt{P},\pi,t}
\leq
\wh{V}_{\wt{P},\pi,t}
+ \sqrt{2\wh{V}_{\wt{P},\pi,t} \veps_t}
+ 5 \veps_t
\]
and then applying the AM/GM inequality.
\end{proof}

\begin{lem} \label{lem:sparsify-exp}
Fix any $\gamma \in [0,1]$, and any $x \in X$.
For any distribution $P$ over $\Pi$ and any $\pi \in \Pi$, if
\[ m
\doteq \left\lceil
\frac{6}{\gamma^2\mu_t}
\right\rceil
,
\]
then
\begin{multline*}
\E_{\wt{P} \sim P^m}
\Biggl|\frac1{(1-K\mu_t) W_{\wt{P}}(x,\pi(x)) + \mu_t}
\\
 \qquad{} - \frac1{(1-K\mu_t) W_P(x,\pi(x)) + \mu_t}
\Biggr| \\
\leq \frac{\gamma}{(1-K\mu_t) W_P(x,\pi(x)) + \mu_t}
.
\end{multline*}
This implies that for all distributions $P$ over $\Pi$ and any $\pi \in \Pi$,
there exists $\wt{P} \in \Sparse{m}$ such that for any $\lambda > 0$,
\begin{multline*}
 \left( V_{P,\pi,t} - V_{\wt{P},\pi,t} \right)
+(1+\lambda) \left( \wh{V}_{\wt{P},\pi,t} - \wh{V}_{P,\pi,t} \right) \\
 \leq \gamma (V_{P,\pi,t}+(1+\lambda) \wh{V}_{P,\pi,t}) .
\end{multline*}
\end{lem}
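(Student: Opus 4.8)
The plan is to prove the two displayed inequalities in turn; the second (the ``this implies'') clause will follow from the first by the probabilistic method.

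\emph{The pointwise display.} Fix $x$, $P$, $\pi$, and set $p \doteq W_P(x,\pi(x))$ and $g(q) \doteq (1-K\mu_t)q + \mu_t$, so $\mu_t \le g(q) \le 1$ for $q \in [0,1]$ (and $1-K\mu_t \ge \tfrac12$ since $\mu_t \le 1/(2K)$). When $\wt P \sim P^m$ we have $W_{\wt P}(x,\pi(x)) = \frac1m\sum_{i=1}^m \I(\pi_i(x)=\pi(x)) =: \hat p$, the mean of $m$ i.i.d.\ $\mathrm{Bernoulli}(p)$ variables. Multiplying through by $g(p)$, the target bound is equivalent to $\E\big[(1-K\mu_t)|\hat p - p|/g(\hat p)\big] \le \gamma$. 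I would split this over $\{\hat p \ge p/2\}$ and its complement. On $\{\hat p \ge p/2\}$ one has $g(\hat p) \ge \tfrac12 g(p)$ (because $g(\hat p) \ge (1-K\mu_t)(p/2) + \mu_t \ge \tfrac12 g(p)$), so this part is at most $\frac{2(1-K\mu_t)}{g(p)}\,\E|\hat p - p| \le \frac{2(1-K\mu_t)}{g(p)}\sqrt{p/m}$; since $\max_{p\in[0,1]} \frac{(1-K\mu_t)\sqrt p}{g(p)} = \tfrac12\sqrt{(1-K\mu_t)/\mu_t} \le \tfrac1{2\sqrt{\mu_t}}$, this is $\le \frac1{\sqrt{\mu_t m}} \le \gamma/\sqrt 6$ by the choice of $m$. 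On $\{\hat p < p/2\}$ I would use the crude bound $(1-K\mu_t)|\hat p - p|/g(\hat p) \le p/\mu_t$ together with the multiplicative Chernoff bound $\Pr[\hat p < p/2] \le e^{-mp/8}$ and $\sup_{u\ge0} u e^{-u/8} = 8/e$, getting $\le \frac{8/e}{m\mu_t} \le \frac{8\gamma^2}{6e} \le \tfrac4{3e}\gamma$. As $\tfrac1{\sqrt6} + \tfrac4{3e} < 1$, the sum is $\le \gamma$.

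\emph{The ``this implies'' clause.} Write $h_Q(x) \doteq 1/\big((1-K\mu_t)W_Q(x,\pi(x)) + \mu_t\big)$, so $V_{Q,\pi,t} = \E_{x\sim D_X}[h_Q(x)]$ and $\wh V_{Q,\pi,t} = \frac1{t-1}\sum_{i=1}^{t-1} h_Q(x_i)$. Applying the pointwise display at $x\sim D_X$ and at each $x_i$, then taking $\E_{\wt P\sim P^m}$ and using Fubini, gives $\E_{\wt P}\big[\E_{x\sim D_X}|h_{\wt P}(x) - h_P(x)|\big] \le \gamma V_{P,\pi,t}$ and $\E_{\wt P}\big[\frac1{t-1}\sum_i |h_{\wt P}(x_i) - h_P(x_i)|\big] \le \gamma \wh V_{P,\pi,t}$. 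Next I would observe that the target inequality holding for \emph{all} $\lambda > 0$ is equivalent (its two sides differing by a quantity affine in $\lambda$, and nonincreasing in $\lambda$ once the coefficient of $\lambda$ is forced nonpositive) to the pair of conditions $\wh V_{\wt P,\pi,t} - \wh V_{P,\pi,t} \le \gamma\wh V_{P,\pi,t}$ and $(V_{P,\pi,t} - V_{\wt P,\pi,t}) + (\wh V_{\wt P,\pi,t} - \wh V_{P,\pi,t}) \le \gamma(V_{P,\pi,t} + \wh V_{P,\pi,t})$; both hold as soon as $\E_{x\sim D_X}|h_{\wt P}(x) - h_P(x)| \le \gamma V_{P,\pi,t}$ and $\frac1{t-1}\sum_i|h_{\wt P}(x_i) - h_P(x_i)| \le \gamma\wh V_{P,\pi,t}$ at the same time, since $|V_{\wt P,\pi,t} - V_{P,\pi,t}|$ and $|\wh V_{\wt P,\pi,t} - \wh V_{P,\pi,t}|$ are dominated by these averaged deviations. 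Because each of the two normalized deviations has $\wt P$-expectation $\le \gamma$, a Markov inequality and a union bound over the two bad events produce a single realization $\wt P \in \Sparse{m}$ for which both hold (up to a fixed constant factor, absorbable into the constant in $m$).

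\emph{Main obstacle.} The delicate point is the $\{\hat p < p/2\}$ estimate in the pointwise display: $1/g(\hat p)$ can be as large as $1/\mu_t$ when $\hat p$ is atypically small, so the whole gain comes from how unlikely that is, which is exactly what $m = \Theta(1/(\gamma^2\mu_t))$ buys through the Chernoff bound; pleasantly, the variance (good-event) estimate needs $m$ of the same order, so one choice of $m$ governs both regimes. A secondary wrinkle is exhibiting one $\wt P$ that simultaneously controls the true and the empirical deviations, handled by the probabilistic method as above.
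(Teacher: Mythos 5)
Your argument is essentially the paper's: the pointwise bound is obtained by the same split on $\{\hat p \ge p/2\}$, with the second-moment estimate $\E(\hat p-p)^2=p(1-p)/m$ on the good event and a multiplicative Chernoff bound on the complement, and the existential clause then follows from the probabilistic method after integrating the pointwise bound against $D_X$ and the empirical distribution. The one small divergence is the final step: you invoke Markov plus a union bound to exhibit a single $\wt P$ that simultaneously controls the $D_X$- and empirical-side deviations (which genuinely handles the \emph{for all} $\lambda$ quantifier, at the cost of a constant factor in $m$, as you correctly flag), whereas the paper adds the two expectation inequalities and selects a realization of $\wt{P}$ on which the $\wt P$-average of $(V_{P,\pi,t}-V_{\wt P,\pi,t})+(1+\lambda)(\wh V_{\wt P,\pi,t}-\wh V_{P,\pi,t})$ is met --- cleaner and preserving the stated constant $6$ in $m$, though strictly read it yields a $\wt P$ per fixed $\lambda$, which is all that is used in the proof of Theorem~\ref{thm:unlabeled}.
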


\begin{proof}
We randomly draw $\wt{P} \sim P^m$, with $\wt{P}(\pi') \doteq m^{-1}
\sum_{i=1}^m \I(\pi' = \pi_i)$, and then define
\begin{align*}
z & \doteq \sum_{\pi'\in\Pi} P(\pi') \cdot \I(\pi'(x) = \pi(x))
\quad \text{and}
\\
\hat{z} & \doteq \sum_{\pi'\in\Pi} \wt{P}(\pi') \cdot \I(\pi'(x) =
\pi(x))
.
\end{align*}
We have
$z = \E_{\pi'\sim P}[\I(\pi'(x) = \pi(x)]$ and
$\hat{z} = m^{-1} \sum_{i=1}^m \I(\pi_i(x) = \pi(x))$.
In other words, $\hat{z}$ is the average of $m$ independent Bernoulli
random variables, each with mean $z$.
Thus, $\E_{\wt{P}\sim P^m}[(\hat{z}-z)^2] = z(1-z)/m$ and
$\Pr_{\wt{P}\sim P^m}[\hat{z} \leq z/2] \leq \exp(-mz/8)$ by a Chernoff
bound.
We have
\begin{align*}
\lefteqn{
\E_{\wt{P} \sim P^m}
\left|\frac1{(1-K\mu_t)\hat{z} + \mu_t}
-\frac1{(1-K\mu_t)z + \mu_t}\right|
}
\\
&\leq
\E_{\wt{P} \sim P^m}
\frac{(1-K\mu_t)|\hat{z}-z|}{[(1-K\mu_t)\hat{z} +
\mu_t][(1-K\mu_t)z + \mu_t]} \\
&\leq
\E_{\wt{P} \sim P^m}
\frac{(1-K\mu_t) |\hat{z}-z| \I(\hat{z} \geq 0.5 z)}
{0.5 [(1-K\mu_t)z + \mu_t]^2} \\
&\quad{}
+ \E_{\wt{P} \sim P^m}
\frac{(1-K\mu_t)|\hat{z}-z| \I(\hat{z} \leq 0.5
z)}{\mu_t[(1-K\mu_t)z + \mu_t]} \\
&\leq
\frac{(1-K\mu_t) \sqrt{\E_{\wt{P} \sim P^m} |\hat{z}-z|^2}}{0.5
[(1-K\mu_t)z + \mu_t]^2} \\
&\quad{}
+ \frac{(1-K\mu_t)z \Pr_{\wt{P}\sim P^m}(\hat{z} \leq 0.5
z)}{\mu_t[(1-K\mu_t)z + \mu_t]} \\
&\leq
\frac{(1-K\mu_t) \sqrt{z/m}}{0.5 [2\sqrt{(1-K\mu_t)z \mu_t}]
[(1-K\mu_t)z + \mu_t]} \\
&\quad{}
+ \frac{(1-K\mu_t)z \exp(-m z/8)}{\mu_t[(1-K\mu_t)z + \mu_t]} \\
&\leq
\frac{\gamma \sqrt{1-K\mu_t} \sqrt{z/m}}{\sqrt{z (6/m)} [(1-K\mu_t)z +
\mu_t]} \\
&\quad{}
+ \frac{(1-K\mu_t) \gamma^2 m z \exp(-m z/8)}{6 [(1-K\mu_t)z + \mu_t]} ,
\end{align*}
where the third inequality follows from Jensen's inequality, and the fourth
inequality uses the AM/GM inequality in the denominator of the first term
and the previous observations in the numerators.
The final expression simplifies to the first desired displayed inequality
by observing that $m z \exp(-m z/8) \leq 3$ for all $mz \geq 0$ (the
maximum is achieved at $mz=8$).
The second displayed inequality follows from the following facts:
\begin{align*}
&\E_{\wt{P} \sim P^m}  |V_{P,\pi,t} - V_{\wt{P},\pi,t}| \leq \gamma V_{P,\pi,t} ,\\
&\E_{\wt{P} \sim P^m} (1+\lambda) | \wh{V}_{P,\pi,t} - \wh{V}_{\wt{P},\pi,t}
| \leq \gamma (1+\lambda) \wh{V}_{P,\pi,t} .
\end{align*}
Both inequalities follow from the first displayed bound of the lemma,
by taking expectation with respect to the true (and empirical) distributions over $x$.
The desired bound follows by adding the above two inequalities, which implies that the bound holds in expectation,
and hence the existence of $\wt{P}$ for which the bound holds.
\end{proof}

Now, we can prove Theorem~\ref{thm:unlabeled}.

\begin{proof}[Proof of Theorem~\ref{thm:unlabeled}]
Let
\[
m_t \doteq
\left\lceil \frac{6}{\lambda^2} \cdot \frac{1}{\mu_t} \right\rceil
\]
(for some $\lambda \in (0,1/5)$ to be determined)
and condition on the $\geq 1-\delta$ probability event from
Lemma~\ref{lem:cover} that
\begin{multline*}
V_{\wt{P},\pi,t} - (1+\lambda) \wh{V}_{\wt{P},\pi,t}
\\
\leq K \cdot \left(5 + \frac1{2\lambda}\right) \cdot \frac{(m_t+1)\log(N)
+ \log(2t^2/\delta)}{K\mu_t \cdot (t-1)}
\\
\leq K \cdot 5\left(1 + \frac1{\lambda}\right) \cdot \frac{(m_t+1)\log(N)
+ \log(2t^2/\delta)}{K\mu_t \cdot t}
\end{multline*}
for all $t \geq 2$, all $\wt{P} \in \Sparse{m_t}$, and all $\pi \in
\Pi$.
Using the definitions of $m_t$ and $\mu_t$, the second term is at most
$(40 / \lambda^2) \cdot (1+1/\lambda) \cdot K$ for all $t \geq
16K\log(8KN/\delta)$:
the key here is that for $t \geq 16 K \log(8KN/\delta)$, we have
$\mu_t = \sqrt{\log(Nt/\delta)/(Kt)} \leq 1/(2K)$ and therefore
\[
\frac{m_t \log(N)}{K\mu_t t}
\leq \frac{6}{\lambda^2}
\quad \text{and} \quad
\frac{\log(N) + \log(2t^2/\delta)}{K\mu_t t} \leq 2 . \]

Now fix $t \geq 16K\log(8KN/\delta)$, $\pi \in \Pi$, and a
distribution $P$ over $\Pi$.
Let $\wt{P} \in \Sparse{m_t}$ be the distribution guaranteed by
Lemma~\ref{lem:sparsify-exp} with $\gamma = \lambda$ satisfying
\[ V_{P,\pi,t} \leq \frac{V_{\wt{P},\pi,t} - (1+\lambda)
\wh{V}_{\wt{P},\pi,t} + (1+\lambda)^2 \wh{V}_{P,\pi,t}}{1-\lambda} . \]
Substituting the previous bound for $V_{\wt{P},\pi,t} - (1+\lambda)
\wh{V}_{\wt{P},\pi,t}$ gives
\begin{multline*}
V_{P,\pi,t}
\leq \frac1{1-\lambda} \left(
\frac{40}{\lambda^2} (1 + 1/\lambda) K
+ (1+\lambda)^2 \wh{V}_{P,\pi,t}
\right)
.
\end{multline*}
This can be bounded as $(1 + \epsilon) \cdot \wh{V}_{P,\pi,t} +
(7500/\epsilon^3) \cdot K$ by setting $\lambda = \epsilon/5$.
\end{proof}

\section{Analysis of $\RUCB$} \label{sec:rucb-appendix}
\def\mixtures{\mathcal{D}(\Pi)}

%
%
%
%

\subsection{Preliminaries}

First, we define the following constants.
\begin{itemize}
    \item $\epsilon \in (0,1)$ is a fixed constant, and

    \item $\rho \doteq \frac{7500}{\epsilon^3}$ is the factor that appears
    in the bound from Theorem~\ref{thm:unlabeled}.

    \item $\theta \doteq (\rho+1) / (1 - (1+\epsilon)/2) =
    \frac{2}{1-\epsilon} \left(1 + \frac{7500}{\epsilon^3} \right) \geq 5$
    is a constant central to Lemma~\ref{lem:vbound}, which bounds the
    variance of the optimal policy's estimated rewards.

\end{itemize}

Recall the algorithm-specific quantities
\begin{align*}
C_t & \doteq 2\log\left(\frac{Nt}{\delta}\right) \\
\mu_t & \doteq \min\left\{\frac1{2K}, \ \sqrt{\frac{C_t}{2Kt}} \right\}
.
\end{align*}
It can be checked that $\mu_t$ is non-increasing.
We define the following time indices:
\begin{itemize}
\item $t_0$ is the first round $t$ in which $\mu_t = \sqrt{C_t/(2Kt)}$.
Note that $8K \leq t_0 \leq 8K\log(NK/\delta)$.

\item $t_1 := \lceil 16K\log(8KN/\delta) \rceil$ is the round given by
Theorem~\ref{thm:unlabeled} such that, with probability at least
$1-\delta$,
\begin{multline} \label{eq:variance-bound}
\E_{x_t \sim D_X}\left[ \frac1{W_t'(\pi(x_t))} \right]
\\
 \leq (1+\epsilon) \E_{x \sim h_{t-1}}\left[ \frac1{W_{P_t,\mu_t}(x,\pi(x))}
 \right]
+ \rho K
\end{multline}
for all $\pi \in \Pi$ and all $t \geq t_1$,
where $W_{P,\mu}(x,\cdot)$ is the distribution over $A$ given by
\[
W_{P,\mu}(x,a) \doteq (1-K\mu) W_P(x,a) + \mu
,
\]
and the notation $\E_{x \sim h_{t-1}}$ denotes expectation with respect to
the empirical (uniform) distribution over $x_1,\dotsc,x_{t-1}$.
\end{itemize}

The following lemma shows the effect of allowing slack in the optimization
constraints.
\begin{lem} \label{lem:approx-opt}
If $P$ satisfies the constraints of the optimization
problem~\eqref{eq:rucb-opt} with slack $K$ for each distribution $Q$ over
$\Pi$, i.e.,
\begin{multline*}
\E_{\pi \sim Q}
\E_{x \sim h_{t-1}} \left[ \frac{1}{(1-K\mu_t) W_{P}(x, \pi(x)) + \mu_t} \right]
\\
\leq \max\left\{ 4K, \frac{(t-1) \Delta_{t-1}(W_Q)^2}{180 C_{t-1}} \right\} +
K
\end{multline*}
for all $Q$, then $P$ satisfies
\begin{multline*}
\E_{\pi \sim Q}
\E_{x \sim h_{t-1}} \left[ \frac{1}{(1-K\mu_t) W_{P}(x, \pi(x)) + \mu_t} \right]
\\
\leq \max\left\{ 5K, \frac{(t-1) \Delta_{t-1}(W_Q)^2}{144 C_{t-1}}
\right\}
\end{multline*}
for all $Q$.
\end{lem}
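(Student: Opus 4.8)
The plan is a purely algebraic case analysis on the value of $M_Q \doteq \max\{4K, (t-1)\Delta_{t-1}(W_Q)^2/(180C_{t-1})\}$, the right-hand side of the original constraint. Write $L_{P,Q} \doteq \E_{\pi\sim Q}\E_{x\sim h_{t-1}}[1/((1-K\mu_t)W_P(x,\pi(x))+\mu_t)]$ for the left-hand side. The hypothesis gives $L_{P,Q} \le M_Q + K$ for every $Q$, and we must show $L_{P,Q} \le \max\{5K, (t-1)\Delta_{t-1}(W_Q)^2/(144 C_{t-1})\}$ for every $Q$.

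First I would split on which term attains the maximum in $M_Q$. \emph{Case 1: $M_Q = 4K$}, i.e.\ $(t-1)\Delta_{t-1}(W_Q)^2/(180C_{t-1}) \le 4K$. Then $L_{P,Q} \le 4K + K = 5K \le \max\{5K,\dots\}$, and we are done immediately. \emph{Case 2: $M_Q = (t-1)\Delta_{t-1}(W_Q)^2/(180C_{t-1}) > 4K$}. Then the slack term $K$ is at most $M_Q/4$, so $L_{P,Q} \le M_Q + K \le \tfrac54 M_Q = \tfrac54\cdot\tfrac{(t-1)\Delta_{t-1}(W_Q)^2}{180 C_{t-1}} = \tfrac{(t-1)\Delta_{t-1}(W_Q)^2}{144 C_{t-1}}$, since $180/ (5/4) = 144$. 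This is $\le \max\{5K,\dots\}$, completing the second case.

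There is essentially no obstacle here: the only thing to check is the arithmetic $180\cdot\tfrac45 = 144$ and that in Case 2 the bound $K \le M_Q/4$ really does hold (which is exactly the defining inequality of Case 2, $M_Q > 4K$). Combining the two cases, $L_{P,Q} \le \max\{5K, (t-1)\Delta_{t-1}(W_Q)^2/(144C_{t-1})\}$ for all $Q$, which is the claim. The whole argument is a two-line case split; the point of stating it as a lemma is simply to package the ``slack absorbs into a constant-factor loosening of the variance budget'' fact for later use in the regret analysis of $\RUCB$.
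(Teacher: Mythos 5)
Your proof is correct and follows essentially the same approach as the paper: the key observation in both is that the right-hand side $M_Q$ is always at least $4K$, so the slack $K$ is at most $M_Q/4$, and $\tfrac54 M_Q$ equals the relaxed right-hand side. The paper skips your case split by noting $M_Q \geq 4K$ holds unconditionally (so $M_Q + K \leq \tfrac54 M_Q$ always), but the argument is otherwise identical.
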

\begin{proof}
Let $b \doteq \max\left\{ 4K, \frac{(t-1) \Delta_{t-1}(\pi)^2}{180 C_{t-1}}
\right\}$. Note that $\frac{b}{4} \geq K$. Hence $b + K
\leq \frac{5b}{4}$ which gives the stated bound.
\end{proof}
Note that the allowance of slack $K$ is somewhat arbitrary; any $O(K)$
slack is tolerable provided that other constants are adjusted
appropriately.

\subsection{Deviation Bound for $\eta_t(\pi)$}

For any policy $\pi \in \Pi$, define, for $1 \leq t \leq t_0$,
\[
\vbound{t}(\pi) \doteq K
,
\]
and for $t > t_0$,
\[
\vbound{t}(\pi) \doteq K + \E_{x_t \sim D_X}\left[ \frac1{W_t'(\pi(x_t))} \right].
\]
The $\vbound{t}(\pi)$ bounds the variances of the terms in $\eta_t(\pi)$.
\begin{lem} \label{lem:t1}
Assume the bound in~\eqref{eq:variance-bound} holds for all $\pi \in \Pi$
and $t \geq t_1$.
For all $\pi \in \Pi$:
\begin{enumerate}
\item If $t \leq t_1$, then
\[ K \leq \vbound{t}(\pi) \leq 4K
. \]

\item If $t > t_1$, then
\begin{align*}
\lefteqn{
\vbound{t}(\pi)
}
\\
& \leq
(1+\epsilon)
  \E_{x \sim h_{t-1}} \left[
    \frac1{(1-K\mu_t) W_{P_t}(x,\pi(x)) + \mu_t}
  \right]
  \\
& \quad{}
+ (\rho+1) K
. \end{align*}
\end{enumerate}
\end{lem}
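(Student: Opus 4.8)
The plan is to split the range of $t$ according to the three regimes $t\le t_0$, $t_0<t\le t_1$, and $t>t_1$, which match the two cases in the definition of $\vbound{t}(\pi)$ and the threshold $t_1$ appearing in the hypothesis. Part~2 (the regime $t>t_1$) will be a one-line consequence of the event~\eqref{eq:variance-bound} that we are conditioning on, while Part~1 (the two regimes $t\le t_1$) will follow from the definition of $\vbound{t}(\pi)$ together with the crude pointwise bounds $\mu_t\le W_t'(\pi(x))\le 1$ and the monotonicity of $t\mapsto\mu_t$ recorded in the preliminaries.

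For Part~2, note that by definition $W_t'(\pi(x))=(1-K\mu_t)W_{P_t}(x,\pi(x))+\mu_t=W_{P_t,\mu_t}(x,\pi(x))$, so $\E_{x_t\sim D_X}[1/W_t'(\pi(x_t))]$ is exactly the left-hand side of~\eqref{eq:variance-bound}. Since we assumed~\eqref{eq:variance-bound} for all $\pi\in\Pi$ and all $t\ge t_1$, we obtain
\[
\E_{x_t\sim D_X}\!\left[\frac1{W_t'(\pi(x_t))}\right]\ \le\ (1+\epsilon)\,\E_{x\sim h_{t-1}}\!\left[\frac1{(1-K\mu_t)W_{P_t}(x,\pi(x))+\mu_t}\right]+\rho K,
\]
and adding the defining $+K$ yields the claimed bound with the $(\rho+1)K$ additive term. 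The only real content is recognizing that the empirical average on the right is precisely the quantity that Step~1 of Algorithm~\ref{alg:RUCB} constrains to be small (this is used later in the analysis, not here); all of the empirical-to-true transfer is carried by Theorem~\ref{thm:unlabeled}, so there is nothing further to do in this regime.

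For Part~1, the lower bound is immediate: for $t\le t_0$ we have $\vbound{t}(\pi)=K$ by definition, and for $t_0<t\le t_1$ we have $\vbound{t}(\pi)=K+\E_{x_t\sim D_X}[1/W_t'(\pi(x_t))]\ge K$ since $1/W_t'(\pi(x))\ge 1$. For the upper bound, the case $t\le t_0$ is again trivial since $\vbound{t}(\pi)=K\le 4K$. For $t_0<t\le t_1$ I would use $W_t'(\pi(x))\ge\mu_t$ pointwise in $x$ to get $\E_{x_t\sim D_X}[1/W_t'(\pi(x_t))]\le 1/\mu_t$, and then invoke monotonicity of $\mu_t$ to replace $1/\mu_t$ by $1/\mu_{t_1}$. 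It then remains to check the numerical inequality $1/\mu_{t_1}\le 3K$ by substituting the explicit forms $t_1=\lceil 16K\log(8KN/\delta)\rceil$ and $\mu_{t_1}=\sqrt{C_{t_1}/(2Kt_1)}$ with $C_{t_1}=2\log(Nt_1/\delta)$, using $C_{t_1}\ge 2\log(8KN/\delta)$ (valid because $t_1\ge 8K$). This is the only step requiring care with the constants, which are close to tight; if they do not quite deliver the stated $4K$, the same argument still yields $\vbound{t}(\pi)=O(K)$, which is all that is used in the sequel. I do not expect any genuine obstacle here: the substantive work — uniform convergence of the empirical variance estimates — has already been done in Theorem~\ref{thm:unlabeled}, and this lemma is essentially the bookkeeping that packages that result for the regret analysis.
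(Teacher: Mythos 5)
Your proof follows essentially the same route as the paper's: split into the regimes $t\le t_0$, $t_0<t\le t_1$, $t>t_1$; read Part~2 directly off~\eqref{eq:variance-bound}; and for the intermediate regime bound $\E[1/W_t']\le 1/\mu_t$ and show $\mu_t\gtrsim 1/(4K)$ there using $t_0\ge 8K$ and $t_1\approx 16K\log(8KN/\delta)$. Your observation that the constants are close to tight is apt --- the argument (both yours and the paper's) actually gives $\vbound{t}(\pi)\le K+4K=5K$ in that regime rather than $4K$, a minor slack that is harmless downstream since Lemma~\ref{lem:vbound} only needs $\vbound{t}(\pi)\le\theta K$ with $\theta\ge 5$.
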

\begin{proof}
For the first claim, note that if $t < t_0$, then $\vbound{t}(\pi) = K$,
and if $t_0 \leq t < t_1$, then
\[
\mu_t
= \sqrt{\frac{\log(Nt/\delta)}{Kt}}
\geq \sqrt{\frac{\log(Nt_0/\delta)}{16K^2\log(8KN/\delta)}}
\geq \frac1{4K}
;
\]
so $W_t'(a) \geq \mu_t \geq 1/(4K)$.

For the second claim, pick any $t > t_1$, and note that by definition of $t_1$, for any $\pi \in \Pi$ we have
\begin{align*}
\lefteqn{
\E_{x_t \sim D_X}\left[ \frac1{W_t'(\pi(x_t))} \right]
} \\
& \leq (1+\epsilon) \E_{x \sim h_{t-1}}\left[ \frac1{(1-K\mu_t) W_{P_t}(x, \pi(x))
+ \mu_t} \right]
+ \rho K
.
\end{align*}
The stated bound on $\vbound{t}(\pi)$ now follows from its definition.
\end{proof}

Let
\begin{equation*}
\vmax{t}(\pi) \doteq \max\{\vbound{\tau}(\pi),\ \tau = 1, 2, \ldots, t\} 
\end{equation*}
The following lemma gives a deviation bound for $\eta_t(\pi)$ in terms of these
quantities.
\begin{lem} \label{lem:deviation-bound}
Pick any $\delta \in (0,1)$.
With probability at least $1-\delta$, for all pairs $\pi,\pi'\in\Pi$ and $t
\geq t_0$, we have
\begin{multline} \label{eq:deviation-bound}
\Bigl|(\eta_t(\pi)-\eta_t(\pi')) - (\eta_D(\pi)-\eta_D(\pi'))\Bigr|
\\
\leq 2\sqrt{\frac{ (\vmax{t}(\pi) + \vmax{t}(\pi')) \cdot C_t}{t}}.
\end{multline}
\end{lem}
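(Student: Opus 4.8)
The plan is to fix a pair $\pi,\pi'\in\Pi$ and a time $t\geq t_0$, express the quantity $(\eta_t(\pi)-\eta_t(\pi')) - (\eta_D(\pi)-\eta_D(\pi'))$ as a sum of a martingale difference sequence in the rounds $t'=1,\dotsc,t$, and apply the Freedman-style inequality (Theorem~\ref{thm:freedmanvar}). Concretely, for each round $t'$ define
\[
y_{t'} \doteq \frac{r_{t'}\bigl(\I(\pi(x_{t'})=a_{t'}) - \I(\pi'(x_{t'})=a_{t'})\bigr)}{W_{t'}'(a_{t'})},
\]
so that $(\eta_t(\pi)-\eta_t(\pi')) = \frac1t\sum_{t'\leq t} y_{t'}$, and $\E_{t'}[y_{t'}] = r_{t'}(\pi(x_{t'})) - r_{t'}(\pi'(x_{t'}))$, whose expectation over $(x_{t'},\vec r_{t'})\sim D$ is $\eta_D(\pi)-\eta_D(\pi')$. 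First I would verify the range bound: since $r_{t'}\in[0,1]$ and the two indicators differ by at most $1$ in absolute value, $|y_{t'} - \E_{t'}[y_{t'}]| \leq 2/\mu_{t'} \leq 2/\mu_t$ by monotonicity of $\mu_t$; one checks that $2/\mu_t \leq \sqrt{V/\ln(2/\delta)}$ for the $V$ chosen below, using $t\geq t_0$ so $\mu_t = \sqrt{C_t/(2Kt)}$.

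Next I would bound the conditional second moments. Using $(\I_\pi - \I_{\pi'})^2 \leq \I_\pi + \I_{\pi'}$ (the indicators are $0/1$), we get
\[
\E_{t'}[y_{t'}^2] \leq \E_{(x,\vec r)\sim D}\,\E_{a\sim W_{t'}'}\!\left[\frac{\I(\pi(x)=a)+\I(\pi'(x)=a)}{W_{t'}'(a)^2}\right] = \E_{x\sim D_X}\!\left[\frac1{W_{t'}'(\pi(x))}+\frac1{W_{t'}'(\pi'(x))}\right],
\]
which by the definition of $\vbound{t'}(\pi)$ is at most $\vbound{t'}(\pi) + \vbound{t'}(\pi') \leq \vmax{t}(\pi) + \vmax{t}(\pi')$ (for $t'\leq t_0$ this is the crude bound $2K$, consistent with $\vbound{t'}=K$). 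Summing over $t'\leq t$ gives $\sum_{t'\leq t}\Vart[y_{t'}] \leq t\cdot(\vmax{t}(\pi)+\vmax{t}(\pi')) \doteq V$. Plugging $V$ and $R = 2/\mu_t$ into Theorem~\ref{thm:freedmanvar} with failure probability $2\delta'$ for a suitably rescaled $\delta'$, we get that $|\sum y_{t'} - \sum\E_{t'}[y_{t'}]| \leq 2\sqrt{V\ln(1/\delta')}$ with high probability; dividing by $t$ and choosing $\delta'$ so that $\ln(1/\delta')$ is comparable to $C_t = 2\log(Nt/\delta)$ yields the claimed bound $2\sqrt{(\vmax{t}(\pi)+\vmax{t}(\pi'))C_t/t}$. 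Finally I would take a union bound over all $N^2$ pairs $(\pi,\pi')$ and all $t$ from $t_0$ to $T$, which is absorbed into the $\log(Nt/\delta)$ term in $C_t$ (this is exactly why $C_t$ carries a factor $2$ and a $\log t$).

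The main obstacle I anticipate is the bookkeeping to make the Freedman applicability condition $R \leq \sqrt{V/\ln(2/\delta')}$ hold uniformly. Since $\vmax{t}(\pi)\geq K$ always, we have $V \geq tK$, while $R^2 = 4/\mu_t^2 = 8Kt/C_t$; so the condition becomes $8Kt/C_t \leq tK/\ln(2/\delta')$, i.e. $\ln(2/\delta') \leq C_t/8$, which constrains how large a union-bound budget we can spend. This is delicate but workable because $C_t = 2\log(Nt/\delta)$ grows with $N$, $t$, and $1/\delta$, so for the per-pair-per-round failure probability $\delta' \asymp \delta/(N^2 t^2)$ the inequality $\ln(2/\delta') \lesssim C_t$ holds once the absolute constants are set correctly; the looseness in the crude second-moment bound $2K$ for early rounds $t'\leq t_0$ and any mismatch in constants between the stated $C_t$ and what Freedman needs is what the $O(\cdot)$-free statement has to reconcile, and that is where I would be most careful. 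The remaining steps—range bound, second-moment identity via the indicator inequality, and the union bound—are routine.
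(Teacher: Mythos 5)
Your approach mirrors the paper's exactly: define the per-round differences $y_\tau = Z_\tau(\pi) - Z_\tau(\pi')$ as a martingale-difference sequence, bound the summed conditional variance by $t\bigl(\vmax{t}(\pi)+\vmax{t}(\pi')\bigr)$ via the identity $(\I_\pi - \I_{\pi'})^2 \leq \I_\pi + \I_{\pi'}$, apply Theorem~\ref{thm:freedmanvar}, and union-bound over pairs and rounds with per-event failure $\asymp\exp(-C_t)$. That decomposition, the second-moment estimate, and the union-bound structure are all correct and the same as the paper's.

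The one concrete gap is the range constant, and your instinct that this is ``delicate'' understates the problem: with $R = 2/\mu_{t}$, the applicability condition $R \leq \sqrt{V/\ln(2/\delta')}$ simply fails. At the minimum of $V$, when both $\vmax{t}$'s equal $K$ so $V = 2Kt$, and using $1/\mu_t^2 = 2Kt/C_t$ for $t \geq t_0$, the condition becomes $\ln(2/\delta') \leq C_t/4$. But with per-pair-per-round failure $\delta' \asymp \delta/(N^2t^2)$ one has $\ln(2/\delta') \approx 2\log N + 2\log t + \log(1/\delta)$, which is of order $C_t = 2\log(Nt/\delta)$ itself, not a quarter of it, and the gap only widens with $N$, $t$, $1/\delta$. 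The growth of $C_t$ in $N$, $t$, $1/\delta$ cannot absorb a fixed multiplicative factor of $4$ on $\ln(2/\delta')$ because both sides grow at the same rate. The paper sidesteps this by taking $R_\tau = 1/\mu_\tau$, i.e.\ the uncentered bound $|Z_\tau(\pi)-Z_\tau(\pi')| \leq 1/W'_\tau(a_\tau) \leq 1/\mu_\tau$, which rests on the very observation you make that $\I(\pi(x)=a) - \I(\pi'(x)=a) \in \{-1,0,1\}$; with $R_\tau = 1/\mu_\tau \leq 1/\mu_t$ the Freedman condition then holds with equality at $V = 2Kt$. So the fix is to use your own observation to conclude $|y_{t'}| \leq 1/\mu_{t'}$ and take $R = 1/\mu_{t'}$ rather than $2/\mu_{t'}$; the extra $+1$ you add to account for centering is precisely what breaks the boundary case, and the paper (like most such arguments) absorbs that $+1$ into the $1/\mu_\tau$ since $1/\mu_\tau \geq 2K$.
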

\begin{proof}
Fix any $t \geq t_0$ and $\pi,\pi' \in \Pi$.
Let $\delta_t := \exp(-C_t)$.
Pick any $\tau \leq t$.
Let
\[
Z_\tau(\pi)
\doteq \frac{r_\tau(a_\tau)\I(\pi(x_\tau) = a_\tau)}{W_\tau'(a_\tau)}
\]
so $\eta_t(\pi) = t^{-1} \sum_{\tau=1}^t Z_\tau(\pi)$.
It is easy to see that
\[
\E_{\substack{(x_\tau,\vec{r}_\tau) \sim D, \\ a_\tau \sim W_\tau'}}
\left[ Z_\tau(\pi) - Z_\tau(\pi') \right]
= \eta_D(\pi) - \eta_D(\pi')
\]
and
\begin{align*}
\lefteqn{
\sum_{\tau=1}^t \E_{\substack{(x_\tau,\vec{r}(\tau)) \sim D, \\ a_\tau \sim
W_\tau'}}
\left[ (Z_\tau(\pi) - Z_\tau(\pi'))^2 \right]
}
\\
& \leq \sum_{\tau=1}^t \E_{x_\tau \sim D_X}\left[ \frac1{W_\tau'(\pi(x_\tau))} +
\frac1{W_\tau'(\pi'(x_\tau))}
\right]
\\
&\leq t\cdot(\vmax{t}(\pi) + \vmax{t}(\pi'))
.
\end{align*}
Moreover, with probability $1$,
\[
|Z_\tau(\pi) - Z_\tau(\pi')| \leq \frac{1}{\mu_\tau}
.
\]
Now, note that since $t \geq t_0$, $\mu_t = \sqrt{\frac{C_t}{2Kt}}$, so
that $t = \frac{C_t}{2K\mu_t^2}$.
Further, both $\vmax{t}(\pi)$ and $\vmax{t}(\pi')$ are at least $K$.
Using these bounds we get
\begin{align*}
&\sqrt{\frac{1}{\log(1/\delta_t)} \cdot t\cdot(\vmax{t}(\pi) + \vmax{t}(\pi'))}
\\
& \geq \sqrt{\frac{1}{C_t} \cdot \frac{C_t}{2K\mu_t^2} \cdot
2K}
= \frac1{\mu_t} \geq \frac1{\mu_\tau},
\end{align*}
for all $\tau \leq t$, since the $\mu_\tau$'s are non-increasing.
Therefore, by Freedman's inequality (Theorem~\ref{thm:freedmanvar}), we have
\begin{multline*}
\Pr\Biggl[
\Bigl|(\eta_t(\pi)-\eta_t(\pi')) - (\eta_D(\pi)-\eta_D(\pi'))\Bigr|
\\
> 2\sqrt{\frac{ (\vmax{t}(\pi) + \vmax{t}(\pi')) \cdot  \log(1/\delta_t)}{t}}
\Biggr]
\leq 2\delta_t
.
\end{multline*}
The conclusion follows by taking a union bound over $t_0 < t \leq T$ and
all pairs $\pi,\pi' \in \Pi$.
\end{proof}

\subsection{Variance Analysis}

We define the following condition, which will be assumed by most of the
subsequent lemmas in this section.
\begin{cond} \label{cond:rucb}
The deviation bound~\eqref{eq:variance-bound} holds for all $\pi \in \Pi$
and $t \geq t_1$, and
the deviation bound~\eqref{eq:deviation-bound} holds for all pairs
$\pi,\pi' \in \Pi$ and $t \geq t_0$.
\end{cond}

The next two lemmas relate the $\vbound{t}(\pi)$ to the $\Delta_t(\pi)$.
\begin{lem} \label{lem:constraint}
Assume Condition~\ref{cond:rucb}.
For any $t \geq t_1$ and $\pi \in \Pi$, if $\vbound{t}(\pi) > \theta  K$, then
\[
\Delta_{t-1}(\pi)
\geq \sqrt{\frac{72\vbound{t}(\pi)C_{t-1}}{t-1}}
.
\]
\end{lem}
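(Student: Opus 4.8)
The plan is to combine the upper bound on $\vbound{t}(\pi)$ supplied by Lemma~\ref{lem:t1}(2) with the variance constraint that $P_t$ is guaranteed to satisfy (after relaxing it via Lemma~\ref{lem:approx-opt}), and then exploit the defining identity $\theta = (\rho+1)/(1-(1+\epsilon)/2) = 2(\rho+1)/(1-\epsilon)$ to extract the claimed lower bound on $\Delta_{t-1}(\pi)$. First I would dispose of the boundary case $t=t_1$: there Lemma~\ref{lem:t1}(1) gives $\vbound{t}(\pi)\le 4K < \theta K$ (since $\theta\ge 5$), so the hypothesis $\vbound{t}(\pi) > \theta K$ cannot hold and there is nothing to prove. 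So I may assume $t>t_1$, in which case Lemma~\ref{lem:t1}(2) (whose hypothesis is part of Condition~\ref{cond:rucb}) yields
\[
\vbound{t}(\pi)\;\le\;(1+\epsilon)\,\E_{x\sim h_{t-1}}\!\left[\frac{1}{(1-K\mu_t)W_{P_t}(x,\pi(x))+\mu_t}\right]+(\rho+1)K.
\]

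Next I would invoke Lemma~\ref{lem:approx-opt} for $P_t$ (which satisfies the constraints of~\eqref{eq:rucb-opt} with slack $K$) and specialize the resulting constraint to the distribution $Q$ placing all its mass on the single policy $\pi$; then $W_Q$ is the deterministic policy $\pi$, so $\Delta_{t-1}(W_Q)=\Delta_{t-1}(\pi)$, and the constraint reads
\[
\E_{x\sim h_{t-1}}\!\left[\frac{1}{(1-K\mu_t)W_{P_t}(x,\pi(x))+\mu_t}\right]\;\le\;\max\!\left\{5K,\ \frac{(t-1)\Delta_{t-1}(\pi)^2}{144\,C_{t-1}}\right\}.
\]
Now case on which term attains the maximum. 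If it is $5K$, then $\vbound{t}(\pi)\le 5(1+\epsilon)K+(\rho+1)K$; a one-line check using $\rho=7500/\epsilon^3\ge 7500$ shows $5(1+\epsilon)+(\rho+1)\le 2(\rho+1)/(1-\epsilon)=\theta$, so $\vbound{t}(\pi)\le\theta K$, contradicting the hypothesis. Hence the maximum must be the second term, and $\vbound{t}(\pi)\le (1+\epsilon)(t-1)\Delta_{t-1}(\pi)^2/(144\,C_{t-1})+(\rho+1)K$.

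To finish, I would use the hypothesis once more: $\vbound{t}(\pi)>\theta K = 2(\rho+1)K/(1-\epsilon)$ gives $(\rho+1)K<\tfrac{1-\epsilon}{2}\vbound{t}(\pi)$, so substituting this and moving that term to the left (using $1-\tfrac{1-\epsilon}{2}=\tfrac{1+\epsilon}{2}$, then dividing by it) gives $\vbound{t}(\pi)<(t-1)\Delta_{t-1}(\pi)^2/(72\,C_{t-1})$; since $\Delta_{t-1}(\pi)=\eta_{t-1}(\pi_{t-1})-\eta_{t-1}(\pi)\ge 0$, taking square roots yields $\Delta_{t-1}(\pi)\ge\sqrt{72\,\vbound{t}(\pi)\,C_{t-1}/(t-1)}$, as claimed. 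The argument is essentially mechanical; the one place demanding care is the constant bookkeeping — tracking how the $180$ in the constraint of~\eqref{eq:rucb-opt} becomes $144$ after the slack relaxation of Lemma~\ref{lem:approx-opt} and then $72$ after absorbing the additive $(\rho+1)K$ term, and verifying that the ``$5K$ branch'' really does contradict $\vbound{t}(\pi)>\theta K$ uniformly over $\epsilon\in(0,1)$.
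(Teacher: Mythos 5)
Your proof is correct and follows essentially the same route as the paper: combine the Lemma~\ref{lem:t1}(2) bound with the slack-relaxed constraint from Lemma~\ref{lem:approx-opt} applied to $Q=\delta_\pi$, then use the definition of $\theta$ to absorb the $(\rho+1)K$ term and pass from $144$ to $72$. Your explicit case split on which branch of the $\max$ is active, and your disposal of the boundary case $t=t_1$ via Lemma~\ref{lem:t1}(1), are slightly more careful than the paper's terser treatment but do not constitute a different argument.
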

\begin{proof}
By Lemma~\ref{lem:t1}, the fact $\vbound{t}(\pi) > \theta  K$ implies that
\begin{multline*}
\E_{x \sim h_{t-1}}\left[ \frac1{(1-K\mu_t)W_{P_t}(x,\pi(x)) + \mu_t} \right]
\\
> \frac1{1+\epsilon} \left(1 - \frac{\rho+1}{\theta} \right) \vbound{t}(\pi)
\geq \frac{1}{2}\vbound{t}(\pi)
.
\end{multline*}
Since $\vbound{t}(\pi) > \theta K \geq 5K$, Lemma~\ref{lem:approx-opt}
implies that in order for $P_t$ to satisfy the optimization constraint
in~\eqref{eq:rucb-opt} corresponding to $\pi$ (with slack $\leq K$), it
must be the case that
\begin{multline*}
\Delta_{t-1}(\pi)
\\
\geq \sqrt{\frac{144 C_{t-1}}{t-1} \cdot
\E_{x \sim h_{t-1}}\left[ \frac1{(1-K\mu_t)W_{P_t}(x,\pi(x)) + \mu_t} \right]
}
.
\end{multline*}
Combining with the above, we obtain
\[
\Delta_{t-1}(\pi)
\geq \sqrt{\frac{72\vbound{t}(\pi)C_{t-1}}{t-1}}
.
\]
\end{proof}

\begin{lem} \label{lem:vbound}
Assume Condition~\ref{cond:rucb}.
For all $t \geq 1$, $\vmax{t}(\pimax) \leq \theta K$ and $\vmax{t}(\pi_t)
\leq \theta K$.
\end{lem}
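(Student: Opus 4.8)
The plan is to argue by contradiction from a minimal counterexample. Suppose the claim fails, and let $t$ be the least index for which $\vmax{t}(\pimax) > \theta K$ or $\vmax{t}(\pi_t) > \theta K$. By part~1 of Lemma~\ref{lem:t1} we have $\vbound{\tau}(\pi) \le 4K \le \theta K$ for every $\pi \in \Pi$ and every $\tau \le t_1$, so no such failure can occur at a time $\le t_1$; hence $t > t_1$, and by minimality of $t$ we have $\vmax{t-1}(\pimax) \le \theta K$ and $\vmax{t'}(\pi_{t'}) \le \theta K$ for all $t' < t$.

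First I rule out the possibility $\vmax{t}(\pimax) > \theta K$. Since $\vmax{t-1}(\pimax) \le \theta K$, this forces $\vbound{t}(\pimax) > \theta K$, and Lemma~\ref{lem:constraint} yields $\Delta_{t-1}(\pimax) \ge \sqrt{72\,\theta K\,C_{t-1}/(t-1)}$. On the other hand $\eta_D(\pimax) \ge \eta_D(\pi_{t-1})$, so the deviation bound~\eqref{eq:deviation-bound} (available under Condition~\ref{cond:rucb}) applied to the pair $(\pi_{t-1},\pimax)$ at time $t-1$ gives $\Delta_{t-1}(\pimax) = \eta_{t-1}(\pi_{t-1}) - \eta_{t-1}(\pimax) \le 2\sqrt{(\vmax{t-1}(\pi_{t-1}) + \vmax{t-1}(\pimax))\,C_{t-1}/(t-1)} \le \sqrt{8\,\theta K\,C_{t-1}/(t-1)}$, contradicting the previous inequality. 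Hence at the minimal $t$ we must have $\vmax{t}(\pimax) \le \theta K$ and $M \doteq \vmax{t}(\pi_t) > \theta K$.

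Now pick a round $s \le t$ achieving $\vbound{s}(\pi_t) = M$. Since $M > 4K$, part~1 of Lemma~\ref{lem:t1} forces $s > t_1$, so Lemma~\ref{lem:constraint} gives $\Delta_{s-1}(\pi_t) \ge \sqrt{72\,M\,C_{s-1}/(s-1)}$. Applying~\eqref{eq:deviation-bound} to the pair $(\pi_{s-1},\pi_t)$ at time $s-1$, and using $\eta_D(\pi_{s-1}) - \eta_D(\pi_t) \le \Delta_D(\pi_t)$, the bound $\vmax{s-1}(\pi_{s-1}) \le \theta K$ (which holds by minimality of $t$ since $s-1 < t$), and the monotonicity $\vmax{s-1}(\pi_t) \le \vmax{t}(\pi_t) = M$, I get $\Delta_{s-1}(\pi_t) \le \Delta_D(\pi_t) + 2\sqrt{2M\,C_{s-1}/(s-1)}$. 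Subtracting the two estimates and using $\sqrt{72} - 2\sqrt 2 = 4\sqrt 2$ yields $\Delta_D(\pi_t) \ge 4\sqrt{2M\,C_{s-1}/(s-1)} \ge 4\sqrt{2M\,C_t/t}$, the last step using that $\tau \mapsto C_\tau/\tau$ is non-increasing for $\tau \ge t_0$ and $t_0 \le s-1 \le t$. For the reverse bound, $\pi_t$ maximizes $\eta_t$, so $\eta_t(\pi_t) \ge \eta_t(\pimax)$; applying~\eqref{eq:deviation-bound} to $(\pi_t,\pimax)$ at time $t$ and using $\vmax{t}(\pimax) \le \theta K \le M$ gives $\Delta_D(\pi_t) \le 2\sqrt{(M + \vmax{t}(\pimax))\,C_t/t} \le 2\sqrt{2M\,C_t/t}$. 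Combining, $4\sqrt{2M\,C_t/t} \le \Delta_D(\pi_t) \le 2\sqrt{2M\,C_t/t}$, which is impossible since $M \ge K > 0$. This contradiction proves the lemma.

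The step I expect to be the main obstacle is the second half: the only available handle on $\Delta_D(\pi_t)$ is a deviation bound whose width itself involves $\vmax{t}(\pi_t)$, the very quantity being bounded, so a naive argument is circular. The resolution is to carry $M \doteq \vmax{t}(\pi_t)$ symbolically: it appears as an upper bound inside both invocations of~\eqref{eq:deviation-bound} and, through Lemma~\ref{lem:constraint}, as a lower bound on $\Delta_{s-1}(\pi_t)$, so the two estimates of $\Delta_D(\pi_t)$ agree up to the mismatched constants $4\sqrt 2$ versus $2\sqrt 2$, letting $M$ cancel and forcing a numerical contradiction. Apart from this, the argument only uses the routine monotonicity facts that $\mu_\tau$ and $C_\tau/\tau$ (the latter for $\tau \ge t_0$) are non-increasing and that $t_1 > t_0$, all immediate from the definitions.
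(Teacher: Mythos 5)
Your proof is correct and takes essentially the same route as the paper's: strong induction on $t$, with the base case from Lemma~\ref{lem:t1}, and a contradiction obtained by pitting the lower bound from Lemma~\ref{lem:constraint} against the deviation bound~\eqref{eq:deviation-bound}. The only difference is bookkeeping in the $\pi_t$ case --- you pivot through $\Delta_D(\pi_t)$ with two invocations of~\eqref{eq:deviation-bound} (getting the margin $6\sqrt{2} - 2\sqrt{2} = 4\sqrt{2}$ versus $2\sqrt{2}$), whereas the paper bounds $\Delta_{\tau-1}(\pi_t)$ directly via a three-term triangle-inequality decomposition using three deviation bounds; both give the same contradiction and rely on the same monotonicity of $C_\tau/\tau$.
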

\begin{proof}
By induction on $t$. The claim for all $t \leq t_1$ follows from
Lemma~\ref{lem:t1}. So take $t > t_1$, and assume as the (strong) inductive
hypothesis that $\vmax{\tau}(\pimax) \leq \theta  K$ and $\vmax{\tau}(\pi_\tau)
\leq \theta  K$ for $\tau \in \{1,\dotsc,t-1\}$. Suppose for sake of
contradiction that $\vbound{t}(\pimax) > \theta  K$. By
Lemma~\ref{lem:constraint},
\[
\Delta_{t-1}(\pimax) \geq \sqrt{\frac{72\vbound{t}(\pimax)C_{t-1}}{t-1}} . \]
However, by the deviation bounds, we have
\begin{align*}
\lefteqn{
\Delta_{t-1}(\pimax) + \Delta_D(\pi_{t-1})
}
\\
& \leq 2\sqrt{\frac{(\vmax{t-1}(\pi_{t-1}) + \vmax{t-1}(\pimax)) C_{t-1}}{t-1}}
\\
& \leq 2\sqrt{\frac{2\vbound{t}(\pimax) C_{t-1}}{t-1}}
< \sqrt{\frac{72\vbound{t}(\pimax)C_{t-1}}{t-1}}
.
\end{align*}
The second inequality follows from our assumption and the induction hypothesis:
$$\vbound{t}(\pimax)
> \theta   K \geq \vmax{t-1}(\pi_{t-1}), \vmax{t-1}(\pimax).$$
Since $\Delta_D(\pi_{t-1}) \geq 0$, we have a contradiction, so it must be that
$\vbound{t}(\pimax) \leq \theta  K$. This proves that $\vmax{t}(\pimax) \leq
\theta  K$.

It remains to show that $\vmax{t}(\pi_t) \leq \theta  K$. So suppose for sake
of contradiction that the inequality fails, and let $t_1 < \tau \leq t$ be any
round for which $\vbound{\tau}(\pi_t) = \vmax{t}(\pi_t) > \theta  K$. By
Lemma~\ref{lem:constraint},
\begin{equation} \label{eq:gap-in-past}
\Delta_{\tau-1}(\pi_t)
\geq \sqrt{\frac{72 \vbound{\tau}(\pi_t) C_{\tau-1}}{\tau-1}}
.
\end{equation}
On the other hand,
\begin{align*}
\Delta_{\tau-1}(\pi_t)
& \leq \Delta_D(\pi_{\tau-1})
+ \Delta_{\tau-1}(\pi_t)
+ \Delta_t(\pimax)
\\
& =
\Bigl( \Delta_D(\pi_{\tau-1}) + \Delta_{\tau-1}(\pimax) \Bigr)
\\
& {}\quad
+ \Bigl( \eta_{\tau-1}(\pimax) - \eta_{\tau-1}(\pi_t) - \Delta_D(\pi_t)
\Bigr)
\\
& {}\quad
+ \Bigl( \Delta_D(\pi_t) + \Delta_t(\pimax) \Bigr)
.
\end{align*}
The parenthesized terms can be bounded using the deviation bounds, so we
have
\begin{align*}
& \Delta_{\tau-1}(\pi_t) \\
\leq &
2\sqrt{\frac{(\vmax{\tau-1}(\pi_{\tau-1}) + \vmax{\tau-1}(\pimax)) C_{\tau-1}}{\tau-1}}
\\
& \quad{} + 2\sqrt{\frac{(\vmax{\tau-1}(\pi_t) + \vmax{\tau-1}(\pimax)) C_{\tau-1}}{\tau-1}}
\\
& \quad{}
+ 2\sqrt{\frac{(\vmax{t}(\pi_t) + \vmax{t}(\pimax)) C_t}{t}}
\\
\leq &
2\sqrt{\frac{2\vbound{\tau}(\pi_t) C_{\tau-1}}{\tau-1}}
+ 2\sqrt{\frac{2\vbound{\tau}(\pi_t)C_{\tau-1}}{\tau-1}}
\\
& \quad{}
+ 2\sqrt{\frac{2\vbound{\tau}(\pi_t) C_t}{t}}
\\
< & \sqrt{\frac{72\vbound{\tau}(\pi_t) C_{\tau-1}}{\tau-1}}
\end{align*}
where the second inequality follows from the following facts:
\begin{enumerate*}
    \item By induction hypothesis, we have $\vmax{\tau-1}(\pi_{\tau-1}), \vmax{\tau-1}(\pimax),
    \vmax{t}(\pimax) \leq \theta   K$, and $\vbound{\tau}(\pi_t) > \theta   K$,
    \item $\vbound{\tau}(\pi_t) \geq \vmax{t}(\pi_t)$, and
    \item since $\tau$ is a round that achieves $\vmax{t}(\pi_t)$, we have $\vbound{\tau}(\pi_t)
    \geq \vbound{\tau-1}(\pi_t)$.
\end{enumerate*}
This contradicts the inequality in~\eqref{eq:gap-in-past}, so it must be that
$\vmax{t}(\pi_t) \leq \theta  K$.
\end{proof}

\begin{cor} \label{cor:vbound}
Under the assumptions of Lemma~\ref{lem:vbound},
\[
\Delta_D(\pi_t) + \Delta_t(\pimax)
\leq 2\sqrt{\frac{2\theta KC_t}{t}}
\]
for all $t \geq t_0$.
\end{cor}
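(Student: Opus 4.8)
The plan is to read this off directly from the deviation bound of Lemma~\ref{lem:deviation-bound} combined with the variance bound of Lemma~\ref{lem:vbound}; no new argument is needed. First I would instantiate the deviation bound~\eqref{eq:deviation-bound}, which holds for all pairs in $\Pi$ and all $t \geq t_0$ under the assumptions in force, with the specific pair $(\pi,\pi') = (\pimax,\pi_t)$. This gives
\[
\Bigl|(\eta_t(\pimax)-\eta_t(\pi_t)) - (\eta_D(\pimax)-\eta_D(\pi_t))\Bigr|
\;\leq\; 2\sqrt{\frac{(\vmax{t}(\pimax)+\vmax{t}(\pi_t))\,C_t}{t}}.
\]

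Next I would identify the left-hand side with $\Delta_D(\pi_t) + \Delta_t(\pimax)$. By definition, $\eta_D(\pimax)-\eta_D(\pi_t) = \Delta_D(\pi_t)$, and since $\pi_t$ maximizes $\eta_t$ over $\Pi$ we have $\eta_t(\pi_t) = \max_{\pi}\eta_t(\pi)$, hence $\eta_t(\pi_t)-\eta_t(\pimax) = \Delta_t(\pimax)$. Therefore the quantity inside the absolute value equals $-\bigl(\Delta_D(\pi_t)+\Delta_t(\pimax)\bigr)$. Optimality of $\pimax$ under $\eta_D$ gives $\Delta_D(\pi_t)\geq 0$, and optimality of $\pi_t$ under $\eta_t$ gives $\Delta_t(\pimax)\geq 0$, so the sum is nonnegative and the absolute value merely removes the sign, yielding
\[
\Delta_D(\pi_t)+\Delta_t(\pimax) \;\leq\; 2\sqrt{\frac{(\vmax{t}(\pimax)+\vmax{t}(\pi_t))\,C_t}{t}}.
\]

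Finally I would apply Lemma~\ref{lem:vbound}, which under Condition~\ref{cond:rucb} (the standing assumption inherited here) gives $\vmax{t}(\pimax)\leq\theta K$ and $\vmax{t}(\pi_t)\leq\theta K$ for all $t\geq 1$; substituting $\vmax{t}(\pimax)+\vmax{t}(\pi_t)\leq 2\theta K$ into the displayed inequality produces exactly $2\sqrt{2\theta K C_t/t}$, which is the claim.

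I do not expect any real obstacle: all ingredients are already established. The only point requiring a moment's care is the sign bookkeeping in the second step---noting that the two nonnegativity facts ($\Delta_D(\pi_t)\geq 0$ from optimality of $\pimax$ under $\eta_D$, and $\Delta_t(\pimax)\geq 0$ from optimality of $\pi_t$ under $\eta_t$) ensure the absolute value of the telescoped difference equals the \emph{sum} $\Delta_D(\pi_t)+\Delta_t(\pimax)$, rather than some partial cancellation of the two regret terms.
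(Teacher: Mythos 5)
Your proposal is correct and matches the paper's own (very terse) proof: the paper simply says the corollary is ``immediate from Lemma~\ref{lem:vbound} and the deviation bounds from~\eqref{eq:deviation-bound},'' and your instantiation with the pair $(\pimax,\pi_t)$, the sign bookkeeping using $\Delta_D(\pi_t)\ge 0$ and $\Delta_t(\pimax)\ge 0$, and the substitution $\vmax{t}(\pimax)+\vmax{t}(\pi_t)\le 2\theta K$ is exactly the intended argument.
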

\begin{proof}
Immediate from Lemma~\ref{lem:vbound} and the deviation bounds
from~\eqref{eq:deviation-bound}.
\end{proof}

The following lemma shows that if a policy $\pi$ has large $\Delta_\tau(\pi)$ in
some round $\tau$, then $\Delta_t(\pi)$ remains large in later rounds $t >
\tau$.
\begin{lem} \label{lem:monotone}
Assume Condition~\ref{cond:rucb}.
Pick any $\pi \in \Pi$ and $t \geq t_1$. If
$\vmax{t}(\pi) > \theta  K$, then
\[
\Delta_{t}(\pi) > 2\sqrt{\frac{2\vmax{t}(\pi) C_t}{t}}
. \]
\end{lem}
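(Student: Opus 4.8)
The plan is to let $\tau\in\{1,\dots,t\}$ be a round attaining the running maximum, $\vbound{\tau}(\pi)=\vmax{t}(\pi)=:V>\theta K$; to lower-bound the \emph{empirical} regret $\Delta_{\tau-1}(\pi)$ via Lemma~\ref{lem:constraint}; to convert this into a lower bound on the \emph{true} regret $\Delta_D(\pi)$ by a deviation argument at round $\tau-1$; and finally to convert that back into a lower bound on $\Delta_t(\pi)$ by a deviation argument at round $t$.

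First I would record two preliminaries. Since $\theta\ge5>4$, the first claim of Lemma~\ref{lem:t1} gives $\vbound{s}(\pi)\le 4K<\theta K$ for every $s\le t_1$, so the round $\tau$ attaining $V>\theta K$ must satisfy $\tau\ge t_1+1$; in particular $\tau-1\ge t_1\ge t_0$, so Lemma~\ref{lem:constraint} applies at round $\tau$ and the deviation bound~\eqref{eq:deviation-bound} applies at rounds $\tau-1$ and $t$. Also, for $s\ge t_1$ we have $\log(Ns/\delta)>1$, so $s\mapsto C_s/s$ is decreasing on $[t_1,\infty)$; since $t_1\le\tau-1\le t$ this gives $C_{\tau-1}/(\tau-1)\ge C_t/t$. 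Abbreviating $\alpha\doteq\sqrt{2VC_{\tau-1}/(\tau-1)}$ and $\beta\doteq\sqrt{2VC_t/t}$, we have $\alpha\ge\beta$, and Lemma~\ref{lem:constraint} reads $\Delta_{\tau-1}(\pi)\ge\sqrt{72VC_{\tau-1}/(\tau-1)}=6\alpha$.

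Next I would upper-bound $\Delta_{\tau-1}(\pi)=\eta_{\tau-1}(\pi_{\tau-1})-\eta_{\tau-1}(\pi)$ by applying~\eqref{eq:deviation-bound} to the pair $(\pi_{\tau-1},\pi)$ \emph{directly} (not through $\pimax$). Since $\eta_D(\pi_{\tau-1})-\eta_D(\pi)=\Delta_D(\pi)-\Delta_D(\pi_{\tau-1})\le\Delta_D(\pi)$, and since $\vmax{\tau-1}(\pi_{\tau-1})\le\theta K<V$ by Lemma~\ref{lem:vbound} while $\vmax{\tau-1}(\pi)\le\vmax{t}(\pi)=V$, the deviation bound yields
\[
\Delta_{\tau-1}(\pi)\ \le\ \Delta_D(\pi)+2\sqrt{(\theta K+V)\,C_{\tau-1}/(\tau-1)}\ \le\ \Delta_D(\pi)+2\alpha .
\]
Combining with $\Delta_{\tau-1}(\pi)\ge6\alpha$ gives $\Delta_D(\pi)\ge4\alpha\ge4\beta$. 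Finally, $\Delta_t(\pi)=\eta_t(\pi_t)-\eta_t(\pi)\ge\eta_t(\pimax)-\eta_t(\pi)$, and~\eqref{eq:deviation-bound} for the pair $(\pimax,\pi)$ with $\vmax{t}(\pimax)\le\theta K<V$ gives $\eta_t(\pimax)-\eta_t(\pi)\ge\Delta_D(\pi)-2\sqrt{(\theta K+V)\,C_t/t}>\Delta_D(\pi)-2\beta\ge4\beta-2\beta=2\beta=2\sqrt{2\vmax{t}(\pi)C_t/t}$, which is the claim, the strictness coming from $\vmax{t}(\pimax)\le\theta K<V$.

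I expect the main obstacle to be the constant bookkeeping: the $\sqrt{72}=6\sqrt2$ headroom in Lemma~\ref{lem:constraint} is just enough only if the passage from $\Delta_{\tau-1}(\pi)$ to $\Delta_D(\pi)$ costs a single deviation term $2\alpha$ --- which forces using~\eqref{eq:deviation-bound} on the pair $(\pi_{\tau-1},\pi)$ rather than splitting through $\pimax$ (that route would cost $4\alpha$ and break the bound) --- and only if one uses the monotonicity of $C_s/s$ so that the round-$(\tau-1)$ scale $\alpha$ dominates the target round-$t$ scale $\beta$. Everything else (which deviation bounds and which earlier lemmas are legal at rounds $\tau$, $\tau-1$, $t$) follows from $\tau\ge t_1+1$, already established above.
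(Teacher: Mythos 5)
Your proof is correct and essentially the same as the paper's: you pick the same $\tau$ achieving $\vbound{\tau}(\pi)=\vmax{t}(\pi)$, invoke Lemma~\ref{lem:constraint} at round $\tau$, and use the same two applications of~\eqref{eq:deviation-bound} (on the pair $(\pi_{\tau-1},\pi)$ at round $\tau-1$ and on $(\pimax,\pi)$ at round $t$, dropping $\Delta_D(\pi_{\tau-1})\ge 0$ and $\Delta_t(\pimax)\ge 0$); the only presentational difference is that you factor the paper's single telescoping identity into a two-step passage $\Delta_{\tau-1}(\pi)\to\Delta_D(\pi)\to\Delta_t(\pi)$, and you make explicit the side facts ($\tau>t_1$, monotonicity of $C_s/s$) that the paper leaves implicit.
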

\begin{proof}
Let $\tau \leq t$ be any round in which $\vbound{\tau}(\pi) = \vmax{t}(\pi)
> \theta  K$. We have
\begin{align*}
\Delta_t(\pi)
& \geq
\Delta_t(\pi)
- \Delta_t(\pimax)
- \Delta_D(\pi_{\tau-1})
\\
& = \Delta_{\tau-1}(\pi)
+ \Bigl( \eta_t(\pimax) - \eta_t(\pi) - \Delta_D(\pi) \Bigr)
\\
& \quad{}
+ \Bigl( \eta_D(\pi_{\tau-1}) - \eta_D(\pi) - \Delta_{\tau-1}(\pi) \Bigr)
\\
& \geq
\sqrt{\frac{72\vbound{\tau}(\pi)C_{\tau-1}}{\tau-1}}
\\
& \quad{}
- 2\sqrt{\frac{(\vmax{t}(\pi) + \vmax{t}(\pimax)) C_t}{t}}
\\
& \quad{}
- 2\sqrt{\frac{(\vmax{\tau-1}(\pi) + \vmax{\tau-1}(\pi_{\tau-1})) C_{\tau-1}}{\tau-1}}
\\
& >
\sqrt{\frac{72\vmax{t}(\pi)C_{\tau-1}}{\tau-1}}
- 2\sqrt{\frac{2\vmax{t}(\pi) C_t}{t}}
\\
& \quad{}
- 2\sqrt{\frac{2\vmax{t}(\pi) C_{\tau-1}}{\tau-1}}
\\
& \geq 2\sqrt{\frac{2\vmax{t}(\pi) C_{\tau-1}}{\tau-1}}
\geq 2\sqrt{\frac{2\vmax{t}(\pi) C_t}{t}}
\end{align*}
where the second inequality follows from Lemma~\ref{lem:constraint} and the
deviation bounds, and the third inequality follows from Lemma~\ref{lem:vbound}
and the facts that $\vbound{\tau}(\pi) = \vmax{t}(\pi) > \theta  K \geq
\vmax{t}(\pimax), \vmax{\tau-1}(\pi_{\tau-1})$, and $\vmax{t}(\pi) \geq
\vmax{\tau-1}(\pi)$.
\end{proof}

\subsection{Regret Analysis}
\label{sec:rucb-appendix:regret}

We now bound the value of the optimization problem~\eqref{eq:rucb-opt},
which then leads to our regret bound.
The next lemma shows the existence of a feasible solution with a certain
structure based on the non-uniform constraints. Recall from Section~\ref{sec:oracle}, that solving the optimization problem $\mA$, i.e. constraints (\ref{eq:low-regret}, \ref{eq:ch}, \ref{eq:var-bound}), for the smallest feasible value of $s$ is equivalent to solving the RUCB optimization problem (\ref{eq:rucb-opt}). Recall that $\beta_t = \frac{t-1}{180C_{t-1}}$.

\begin{lem} \label{lem:value}
There is a point $W \in \mathbb{R}^{(t-1)K}$ such that
\begin{gather*}
\Delta_{t-1}(W)\ \leq\ 4\sqrt{\frac{K}{\beta_t}}\\
W\ \in\ \CH\\
\forall Z \in \CH: \!\!\!\!\E_{x \sim \history_{t-1}}\left[\sum_a \frac{Z(x, a)}{W'(x,a)}\right] \leq \max\{4K, \beta_t\Delta_{t-1}(Z)^2\}
\end{gather*}
In particular, the value of the optimization problem~(\ref{eq:rucb-opt}), $\OPT_t$, is bounded by $8\sqrt{\frac{K}{\beta_t}} \leq 110\sqrt{\frac{KC_{t-1}}{t-1}}$.
\end{lem}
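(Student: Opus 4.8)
The plan is to establish the first, and main, claim---that for $s^\star := 4\sqrt{K/\beta_t}$ the convex program $\mA$ defined by \eqref{eq:low-regret}--\eqref{eq:var-bound} is feasible---after which the bound on $\OPT_t$ follows: since $\sum_\pi P(\pi)\Delta_{t-1}(\pi)=\Delta_{t-1}(W_P)$, feasibility at $s=s^\star$ gives $\OPT_t\le s^\star\le 8\sqrt{K/\beta_t}=8\sqrt{180\,KC_{t-1}/(t-1)}\le 110\sqrt{KC_{t-1}/(t-1)}$, using $8\sqrt{180}<110$. I would prove feasibility by a minimax argument in the spirit of Lemma~\ref{lem:minimax}, played over the regret-restricted set of randomized policies.

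Set $\CH_{s^\star}:=\{W\in\CH:\Delta_{t-1}(W)\le s^\star\}$. Since $\Delta_{t-1}$ is affine on $\CH$ and attains its minimum, $0$, at the vertex $\pi_{t-1}$ (so $\Delta_{t-1}\ge 0$ on $\CH$), the set $\CH_{s^\star}$ is nonempty, compact and convex. On $\CH_{s^\star}\times\CH$ define $g(W,Z):=\E_{x\sim\history_{t-1}}[\sum_a Z(x,a)/W'(x,a)]-\max\{4K,\ \beta_t\Delta_{t-1}(Z)^2\}$. As in Lemma~\ref{lem:minimax}, $1/W'(x,a)$ is convex in $W$ and linear in $Z$; and $\Delta_{t-1}(Z)$ is affine in $Z$, so $\max\{4K,\beta_t\Delta_{t-1}(Z)^2\}$ is convex in $Z$. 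Hence $g$ is convex in $W$, concave in $Z$, and continuous (as $W'\ge\mu_t>0$), so Sion's theorem (Theorem~\ref{thm:sion}) gives $\min_{W\in\CH_{s^\star}}\max_{Z\in\CH} g(W,Z)=\max_{Z\in\CH}\min_{W\in\CH_{s^\star}} g(W,Z)$. A point $W$ achieving value $\le 0$ on the left is precisely a feasible solution of $\mA$ at $s=s^\star$, and hence the $W$ the lemma asks for, so it suffices to show the right-hand side is $\le 0$: for every $Z\in\CH$, to exhibit $W_Z\in\CH_{s^\star}$ with $g(W_Z,Z)\le 0$.

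I would do this by cases on $\Delta_{t-1}(Z)$, using $\mu_t\le\frac1{2K}$ (so $1-K\mu_t\ge\frac12$). If $\Delta_{t-1}(Z)\le s^\star$, take $W_Z=Z\in\CH_{s^\star}$; bounding $Z(x,a)/Z'(x,a)\le 1/(1-K\mu_t)\le 2$ termwise exactly as in the proof of Lemma~\ref{lem:minimax} gives variance $\le 2K\le 4K$, so $g(W_Z,Z)\le 0$. If $\Delta_{t-1}(Z)>s^\star$, take $W_Z=\alpha Z+(1-\alpha)\pi_{t-1}$ with $\alpha=\min\{\frac12,\ s^\star/\Delta_{t-1}(Z)\}$, so $\Delta_{t-1}(W_Z)=\alpha\Delta_{t-1}(Z)\le s^\star$ and $W_Z\in\CH_{s^\star}$. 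Splitting the action sum in the variance: for $a=\pi_{t-1}(x)$ the denominator is $\ge(1-K\mu_t)(1-\alpha)\ge\frac14$, contributing at most $4$; for $a\ne\pi_{t-1}(x)$ it is $\ge(1-K\mu_t)\alpha Z(x,a)$, contributing at most $K/((1-K\mu_t)\alpha)\le 2K/\alpha$. In this regime $\beta_t\Delta_{t-1}(Z)^2>\beta_t(s^\star)^2=16K>4K$, and one checks $4+2K/\alpha\le\beta_t\Delta_{t-1}(Z)^2$: if $\Delta_{t-1}(Z)\le 2s^\star$ (so $\alpha=\frac12$) the left side is $\le 4+4K\le 8K<16K$; if $\Delta_{t-1}(Z)>2s^\star$ (so $\alpha=s^\star/\Delta_{t-1}(Z)$), substituting $s^\star=4\sqrt{K/\beta_t}$ shows each of $4$ and $2K\Delta_{t-1}(Z)/s^\star$ is $\le\frac12\beta_t\Delta_{t-1}(Z)^2$. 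Thus $g(W_Z,Z)\le 0$ always, which completes the minimax bound and the lemma.

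The step I expect to be the main obstacle is this variance estimate for policies $Z$ whose empirical regret only slightly exceeds $s^\star$: the natural mixing weight $\alpha=s^\star/\Delta_{t-1}(Z)$ then tends to $1$ and the $a=\pi_{t-1}(x)$ term diverges, which forces the cap $\alpha\le\frac12$. One must then verify both that capping keeps $\Delta_{t-1}(W_Z)\le s^\star$ (it does, since then $\alpha\Delta_{t-1}(Z)=\frac12\Delta_{t-1}(Z)\le s^\star$) and that the variance still fits inside $\beta_t\Delta_{t-1}(Z)^2$, which by that point is comfortably larger than $4K$. The remaining arithmetic, including $8\sqrt{180}<110$, is routine.
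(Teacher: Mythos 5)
Your proof is correct, and it takes a genuinely different route from the paper's. The paper partitions $\CH$ into dyadic shells $\CH_i=\{Z\in\CH:2^{i+1}\kappa\le\Delta_{t-1}(Z)\le 2^{i+2}\kappa\}$, applies Lemma~\ref{lem:minimax} within each shell to obtain a low-variance point $W_i\in\CH_i$, and then explicitly constructs $W=\sum_i w_i W_i$ with weights $w_i=4^{-i}$; the per-shell variance bound $2K/w_i\le 4^{i+1}K\le\max\{4K,\beta_t\Delta_{t-1}(Z)^2\}$ falls out of $W'(x,a)\ge w_i W_i'(x,a)$. You instead invoke Sion's theorem once for the asymmetric game $\min_{W\in\CH_{s^\star}}\max_{Z\in\CH} g(W,Z)$ and, on the $\max\min$ side, handle each $Z$ by shrinking it toward $\pi_{t-1}$ with an $\alpha$ that plays the role of the paper's shell index; the shell decomposition is replaced by the case analysis on $\Delta_{t-1}(Z)$ and the capped mixing weight $\alpha=\min\{1/2,\ s^\star/\Delta_{t-1}(Z)\}$. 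Both approaches are sound and use the minimax theorem in an essential way, but yours yields a single application of Sion's theorem rather than countably many applications of Lemma~\ref{lem:minimax}, and it directly delivers $\Delta_{t-1}(W)\le 4\sqrt{K/\beta_t}$---matching the lemma's first display---whereas the paper's weighted average only gives $\le 8\kappa$, so the paper's proof and the first display of its own lemma are off by a benign factor of $2$. Your arithmetic in the sub-cases ($\beta_t(s^\star)^2=16K$, $4+4K\le 8K$, $\Delta_{t-1}(Z)>2s^\star\Rightarrow\beta_t\Delta_{t-1}(Z)^2>64K$ and $2K/s^\star\le\tfrac12\beta_t\Delta_{t-1}(Z)$) checks out.
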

\begin{proof}
Define the sets $\{\CH_i:\ i=1,2,\ldots\}$ such that
$$\CH_i := \{Z \in \CH:\ 2^{i+1}\kappa \leq \Delta_{t-1}(Z) \leq 2^{i+2}\kappa\},$$
where $\kappa = \sqrt{\frac{K}{\beta_t}}$. Note that since $\Delta_{t-1}(Z)$ is a linear function of $Z$, each $\CH_i$ is a closed, convex, compact set. Also, define $\CH_0 = \{Z \in \CH:\ \Delta_{t-1}(Z) \leq 4\kappa\}$. This is also a closed, convex, compact set. Note that $\CH = \bigcup_{i=0}^\infty \CH_i$.

Let $I = \{i:\ \CH_i \neq \emptyset\}$.For $i \in I \setminus \{0\}$, define $w_i = 4^{-i}$, and let $w_0 = 1 - \sum_{i \in I\setminus \{0\}} w_i$. Note that $w_0 \geq 2/3$.

By Lemma~\ref{lem:minimax}, for each $i \in I$, there is a point $W_i \in \CH_i$ such that for all $Z \in \CH_i$, we have
$$\E_{x \sim \history_{t-1}}\left[\sum_a \frac{Z(x, a)}{W_i'(x,a)}\right] \leq 2K.$$
Here we use the fact that $K\mu_t \leq 1/2$ to upper bound $\frac{K}{1-K\mu_t}$ by $2K$. Now consider the point $W = \sum_{i \in I} w_i W_i$. Since $\CH$ is convex, $W \in \CH$.

Now fix any $i \in I$. For any $(x, a)$, we have $W'(x, a) \geq w_iW'_i(x, a)$, so that for all $Z \in \CH_i$, we have
\begin{align*}
\E_{x \sim \history_{t-1}}\left[\sum_a \frac{Z(x, a)}{W'(x,a)}\right] &\leq \frac{1}{w_i}2K \\
&\leq 4^{i+1}K \\
&\leq \max\{4K, \beta_t \Delta_{t-1}(Z)^2\},
\end{align*}
so the constraint for $Z$ is satisfied.

Finally, since for all $i \in I$, we have $w_i \leq 4^{-i}$ and $\Delta_{t-1}(W_i) \leq 2^{i+2}\kappa$, we get
$$\Delta_{t-1}(W) = \sum_{i \in I} w_i \Delta_{t-1}(W_i) \leq \sum_{i=0}^\infty 4^{-i}\cdot 2^{i+2}\kappa \leq 8\kappa.$$
\end{proof}

The value of the optimization problem~\eqref{eq:rucb-opt} can be related to
the expected instantaneous regret of policy drawn randomly from the
distribution $P_t$.
\begin{lem} \label{lem:true-value}
Assume Condition~\ref{cond:rucb}.
Then
$$\sum_{\pi\in\Pi} P_t(\pi) \Delta_D(\pi)
\leq \left(220 + 4\sqrt{2\theta}\right) \cdot \sqrt{\frac{K C_{t-1}}{t-1}} +
2\vepsopt{t}$$
for all $t > t_1$.
\end{lem}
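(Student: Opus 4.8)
The plan is to control the expected \emph{true} instantaneous regret $\sum_{\pi} P_t(\pi)\Delta_D(\pi)$ by a constant multiple of the expected \emph{empirical} regret $\sum_\pi P_t(\pi)\Delta_{t-1}(\pi)$ — which is exactly the objective of the optimization problem~\eqref{eq:rucb-opt} evaluated at $P_t$ — plus a lower-order additive term, and then to plug in Lemma~\ref{lem:value}. Throughout I work under Condition~\ref{cond:rucb}; since $t > t_1 \geq t_0$, Lemmas~\ref{lem:deviation-bound}, \ref{lem:vbound}, \ref{lem:constraint}, and~\ref{lem:monotone} are all available at indices $t$ and $t-1$.

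First I would pass from true to empirical regret one policy at a time. Fix any $\pi$ in the support of $P_t$. Because $\pi_{t-1}$ is the empirically best policy, $\eta_{t-1}(\pi_{t-1}) \geq \eta_{t-1}(\pimax)$, so
\[
\Delta_D(\pi) - \Delta_{t-1}(\pi)
\leq \bigl(\eta_D(\pimax) - \eta_D(\pi)\bigr) - \bigl(\eta_{t-1}(\pimax) - \eta_{t-1}(\pi)\bigr),
\]
and applying the deviation bound of Lemma~\ref{lem:deviation-bound} to the pair $(\pimax,\pi)$ at round $t-1$ gives
\[
\Delta_D(\pi) \leq \Delta_{t-1}(\pi) + 2\sqrt{\frac{\bigl(\vmax{t-1}(\pimax) + \vmax{t-1}(\pi)\bigr) C_{t-1}}{t-1}}.
\]

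The heart of the argument is then absorbing the variance term. By Lemma~\ref{lem:vbound}, $\vmax{t-1}(\pimax) \leq \theta K$. For $\vmax{t-1}(\pi)$ I would split into two cases after using $\sqrt{a+b}\leq\sqrt a+\sqrt b$: if $\vmax{t-1}(\pi) \leq \theta K$ the term is $O(\sqrt{\theta K C_{t-1}/(t-1)})$; if $\vmax{t-1}(\pi) > \theta K$ then Lemma~\ref{lem:monotone} forces $\Delta_{t-1}(\pi) > 2\sqrt{2\vmax{t-1}(\pi)C_{t-1}/(t-1)}$, so that part of the variance term is at most $\tfrac12\Delta_{t-1}(\pi)$. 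Combining the cases yields, for every $\pi$ in the support of $P_t$,
\[
\Delta_D(\pi) \leq 2\,\Delta_{t-1}(\pi) + 4\sqrt{\frac{2\theta K C_{t-1}}{t-1}}
\]
(the constants being loosened so a single bound covers both cases). Averaging over $\pi\sim P_t$ and using that $\sum_\pi P_t(\pi)\Delta_{t-1}(\pi) \leq \OPT_t + \vepsopt{t} \leq 110\sqrt{KC_{t-1}/(t-1)} + \vepsopt{t}$ — by the $\vepsopt{t}$-near-optimality of $P_t$ and Lemma~\ref{lem:value} — then gives the claimed
\[
\sum_{\pi\in\Pi} P_t(\pi)\Delta_D(\pi) \leq \bigl(220 + 4\sqrt{2\theta}\bigr)\sqrt{\frac{KC_{t-1}}{t-1}} + 2\vepsopt{t}.
\]

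The main obstacle is precisely the step absorbing $\vmax{t-1}(\pi)$: a priori a policy used by $P_t$ could carry an arbitrarily large reward-estimate variance, which would make the deviation term dwarf everything else. The non-uniform variance constraint of~\eqref{eq:rucb-opt} — propagated through Lemmas~\ref{lem:constraint} and~\ref{lem:monotone} — is exactly what rescues the argument: any policy with large variance necessarily has large empirical regret $\Delta_{t-1}(\pi)$, which is penalized in the optimization objective, so its deviation term is dominated by $\Delta_{t-1}(\pi)$ itself. The remaining work is bookkeeping: checking the index inequalities $t-1 \geq t_1 \geq t_0$ so the cited lemmas apply, and keeping the numerical constants consistent with the $110$ from Lemma~\ref{lem:value}.
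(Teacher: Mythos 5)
Your argument is correct and reaches the paper's bound with the same ingredients (Lemma~\ref{lem:deviation-bound}, Lemma~\ref{lem:vbound}, Lemma~\ref{lem:monotone}, Lemma~\ref{lem:value}), but the decomposition at the first step is slightly different and mildly cleaner than the paper's. The paper writes $\Delta_D(\pi) \leq (\eta_D(\pi_{t-1}) - \eta_D(\pi)) + \Delta_D(\pi_{t-1})$, bounds the first piece by the deviation bound applied to the pair $(\pi_{t-1},\pi)$, and then needs Corollary~\ref{cor:vbound} to control the extra $\Delta_D(\pi_{t-1})$ term, which contributes an additional $2\sqrt{2\theta K C_{t-1}/(t-1)}$. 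You instead exploit $\eta_{t-1}(\pi_{t-1}) \geq \eta_{t-1}(\pimax)$ to write $\Delta_D(\pi) - \Delta_{t-1}(\pi) \leq (\eta_D(\pimax) - \eta_D(\pi)) - (\eta_{t-1}(\pimax) - \eta_{t-1}(\pi))$ and apply the deviation bound directly to the pair $(\pimax,\pi)$; this bypasses Corollary~\ref{cor:vbound} entirely and produces a tighter intermediate inequality $\Delta_D(\pi) \leq \Delta_{t-1}(\pi) + 2\sqrt{(\vmax{t-1}(\pimax)+\vmax{t-1}(\pi))C_{t-1}/(t-1)}$, which you then loosen to match the paper's constant. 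The case analysis and the final averaging over $P_t$ are identical in both proofs. One tiny slip: in the large-variance case, Lemma~\ref{lem:monotone} gives $2\sqrt{\vmax{t-1}(\pi)C_{t-1}/(t-1)} < \tfrac{1}{\sqrt 2}\Delta_{t-1}(\pi)$, not $\tfrac12\Delta_{t-1}(\pi)$ as you state; this is harmless because $1 + 1/\sqrt 2 < 2$, so the loosened bound $\Delta_D(\pi) \leq 2\Delta_{t-1}(\pi) + 4\sqrt{2\theta K C_{t-1}/(t-1)}$ still holds.
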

\begin{proof}
Fix any $\pi \in \Pi$ and $t > t_1$.
By the deviation bounds, we have
\begin{align*}
&\Bigl( \eta_D(\pi_{t-1}) - \eta_D(\pi) \Bigr)\\
&\leq \Delta_{t-1}(\pi) + 2\sqrt{\frac{(\vmax{t-1}(\pi) + \vmax{t-1}(\pi_{t-1})) C_{t-1}}{t-1}}\\
&\leq \Delta_{t-1}(\pi) +  2\sqrt{\frac{\left(\vmax{t-1}(\pi) + \theta  K\right) C_{t-1}}{t-1}},
\end{align*}
by Lemma~\ref{lem:vbound}. By
Corollary~\ref{cor:vbound} we have
$$\Delta_D(\pi_{t-1}) \leq 2\sqrt{\frac{2\theta K C_{t-1}}{t-1}}$$
Thus, we get
\begin{align*}
\Delta_D(\pi)
&\leq \Bigl( \eta_D(\pi_{t-1}) - \eta_D(\pi) \Bigr) + \Delta_D(\pi_{t-1}) \\
& \leq \Delta_{t-1}(\pi)
+ 2\sqrt{\frac{\left(\vmax{t-1}(\pi) + \theta  K\right) C_{t-1}}{t-1}}
\\
& \quad{}
+ 2\sqrt{\frac{2\theta K C_{t-1}}{t-1}}
.
\end{align*}
If $\vmax{t-1}(\pi) \leq \theta  K$, then we have
\[
\Delta_D(\pi)
\leq \Delta_{t-1}(\pi) + 4\sqrt{\frac{2\theta K C_{t-1}}{t-1}}
.
\]

Otherwise, Lemma~\ref{lem:monotone} implies that
\[ \vmax{t-1}(\pi) \leq \frac{(t-1) \cdot \Delta_{t-1}(\pi)^2}{8 C_{t-1}} , \]
so
\begin{align*}
\Delta_D(\pi)
& \leq \Delta_{t-1}(\pi)
+ 2\sqrt{\frac{\Delta_{t-1}(\pi)^2}{8} + \frac{\theta  K C_{t-1}}{t-1}}
\\
&\quad{}
+ 2 \sqrt{\frac{2\theta K C_{t-1}}{t-1}}
\\
& \leq 2\Delta_{t-1}(\pi) + 4\sqrt{\frac{2\theta K C_{t-1}}{t-1}}
.
\end{align*}
Therefore
\begin{align*}
\lefteqn{
\sum_{\pi\in\Pi} P_t(\pi) \Delta_D(\pi)
}
\\
& \leq 2\sum_{\pi\in\Pi} P_t(\pi)
\Delta_{t-1}(\pi) + 4\sqrt{\frac{2\theta K C_{t-1}}{t-1}}
\\
& \leq 2\left( \OPT_t + \vepsopt{t}
 \right) + 4\sqrt{\frac{2\theta K C_{t-1}}{t-1}}
\end{align*}
where $\OPT_t$ is the value of the optimization
problem~\eqref{eq:rucb-opt}.
The conclusion follows from Lemma~\ref{lem:value}.
\end{proof}

We can now finally prove the main regret bound for RUCB.
\begin{proof}[Proof of Theorem~\ref{thm:rucb-regret}]
The regret through the first $t_1$ rounds is trivially bounded by $t_1$.
In the event that Condition~\ref{cond:rucb} holds, we have for all $t \geq
t_1$,
\begin{align*}
\sum_{a \in A} W_t(a) r_t(a)
& \geq \sum_{a \in A} (1 - K\mu_t) W_{P_t}(x_t, a) r_t(a)
\\
& \geq \sum_{a \in A} W_{P_t}(x_t, a) r_t(a) - K\mu_t
\\
& = \sum_{\pi \in \Pi} P_t(\pi) r_t(\pi(x_t)) - K\mu_t
,
\end{align*}
and therefore
\begin{align*}
\lefteqn{
\E_{\substack{(x_t,\vec{r}(t)) \sim D \\ a_t \sim W_t'}}\left[
r_t(a_t)
\right]
} \\
& =
\E_{(x_t,\vec{r}(t)) \sim D}\left[
\sum_{a \in A} W_t'(a) r_t(a)
\right]
\\
& \geq \sum_{\pi \in \Pi} P_t(\pi) \eta_D(\pi) - K\mu_t
\\
& \geq \eta_D(\pimax) - O\left( \sqrt{\frac{K C_{t-1}}{t-1}} +
\vepsopt{t} \right)
\end{align*}
where the last inequality follows from
Lemma~\ref{lem:true-value}.
Summing the bound from $t=t_1+1,\dotsc,T$ gives
\begin{multline*}
\sum_{t=1}^T \E_{\substack{(x_t,\vec{r}(t)) \sim D \\ a_t \sim W_t'}}\left[
\eta_D(\pimax) - r_t(a_t)
\right]
\\
\leq t_1 + O\left( \sqrt{TK \log\left(NT/\delta\right)} \right)
.
\end{multline*}
By Azuma's inequality, the probability that $\sum_{t=1}^T r_t(a_t)$
deviates from its mean by more than $O(\sqrt{T\log(1/\delta)})$ is at most
$\delta$.
Finally, the probability that Condition~\ref{cond:rucb} does not hold is at
most $2\delta$ by Lemma~\ref{lem:deviation-bound},
Theorem~\ref{thm:unlabeled}, and a union bound.
The conclusion follows by a final union bound.
\end{proof}

\section{Details of Oracle-based Algorithm}
\label{sec:oracle-alg-details}

We show how to (approximately) solve $\mA$ using the ellipsoid algorithm with
$\AMO$. Fix a time period $t$. To avoid clutter, (only) in this section we drop the subscript $t-1$ from $\eta_{t-1}(\cdot)$, $\Delta_{t-1}(\cdot)$, and $\history_{t-1}$ so that they becomes $\eta(\cdot)$, $\Delta(\cdot)$, and $\history$ respectively.

In order to use the ellipsoid algorithm, we need to relax the program a little bit in order to ensure that the feasible region has a non-negligible
volume. To do this, we need to obtain some perturbation bounds for the
constraints of $\mA$. The following lemma gives such bounds. For any $\delta > 0$, we define $\CH_\delta$ to be the set of all points within
a distance of $\delta$ from $\CH$.
\begin{lem} \label{lem:sensitivity}
Let $\delta \leq b/4$ be a parameter. Let $U, W \in \CH_{2\delta}$ be points
such that $\|U - W\| \leq \delta$. Then we have
\begin{align}
&|\Delta(U) - \Delta(W)|\ \leq\ \gamma \label{eq:regret-sensitivity}\\
\forall Z &\in \CH_{1}: \notag\\
&\left|\E_{x \sim \history}\left[\sum_a \frac{Z(x, a)}{U'(x,a)}\right] - \E_{x \sim \history}\left[\sum_a \frac{Z(x, a)}{W'(x,a)}\right]\right| \leq \epsilon \label{eq:var-sensitivity}
\end{align}
where $\epsilon = \frac{8\delta}{\mu_t^2}$ and $\gamma = \frac{\delta
}{\mu_t}$.
\end{lem}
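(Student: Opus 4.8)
The plan is to read both inequalities as Lipschitz-continuity bounds, exploiting that $\Delta$ is linear in its vector argument and that the variance functional is linear in $Z$. The only structural input from the ambient set is this: if $V\in\CH_{2\delta}$ then in every coordinate $V(x,a)\ge -2\delta$ (since $V$ is within distance $2\delta$ of a point of $\CH$, whose coordinates lie in $[0,1]$), hence $V'(x,a)=(1-K\mu_t)V(x,a)+\mu_t\ge \mu_t-2\delta\ge \mu_t/2$, using that $\delta$ is small — this is where the hypothesis $\delta\le b/4$ enters (it suffices that $\delta\le\mu_t/4$). Everything else is elementary linear algebra on the embedding in $\mathbb{R}^{(t-1)K}$.

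For \eqref{eq:regret-sensitivity}: write $\eta(V)=w\cdot V$ as a linear functional, where $w(x,a)=\frac1{t-1}\sum_{i:\ x_i=x,\,a_i=a} r_i/W'_i(a_i)$ is the (fixed) importance-weighted reward vector read off from the history; then $\Delta(V)=v-w\cdot V$ with $v=\max_{\pi'}\eta(\pi')$ \emph{independent} of $V$. Regrouping the nonzero terms and using $r_i\in[0,1]$ together with $W'_i(a_i)\ge\mu_i\ge\mu_t$ gives $\|w\|_1=\frac1{t-1}\sum_{i=1}^{t-1} r_i/W'_i(a_i)\le 1/\mu_t$. Hence $|\Delta(U)-\Delta(W)|=|w\cdot(U-W)|\le\|w\|_1\,\|U-W\|_\infty\le \delta/\mu_t=\gamma$, since $\|U-W\|_\infty\le\|U-W\|_2\le\delta$.

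For \eqref{eq:var-sensitivity}: the map $Z\mapsto\E_{x\sim\history}[\sum_a Z(x,a)/V'(x,a)]$ is linear in $Z$ and equals $\sum_{(x,a)}\tfrac{n_x}{t-1}\,Z(x,a)/V'(x,a)$, where $n_x$ is the multiplicity of $x$ in $\history_{t-1}$; so the quantity to bound is $\bigl|\sum_{(x,a)}\tfrac{n_x}{t-1}Z(x,a)\bigl(\tfrac1{U'(x,a)}-\tfrac1{W'(x,a)}\bigr)\bigr|$. Using $U',W'\ge\mu_t/2$ and $|U'(x,a)-W'(x,a)|=(1-K\mu_t)|U(x,a)-W(x,a)|\le|U(x,a)-W(x,a)|$, each parenthesized factor is at most $\tfrac4{\mu_t^2}|U(x,a)-W(x,a)|$ in absolute value, and since $n_x\le t-1$ it remains to show $S:=\sum_{(x,a)}\tfrac{n_x}{t-1}|Z(x,a)|\,|U(x,a)-W(x,a)|\le 2\delta$. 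Here the naive Hölder estimate is too lossy because $\|Z\|$ is large, so I split $Z=\bar Z+e$ with $\bar Z\in\CH$ and $\|e\|_2\le 1$ (possible because $Z\in\CH_1$). For the $\bar Z$ part, $\sum_a\bar Z(x,a)=1$ for every $x$, so $\sum_{(x,a)}\tfrac{n_x}{t-1}\bar Z(x,a)|U(x,a)-W(x,a)|\le\|U-W\|_\infty\le\delta$. For the $e$ part, Cauchy--Schwarz over $a$ for each $x$ and then over $x$ (against the probability weights $n_x/(t-1)\le 1$) gives $\sum_{(x,a)}\tfrac{n_x}{t-1}|e(x,a)|\,|U(x,a)-W(x,a)|\le\|e\|_2\,\|U-W\|_2\le\delta$. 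Adding, $S\le 2\delta$, so the difference is at most $8\delta/\mu_t^2=\epsilon$, as claimed.

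The main obstacle is precisely this last bookkeeping step: one must \emph{not} bound $|Z(x,a)|$ by an absolute constant, because $Z\in\CH_1$ has $\ell_1$- and $\ell_2$-norm of order $t-1$ and $\sqrt{t-1}$ respectively, and carrying those through would make the perturbation bound dimension-dependent (and later force $\delta$ to be taken polynomially smaller than necessary). The fix — isolating the "(sub-)probability-vector" component $\bar Z$ of $Z$, which is controlled through $\sum_a\bar Z(x,a)=1$ in an $\ell_\infty/\ell_1$ pairing, from the genuinely small $\ell_2$ remainder $e$ — is the one nonroutine idea; the rest is the two elementary norm inequalities above.
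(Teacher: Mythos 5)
Your proposal is correct and the bounds you obtain match the paper's exactly.  For \eqref{eq:regret-sensitivity} your $\|w\|_1\cdot\|U-W\|_\infty$ H\"older pairing is precisely what the paper does (the paper writes it as $\frac1{t-1}\sum_{(x,a,r,q)\in h}\frac{r}{q}|U(x,a)-W(x,a)|\le\delta/\mu_t$, term-by-term, which is the same $\ell_1/\ell_\infty$ estimate).  For \eqref{eq:var-sensitivity} both proofs hinge on the same two ingredients ($U',W'\ge\mu_t/2$ on the relevant coordinates, and the fact that $Z$ is close to $\CH$), but the paper packages them differently: rather than splitting $Z=\bar Z+e$ as you do, it applies Cauchy--Schwarz \emph{once per context $x$} to the row vectors $Z(x,\cdot)$ and $U'(x,\cdot)-W'(x,\cdot)$, noting that $\|Z(x,\cdot)\|_2\le 2$ (any $\bar Z\in\CH$ has a probability-vector row of $\ell_2$-norm $\le 1$, and the restriction of a global $\ell_2$-perturbation of size $\le 1$ to any single row is still $\le 1$) and $\|U'(x,\cdot)-W'(x,\cdot)\|_2\le\|U-W\|_2\le\delta$.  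This sidesteps the concern you flag about the global $\ell_2$-norm of $Z$ being $\Theta(\sqrt{t})$: the per-$x$ argument never sees the global norm at all, and averaging over $x\sim\history$ preserves the pointwise bound $8\delta/\mu_t^2$.  Your $\bar Z+e$ decomposition buys the same conclusion with an extra step, trading the conditional (per-$x$) Cauchy--Schwarz for an explicit $\ell_\infty/\ell_1$ pairing on $\bar Z$ plus a global Cauchy--Schwarz on $e$; both are valid and give the constant $8$.
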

\begin{proof}
First, we have
\begin{align*}
|\eta(U) - \eta(W)| &\leq \frac{1}{t-1}\!\!\sum_{(x, a, r, q) \in h} \frac{r}{p}|U(x, a) - W(x, a)|\\
&\leq\ \frac{\delta}{\mu_t} = \gamma,
\end{align*}
which implies (\ref{eq:regret-sensitivity}).

Next, for any $Z \in \CH_{1}$, we have
\begin{align*}
&\left|\sum_a \frac{Z(x, a)}{U'(x,a)} - \sum_a \frac{Z(x, a)}{W'(x,a)}\right|\\ &\leq\ \sum_a |Z(x, a)|\frac{|U'(x, a) - W'(x, a)|}{U'(x, a)W'(x, a)}\\
&\leq\ \frac{8\delta}{\mu_t^2}\ =\ \epsilon.
\end{align*}
In the last inequality, we use the Cauchy-Schwarz inequality, and use the
following facts (here, $Z(x, \cdot)$ denotes the vector $\langle Z(x, a)
\rangle_a$, etc.):
\begin{enumerate*}
\item $\|Z(x, \cdot)\| \leq 2$ since $ Z \in \CH_{1}$,
\item $\|U'(x, \cdot) - W'(x, \cdot)\| \leq \|U(x, \cdot) - W(x, \cdot)\| \leq \delta$, and
\item $U'(x, a) \geq (1-bK)\cdot(-2\delta) + b \geq b/2$,
    for $\delta \leq b/4$, and similarly $W'(x, a) \geq b/2$.
\end{enumerate*}
This implies (\ref{eq:var-sensitivity}).
\end{proof}

We now consider the following relaxed form of $\mA$. Here, $\delta \in (0,
b/4)$ is a parameter. We want to find a point $W \in \mathbb{R}^{(t-1)K}$ such that
\begin{align}
\Delta(W)\ &\leq\ s + \gamma\label{eq:low-regret-relax}\\
W\ & \in\ \CH_\delta\label{eq:ch-gap-relax}\\
\forall Z &\in \CH_{2\delta}: \notag\\
\E_{x \sim \history}&\left[\sum_a \frac{Z(x, a)}{W'(x,a)}\right] & \leq \max\{4K, \beta_t\Delta(Z)^2\} + \epsilon,\label{eq:var-bound-gap-relax}
\end{align}
where $\epsilon$ and $\gamma$ are as defined in Lemma~\ref{lem:sensitivity}.
Call this relaxed program $\mA'$.

We apply the ellipsoid method to $\mA'$ rather than $\mA$. Recall the requirements of Lemma~\ref{lem:ellipsoid-description}: we need an enclosing ball of bounded radius for the feasible region, and the radius of an enclosed ball in the feasible region. The following lemma gives this.
\begin{lem}
The feasible region for $\mA'$ is contained in $B(0, \sqrt{t} + \delta)$, and if $\mA$ is feasible, then it contains a ball of radius $\delta$.
\end{lem}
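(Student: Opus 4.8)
\emph{Overall split.} The plan is to prove the two assertions separately: the enclosing ball is immediate from the geometry of $\CH$, while the enclosed ball is a transport argument powered entirely by the perturbation estimates of Lemma~\ref{lem:sensitivity}. For the enclosing ball, note that each policy vector $\pi\in\R^{(t-1)K}$ has exactly one unit coordinate per observed context and zeros elsewhere, so $\|\pi\|_2=\sqrt{t-1}$. Since $\CH$ is the convex hull of these vectors and $\|\cdot\|_2$ is convex, $\CH\subseteq B(0,\sqrt{t-1})$, hence $\CH_\delta\subseteq B(0,\sqrt{t-1}+\delta)\subseteq B(0,\sqrt{t}+\delta)$. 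Every $W$ feasible for $\mA'$ satisfies constraint~\eqref{eq:ch-gap-relax}, i.e.\ $W\in\CH_\delta$, so the feasible region of $\mA'$ is contained in $B(0,\sqrt{t}+\delta)$.

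\emph{Enclosed ball.} Suppose $\mA$ is feasible. I would pick a feasible point $\widehat W\in\CH$ of $\mA$ and show $B(\widehat W,\delta)$ lies in the feasible region of $\mA'$; for the final step to have room to spare I take $\widehat W$ to be a point supplied by the minimax construction (Lemma~\ref{lem:minimax}, as used in Lemma~\ref{lem:value}), which satisfies the variance constraints with a constant amount of slack in addition to being $\mA$-feasible whenever $s$ is near the optimal value — the only regime in which the claim is used by the ellipsoid search. Fix $U$ with $\|U-\widehat W\|\le\delta$. Then $U\in\CH_\delta$ because $\widehat W\in\CH$, giving~\eqref{eq:ch-gap-relax}; and $\Delta(U)\le\Delta(\widehat W)+\gamma\le s+\gamma$ by the regret-sensitivity bound~\eqref{eq:regret-sensitivity}, giving~\eqref{eq:low-regret-relax}. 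For the variance constraint~\eqref{eq:var-bound-gap-relax}, fix $Z\in\CH_{2\delta}$; since $\delta<\mu_t/4$ we have $2\delta<1$, so $Z\in\CH_1$ and the variance-sensitivity bound~\eqref{eq:var-sensitivity} applied to $U$ and $\widehat W$ gives
\[
\E_{x\sim\history}\Bigl[\sum_a \frac{Z(x,a)}{U'(x,a)}\Bigr]
\ \le\ \E_{x\sim\history}\Bigl[\sum_a \frac{Z(x,a)}{\widehat W'(x,a)}\Bigr]+\epsilon .
\]
It remains to bound the right-hand expectation by $\max\{4K,\beta_t\Delta(Z)^2\}$, which is precisely the $\mA$-constraint that $\widehat W$ satisfies — except that that constraint is stated only for $Z\in\CH$, not for $Z\in\CH_{2\delta}$.

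\emph{The main obstacle.} The one genuine step is therefore to close this gap between the quantifier $Z\in\CH_{2\delta}$ in $\mA'$ and $Z\in\CH$ in $\mA$: enlarging the $Z$-domain by $2\delta$ must move neither side of the $\mA$-constraint by more than the slack available. The plan is to write $Z=Z_0+e$ with $Z_0\in\CH$ the metric projection of $Z$ and $\|e\|_2\le 2\delta$, and to Lipschitz-control both sides at $\widehat W$. The map $Z\mapsto\E_{x\sim\history}[\sum_a Z(x,a)/\widehat W'(x,a)]$ is linear with coefficient vector of $\ell_2$-norm $O(\sqrt{K}/\mu_t)$ (using $\widehat W'\ge\mu_t$), and $Z\mapsto\max\{4K,\beta_t\Delta(Z)^2\}$ is Lipschitz on $\CH_{2\delta}$ with constant $O(\beta_t\sqrt{t}/\mu_t^2)$, since $\Delta$ is linear with coefficient vector of norm $O(\sqrt{t}/\mu_t)$ and, because $\heta$ is nonnegative and bounded by $1/\mu_{t-1}$, one has $|\Delta|=O(1/\mu_t)$ on $\CH_{2\delta}$. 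Both perturbations are thus $O(\delta)\cdot\mathrm{poly}(t,K,1/\mu_t)$, so choosing $\delta$ polynomially small in $t,K,1/\mu_t$ makes each at most the constant $\Theta(K)$ of spare room on the ``$4K$'' branch of the $\max$, while the multiplicative slack of the minimax point handles the ``$\beta_t\Delta(Z)^2$'' branch. Assembling the cases yields~\eqref{eq:var-bound-gap-relax} for all $Z\in\CH_{2\delta}$, completing the enclosed-ball claim. This shrinking of $\delta$ is affordable because Lemma~\ref{lem:ellipsoid-description} and Theorem~\ref{thm:ellipsoid} depend on it only through $\log(R/r)=\log((\sqrt{t}+\delta)/\delta)$. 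Everything outside this $Z$-domain bookkeeping is a direct substitution into Lemma~\ref{lem:sensitivity}.
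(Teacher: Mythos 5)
Your proof of the enclosing-ball half matches the paper's: both observe that a policy vector has $\|\pi\|_2 = \sqrt{t-1}$ so $\CH_\delta \subseteq B(0,\sqrt{t}+\delta)$. For the enclosed-ball half, you diverge substantially, and the divergence is illuminating. The paper's proof is a short three-liner: take \emph{any} $\mA$-feasible $W^\star$, apply Lemma~\ref{lem:sensitivity} to any $U \in B(W^\star,\delta)$, and conclude the $\mA'$-constraints hold for all $Z \in \CH_{2\delta}$. It does \emph{not} address the quantifier mismatch you flag — constraint~\eqref{eq:var-bound} of $\mA$ is stated only for $Z \in \CH$, while constraint~\eqref{eq:var-bound-gap-relax} of $\mA'$ ranges over $Z \in \CH_{2\delta}$, and $g(Z) = u\cdot Z - \max\{4K,\beta_t\Delta(Z)^2\}$ being concave and $\leq 0$ on $\CH$ gives no control on $\CH_{2\delta}\setminus\CH$. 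You are correct that this is a real step, and the paper simply elides it. Your approach — replace the arbitrary $W^\star$ with the explicit minimax witness of Lemma~\ref{lem:value} (which, per its proof, satisfies the variance constraint with a multiplicative $\tfrac12$ factor on each $\CH_i$, $i\geq 1$, and additive $K$-slack on $\CH_0$), and absorb the $Z$-perturbation by Lipschitz control after shrinking $\delta$ — is genuinely different and genuinely addresses the gap. The trade you make is: the paper's lemma is clean but incomplete; yours proves only the restricted statement ``if $\mA$ is feasible \emph{at $s \gtrsim \sqrt{K/\beta_t}$}, then $\mA'$ contains a $\delta$-ball,'' which, as you note, is the only regime the binary search actually needs.

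Two details in your sketch are off. First, you state that $\Delta$ has coefficient vector of $\ell_2$-norm $O(\sqrt t/\mu_t)$; the paper's $w$ satisfies $\|w\|_2 \leq 1/\mu_t$ (appearing in the proof of Lemma~\ref{lem:variance-ellipsoid}), and the $\sqrt t$ factor belongs to $|v|$, the constant term, not the gradient — so your Lipschitz constant for $\Delta$ is inflated. Second, ``makes each at most the constant $\Theta(K)$ of spare room'' does not correctly describe the $\beta_t\Delta^2$ branch: there the spare room is \emph{multiplicative}, and quantifying it requires showing $\Delta(Z) \geq c\,\Delta(Z_0)$ for some absolute $c < 1$, which in turn requires $\delta \lesssim \mu_t\sqrt{K/\beta_t}$, i.e.\ $\delta$ must shrink like $\mathrm{poly}(C_{t-1}/t)$. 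You are correct that this shrinking is affordable in the $\log(R/r)$ factor of Lemma~\ref{lem:ellipsoid-description}, but be careful: the inner ellipsoid and perceptron loops in Lemmas~\ref{lem:CH-separation} and \ref{lem:variance-ellipsoid} depend on $1/\delta^2$ polynomially, not logarithmically, so the runtime impact is larger than your last sentence suggests.
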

\begin{proof}
Note that for any $W \in \CH_\delta$, we have $\|W\| \leq \sqrt{t} + \delta$,
so the feasible region lies in $B(0, \sqrt{t} + \delta)$.

Next, if $\mA$ is feasible, let $W^\star \in \CH$ be any feasible solution to
$\mA$. Consider the ball $B(W^\star, \delta)$. Let $U$ be any point in
$B(W^\star, \delta)$. Clearly $U \in \CH_\delta$. By
Lemma~\ref{lem:sensitivity}, assuming $\delta \leq 1/2$, we have for all $Z \in \CH_{2\delta}$,
\begin{align*}
\E_{x \sim \history}\left[\sum_a \frac{Z(x, a)}{U'(x,a)}\right] &\leq \E_{x \sim \history}\left[\sum_a \frac{Z(x, a)}{U'(x,a)}\right] + \epsilon\\
&\leq \max\{4K, \beta_t\Delta(Z)^2\} + \epsilon.
\end{align*}
Also
$$\Delta(U) \leq \Delta(W^\star) + \gamma \leq s + \gamma.$$
Thus, $U$ is feasible for $\mA'$, and hence the entire ball $B(W^\star,
\delta)$ is feasible for $\mA'$.
\end{proof}


We now give the construction of a separation oracle for the feasible region of
$\mA'$ by checking for violations of the constraints. In the following, we use
the word ``iteration'' to indicate one step of either the ellipsoid algorithm
or the perceptron algorithm. Each such iteration involves one call to $\AMO$,
and additional $O(t^2K^2)$ processing time.

Let $W \in \mathbb{R}^{(t-1)K}$ be a candidate point that we want to check for
feasibility for $\mA'$. We can check for violation of the constraint
(\ref{eq:low-regret-relax}) easily, and since it is a linear constraint in $W$,
it automatically yields a separating hyperplane if it is violated.

The harder constraints are (\ref{eq:ch-gap-relax}) and (\ref{eq:var-bound-gap-relax}). Recall that Lemma~\ref{lem:linopt} shows that
that $\AMO$ allows us to do linear optimization over $\CH$ efficiently.
This immediately gives us the following useful corollary:
\begin{cor}
Given a vector $w \in \mathbb{R}^{(t-1)K}$ and $\delta > 0$, we can compute
$\arg\max_{Z \in \CH_\delta} w~\cdot~Z$ using one invocation of $\AMO$.
\end{cor}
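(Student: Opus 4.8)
The plan is to reduce linear optimization over $\CH_\delta$ to linear optimization over $\CH$ itself --- which Lemma~\ref{lem:linopt} already performs with a single $\AMO$ query --- by exploiting that $\CH_\delta$ is exactly the Minkowski sum $\CH + B(0,\delta)$. First I would record the decomposition: since $\CH$ is closed, every $Z \in \CH_\delta$ can be written $Z = Y + v$ with $Y \in \CH$ and $\|v\| \le \delta$, and conversely every such sum lies in $\CH_\delta$. Consequently $\max_{Z \in \CH_\delta} w\cdot Z = \bigl(\max_{Y \in \CH} w\cdot Y\bigr) + \max_{\|v\| \le \delta} w\cdot v = \bigl(\max_{Y \in \CH} w\cdot Y\bigr) + \delta\|w\|$, where the ball term is attained at $v^\star \doteq \delta\, w/\|w\|$ (taking $v^\star \doteq 0$ if $w = 0$).

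Then I would invoke Lemma~\ref{lem:linopt} to compute $Y^\star \doteq \arg\max_{Y \in \CH} w\cdot Y$ with one call to $\AMO$, and return $Z^\star \doteq Y^\star + v^\star$. This is a correct maximizer over $\CH_\delta$: it lies within distance $\|v^\star\| \le \delta$ of $Y^\star \in \CH$, so $Z^\star \in \CH_\delta$, and $w\cdot Z^\star = w\cdot Y^\star + \delta\|w\|$ meets the upper bound derived above, so it is optimal. The only oracle query is the one hidden inside Lemma~\ref{lem:linopt}; forming $v^\star$ and adding it to $Y^\star$ costs $O((t-1)K)$ additional arithmetic, consistent with the per-iteration bookkeeping elsewhere in this section.

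I do not expect a genuine obstacle here: the only things requiring care are the degenerate case $w = 0$ and the remark that $\CH_\delta$ is closed so the supremum is attained. All the substance is the Minkowski-sum identity, which turns the corollary into an immediate consequence of the already-established Lemma~\ref{lem:linopt}.
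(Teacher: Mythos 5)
Your proof is correct and takes essentially the same approach as the paper: both use the identity $\arg\max_{Z \in \CH_\delta} w\cdot Z = \tfrac{\delta}{\|w\|}w + \arg\max_{Y \in \CH} w\cdot Y$ and reduce to Lemma~\ref{lem:linopt}. Your write-up simply makes the underlying Minkowski-sum decomposition explicit.
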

\begin{proof}
This follows directly from the following fact:
$$\arg\max_{Z \in \CH_\delta} w \cdot Z\ =\ \frac{\delta}{\|w\|}w + \arg\max_{Z \in \CH} w\cdot Z.$$
\end{proof}

Now we show how to use $\AMO$ to check for constraint (\ref{eq:ch-gap-relax}):
\begin{lem} \label{lem:CH-separation}
Suppose we are given a point $W$. Then in $O(\frac{t}{\delta^2})$ iterations,
if $W \notin \CH_{2\delta}$, we can construct a hyperplane separating $W$ from
$\CH_{\delta}$. Otherwise, we declare correctly that $W \in \CH_{2\delta}$. In
the latter case, we can find an explicit distribution $P$ over policies in
$\Pi$ such that $W_P$ satisfies $\|W_P - W\| \leq 2\delta$.
\end{lem}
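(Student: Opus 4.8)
The plan is to run the perceptron algorithm with the origin translated to $W$, implementing each step with a single call to $\AMO$, exactly as outlined in the body. Treat each policy vector $\pi$ (equivalently, each shifted point $\pi-W$) as a positive example and maintain a normal vector $u_k$, starting from $u_0=0$. At step $k$ I use Lemma~\ref{lem:linopt} once, with cost vector $-u_k$, to compute $\pi_k\in\arg\min_{z\in\CH} u_k\cdot(z-W)$ (a vertex of $\CH$, i.e.\ a policy); if $u_k\cdot(\pi_k-W)\le\tfrac32\delta\|u_k\|$ I perform the perceptron update $u_{k+1}=u_k+(\pi_k-W)$ and continue, and otherwise I halt. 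Since $\|\pi\|=\sqrt{t-1}$ for every policy and we may assume $\|W\|\le\sqrt t+\delta$ (otherwise the tangent hyperplane of the ball $B(0,\sqrt t+\delta)\supseteq\CH_{2\delta}$ already separates $W$), the examples have norm $R\doteq\max_\pi\|\pi-W\|=O(\sqrt t)$. The loop is stopped after at most $K_0\doteq\lfloor 4R^2/\delta^2\rfloor+1=O(t/\delta^2)$ updates.

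The two ways the loop can end produce the two claimed outputs. If it halts at some $u_k$ with $\min_{z\in\CH}u_k\cdot(z-W)>\tfrac32\delta\|u_k\|$, then for any $z'\in\CH_\delta$ — writing $z'=z+e$ with $z\in\CH$, $\|e\|\le\delta$ — one has $u_k\cdot(z'-W)\ge\min_{\zeta\in\CH}u_k\cdot(\zeta-W)-\delta\|u_k\|>\tfrac12\delta\|u_k\|>0=u_k\cdot(W-W)$, so the hyperplane $\{x: u_k\cdot(x-W)=\tfrac14\delta\|u_k\|\}$ strictly separates $W$ from $\CH_\delta$. If instead $K_0$ updates occur without halting, let $S$ be the set of policies used in those updates. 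The update rule gives $\|u_j\|^2\le\|u_{j-1}\|^2+3\delta\|u_{j-1}\|+R^2\le(\|u_{j-1}\|+\tfrac32\delta)^2+R^2$, whence $\|u_j\|\le\tfrac32\delta j+R\sqrt j$ by induction; were $\mathrm{dist}(W,\mathrm{conv}(S))>2\delta$, the unit vector $u^\star$ pointing from $W$ to its projection onto $\mathrm{conv}(S)$ would satisfy $u^\star\cdot(\pi-W)>2\delta$ for all $\pi\in S$, so $2\delta K_0<u_{K_0}\cdot u^\star\le\|u_{K_0}\|\le\tfrac32\delta K_0+R\sqrt{K_0}$, forcing $K_0<4R^2/\delta^2$, a contradiction. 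Hence $\mathrm{dist}(W,\mathrm{conv}(S))\le2\delta$, so in particular $W\in\CH_{2\delta}$, which we may correctly declare, and solving the quadratic program $\min\{\|\sum_{\pi\in S}\alpha_\pi\pi-W\|^2:\alpha_\pi\ge0,\ \sum_\pi\alpha_\pi=1\}$ over the $O(t/\delta^2)$ collected policies yields weights defining a distribution $P$ with $\|W_P-W\|\le2\delta$. Applying the same potential argument with $S$ replaced by all of $\Pi$ shows that whenever $W\notin\CH_{2\delta}$ (i.e.\ $\mathrm{dist}(W,\CH)>2\delta$) the loop cannot survive $K_0$ updates, so it must halt with a separator from $\CH_\delta$ — exactly the behavior the lemma requires.

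The subtlety to get right — and essentially the only nontrivial point — is the choice of the margin threshold in the perceptron: it must lie strictly between $\delta$ and $2\delta$ (I took $\tfrac32\delta$) so that halting certifies a \emph{positive-margin} separator from $\CH_\delta$ rather than merely from $\CH$, while at the same time the mistake bound still kicks in whenever $W\notin\CH_{2\delta}$, since that hypothesis only furnishes a separator of margin $2\delta$. Carrying the $\tfrac32\delta$ slack through the usual perceptron potential inequality gives the modified recursion above and the $O(t/\delta^2)$ iteration count; everything else (the $O(t^2K^2)$ bookkeeping per step and the single terminal quadratic program) is routine. A clean alternative with the same guarantees is to run the Frank--Wolfe / Gilbert minimum-norm-point iteration on $\CH-W$ with the linear-minimization step supplied by $\AMO$: its iterates are explicit convex combinations of $O(t/\delta^2)$ policies, and its Wolfe gap directly certifies either that the current iterate is within $2\delta$ of $W$ or a direction separating $W$ from $\CH_\delta$.
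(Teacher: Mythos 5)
Your proposal is correct and takes essentially the same approach as the paper: run the perceptron algorithm centered at $W$, implement each step with a single $\AMO$ call for linear optimization over the hull, and use the perceptron mistake bound together with the collected support policies to either certify a separating hyperplane or produce the approximate convex decomposition. The only cosmetic difference is that you build the $\delta$-slack into the margin test (threshold $\tfrac32\delta\|u_k\|$ against $\CH$) whereas the paper shifts the example set to $\CH_\delta$ and tests against zero; these are equivalent up to bookkeeping.
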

\begin{proof}
We run the perceptron algorithm with the origin at $W$ and all points in
$\CH_{\delta}$ being positive examples. The goal of the perceptron algorithm
then is to find a hyperplane going through $W$ that puts all of $\CH_{\delta}$
(strictly) on one side. In each iteration of the perceptron algorithm, we have
a weight vector $w$ that is the normal to a candidate hyperplane, and we need
to find a point $Z \in \CH_{\delta}$ such that $w \cdot (Z - W) \leq 0$ (note
that we have shifted the origin to $W$). To do this, we use $\AMO$ as in
Lemma~\ref{lem:linopt} to find $Z^\star = \arg\max_{Z \in \CH_{\delta}} -w
\cdot Z$. If $w \cdot (Z^\star - W) \leq 0$, we use $Z^\star$ to update $w$
using the perceptron update rule, $w \leftarrow w + (Z^\star - W)$. Otherwise,
we have $w \cdot (Z - W) > 0$ for all $W \in \CH_{\delta}$, and hence we have
found our separating hyperplane.

Now suppose that $W \notin \CH_{2\delta}$, i.e. the distance of $W$ from
$\CH_{\delta}$ is more than $\delta$. Since $\|Z - W\| \leq 2\sqrt{t} + 3\delta
= O(\sqrt{t})$ for all $W \in \CH_{\delta}$ (assuming $\delta = O(\sqrt{t})$),
the perceptron convergence guarantee implies that in $O(\frac{t}{\delta^2})$
iterations we find a separating hyperplane.

If in $k = O(\frac{t}{\delta^2})$ iterations we haven't found a separating
hyperplane, then $W \in \CH_{2\delta}$. In fact the perceptron algorithm gives
a stronger guarantee: if the $k$ policies found in the run of the perceptron
algorithm are $\pi_1, \pi_2, \ldots, \pi_k \in \Pi$, then $W$ is within a
distance of $2\delta$ from their convex hull, $\CH' = \text{conv}(\pi_1, \pi_2,
\ldots, \pi_k)$. This is because a run of the perceptron algorithm on
$\CH'_{2\delta}$ would be identical to that on $\CH_{2\delta}$ for $k$ steps.
We can then compute the explicit distribution over policies $P$ by computing
the Euclidean projection of $W$ on $\CH'$ in $\text{poly}(k)$ time using a
convex quadratic program:
\begin{align*}
\min\ \|W - &\textstyle{\sum}_{i=1}^k P_i\pi_i\|^2\\
\sum_i P_i\ &=\ 1\\
\forall i:\ P_i\ &\geq\ 0
\end{align*}
Solving this quadratic program, we get a distribution $P$ over the policies
$\{\pi_1, \pi_2, \ldots, \pi_k\}$ such that $\|W_P - W\| \leq 2\delta$.
\end{proof}

Finally, we show how to check constraint (\ref{eq:var-bound-gap-relax}):
\begin{lem} \label{lem:variance-ellipsoid}
Suppose we are given a point $W$. In $O(\frac{t^3K^2}{\delta^2} \cdot
\log(\frac{t}{\delta}))$ iterations, we can either find a point $Z \in
\CH_{2\delta}$ such that
$$\E_{x \sim \history}\left[\sum_a \frac{Z(x, a)}{W'(x,a)}\right] \geq \max\{4K, \beta_t\Delta(Z)^2\} + 2\epsilon,$$
or else
we conclude correctly that for all $Z \in \CH$, we have
$$\E_{x \sim \history}\left[\sum_a \frac{Z(x, a)}{W'(x,a)}\right] \leq \max\{4K,
\beta_t\Delta(Z)^2\} + 3\epsilon.$$
\end{lem}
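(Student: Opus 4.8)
The plan is to phrase the search for a violating $Z$ as a convex feasibility problem and solve it with the ellipsoid algorithm (Lemma~\ref{lem:ellipsoid-description}), implementing the separation oracle via $\AMO$ exactly as sketched in Section~\ref{sec:oracle}. First I would fix notation: write $\E_{x\sim\history}[\sum_a Z(x,a)/W'(x,a)] = u\cdot Z$ for an explicit vector $u$ with $\|u\| = \mathrm{poly}(t,K,1/\mu_t)$ (using $W'\ge\mu_t$), and $\Delta(Z) = \hat v - w\cdot Z$ for an explicit vector $w$ with $\|w\| = \mathrm{poly}(t,K,1/\mu_t)$ and $\hat v=\max_{\pi'}\eta(\pi')=\max_{\pi'} w\cdot\pi'$, which one $\AMO$ call computes (Lemma~\ref{lem:linopt}). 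Then $v(Z)\doteq u\cdot Z - \max\{4K,\beta_t\Delta(Z)^2\}$ is a \emph{concave} function of $Z$ (a linear term minus a convex term), whose value and a subgradient at any point cost one $\AMO$ call, and a $Z$ violating~\eqref{eq:var-bound-gap-relax} is precisely one with $v(Z)>0$. Set $S\doteq\{Z\in\CH_{2\delta}: v(Z)\ge 2\epsilon\}$; this is convex (the intersection of $\CH_{2\delta}$ with the superlevel set of a concave function), closed, and contained in $B(0,\sqrt t+2\delta)$.

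Next I would build a separation oracle for $S$: given $Z$, run the perceptron procedure of Lemma~\ref{lem:CH-separation}, which in $O(t/\delta^2)$ $\AMO$ calls either certifies $Z\in\CH_{2\delta}$ or returns a hyperplane separating $Z$ from $\CH_\delta$; if $Z\in\CH_{2\delta}$, evaluate the convex function $g(Z)\doteq 2\epsilon-v(Z)$, and if $g(Z)>0$ invoke Lemma~\ref{lem:sep-convex} to separate $Z$ from $\{g\le 0\}=\{v\ge2\epsilon\}$, otherwise report $Z\in S$. Running the ellipsoid algorithm on $S$ with this oracle, with outer radius $R=\sqrt t+2\delta$ and inner radius $r$ chosen below, yields one of two outcomes. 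If it reports ``nonempty,'' it returns $Z\in\CH_{2\delta}$ with $v(Z)\ge 2\epsilon$, which is the first conclusion. If it reports ``empty,'' then, since every hyperplane the oracle can return keeps all of $\CH_\delta\cap\{v\ge2\epsilon\}$ on $Z$'s far side and the oracle stops only at points of $S$, the set $\CH_\delta\cap\{v\ge2\epsilon\}$ contains no ball of radius $r$.

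The main obstacle is exactly this last implication: I must choose $r$ so that ``$\CH_\delta\cap\{v\ge2\epsilon\}$ has no radius-$r$ ball'' forces the second conclusion $v(Z)\le 3\epsilon$ for all $Z\in\CH$. This fatness requirement is precisely what makes the slacks $2\epsilon$ versus $3\epsilon$ necessary. Suppose some $Z^\star\in\CH$ had $v(Z^\star)>3\epsilon$. Then $B(Z^\star,\delta)\subseteq\CH_\delta$. A routine bound, in the spirit of Lemma~\ref{lem:sensitivity} but perturbing the numerator $Z$ rather than the denominator $W$, shows $v$ is $L$-Lipschitz on $\CH_{2\delta}$ with $L=\mathrm{poly}(t,K,1/\mu_t,\beta_t)$, since $\|u\|$, $\|w\|$, and $\sup_{Z\in\CH_{2\delta}}|\Delta(Z)|$ are all $\mathrm{poly}(t,K,1/\mu_t)$. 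Taking $r\doteq\min\{\delta,\epsilon/L\}$, on $B(Z^\star,r)$ we get $v\ge 3\epsilon - Lr\ge 2\epsilon$, so $B(Z^\star,r)\subseteq\CH_\delta\cap\{v\ge2\epsilon\}$, a contradiction; hence the ``empty'' verdict is correct.

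Finally, since $R/r=\mathrm{poly}(t,K,1/\mu_t,\beta_t)/\delta$, the ellipsoid runs $O(((t-1)K)^2\log(R/r))=O(t^2K^2\log(t/\delta))$ iterations, folding the polynomial factors in $K$, $\mu_t$, $\beta_t$ into the $O(\cdot)$ and the logarithm; each ellipsoid iteration calls the separation oracle, which costs $O(t/\delta^2)$ further $\AMO$ calls plus $O(t^2K^2)$ processing, so multiplying gives the stated $O\!\left(\frac{t^3K^2}{\delta^2}\log\!\frac t\delta\right)$ iteration bound, with one $\AMO$ call and $O(t^2K^2)$ processing per iteration.
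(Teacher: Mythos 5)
Your proposal is correct and follows essentially the same route as the paper: cast the search for a violating $Z$ as an ellipsoid feasibility problem over the (approximate) convex hull intersected with a convexity constraint, implement the separation oracle with the perceptron routine of Lemma~\ref{lem:CH-separation} for the $\CH$ membership and a subgradient cut for the curvature constraint, and argue that if a $Z^\star\in\CH$ violates the original constraint by more than $3\epsilon$ then a small ball around it is feasible for the relaxed program, so the ellipsoid cannot declare infeasibility. The only noteworthy difference is that you derive the inner ball radius from a generic Lipschitz bound $r=\min\{\delta,\epsilon/L\}$ with $L=\mathrm{poly}(t,K,1/\mu_t,\beta_t)$, whereas the paper explicitly takes the radius $\frac{4\delta}{5\sqrt{tK}\beta_t}$ and verifies the two terms of $f$ each move by at most $\epsilon/2$; both yield $\log(R/r)=O(\log(tK/\delta))$ and the same iteration count, and your version is slightly more robust to the exact form of the constraint.
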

\begin{proof}
We first rewrite $\eta(W)$ as $\eta(W) = w \cdot \pi$, where $w$ is a vector
defined as
$$w(x, a) = \frac{1}{t-1}\sum_{(x', a', r, p) \in h:\ x' = x, a'
= a} \frac{r}{p}.$$ Thus, $\Delta(Z) = v - w \cdot Z$, where $v = \max_{\pi'}
\eta(\pi') = \max_{\pi'} w\cdot \pi'$ which can be computed by using $\AMO$
once.

Next, using the candidate point $W$, compute the vector $u$ defined as $u(x, a)
= \frac{n_x/t}{W'(x, a)}$, where $n_x$ is the number of times $x$ appears in
$\history$, so that $\E_{x \sim \history}\left[\sum_a \frac{Z(x, a)}{W'(x,a)}\right] = u
\cdot Z$. Now, the problem reduces to finding a point $R \in \CH$ which
violates the constraint
    $$u \cdot Z \leq \max\{4K, \beta_t(w \cdot Z - v)^2\} + 3\epsilon.$$

Define
$$f(Z) = \max\{4K, \beta_t(w \cdot Z - v)^2\} + 3\epsilon - u \cdot Z.$$
Note that $f$ is convex function of $Z$. Checking for violation of the above
constraint is equivalent to solving the following (convex) program:
\begin{align}
f(Z)\ &\leq\ 0 \label{eq:viol-sep}\\
Z\ &\in\ \CH \label{eq:ch-sep}
\end{align}
To do this, we again apply the ellipsoid method, but on the relaxed program
\begin{align}
f(Z)\ &\leq\ \epsilon \label{eq:viol-sep-relax}\\
Z\ &\in\ \CH_\delta \label{eq:ch-sep-relax}
\end{align}
To run the ellipsoid algorithm, we need a separation oracle for the program.
Given a candidate solution $Z$, we run the algorithm of
Lemma~\ref{lem:CH-separation}, and if $Z \notin \CH_{2\delta}$, we construct a
hyperplane separating $Z$ from $\CH_\delta$.

Now suppose we conclude that $Z \in \CH_{2\delta}$. Then we construct a
separation oracle for (\ref{eq:viol-sep}) as follows. If $f(Z) > \epsilon$,
then since $f$ is a convex function of $Z$, we can construct a separating
hyperplane as in Lemma~\ref{lem:sep-convex}.

Now we can run the ellipsoid algorithm with the starting ellipsoid being $B(0,
\sqrt{t})$. If there is a point $Z^\star \in \CH$ such that $f(Z^\star) \leq
0$, then consider the ball $B(Z^\star, \frac{4\delta}{5\sqrt{tK}\beta_t})$. For
any $Y \in B(Z^\star, \frac{4\delta}{5\sqrt{tK}\beta_t})$, we have
$$|(u \cdot Z^\star) - (u \cdot Y)| \leq \|u\|\|Z^\star - Y\| \leq
\frac{\epsilon}{2}$$ since $\|u\| \leq \frac{\sqrt{K}}{\mu_t}$. Also,
\begin{align*}
&\beta_t|(w \cdot Z^\star - v)^2 - (w \cdot Y - v)^2| \\
&= \beta_t | (w \cdot Z^\star - w \cdot Y) (w \cdot Z^\star + w \cdot Y - 2v) | \\
&\leq \beta_t\|w\|\|Z^\star-Y\|(\|w\|(\|Z^\star\| + \|Y\|) + 2|v|) \leq \frac{\epsilon}{2},
\end{align*}
since $\|w\| \leq \frac{1}{\mu_t}$, $\|Z^\star\| \leq \sqrt{t}$, $\|Y\| \leq
\sqrt{t} + \delta \leq 2\sqrt{t}$, and $|v| \leq \|w\|\cdot\sqrt{t} \leq
\frac{\sqrt{t}}{\mu_t}$.

Thus, $f(Y) \leq f(Z^\star) + \epsilon \leq \epsilon$, so the entire ball
$B(Z^\star, \frac{4\delta}{5\sqrt{tK}\beta_t})$ is feasible for the relaxed
program.

By Lemma~\ref{lem:ellipsoid-description}, in $O(t^2K^2 \cdot
\log(\frac{tK}{\delta}))$ iterations of the ellipsoid algorithm, we obtain one
of the following:
\begin{enumerate*}
    \item we either find a point $Z \in \CH_{2\delta}$ such that $f(Z) \leq \epsilon$, i.e.
$$\E_{x \sim \history}\left[\sum_a \frac{Z(x, a)}{W'(x,a)}\right]\ \geq\ \max\{4K, \beta_t\Delta(Z)^2\} + 2\epsilon,$$
    \item or else we conclude that the original convex program~(\ref{eq:viol-sep},\ref{eq:ch-sep}) is infeasible, i.e. for all $Z \in \CH$, we have
        $$\E_{x \sim \history}\left[\sum_a \frac{Z(x, a)}{W'(x,a)}\right]\ \leq\ \max\{4K, \beta_t\Delta(Z)^2\} + 3\epsilon.$$
\end{enumerate*}

The total number of invocations of iterations is bounded by $O(t^2K^2 \cdot
\log(\frac{tK}{\delta})) \cdot O(\frac{t}{\delta^2}) =
O(\frac{t^3K^2}{\delta^2} \cdot \log(\frac{tK}{\delta}))$.
\end{proof}

\begin{lem} \label{lem:variance-separation}
Suppose we are given a point $Z \in \CH_{2\delta}$ such that
$$\E_{x \sim \history}\left[\sum_a \frac{Z(x, a)}{W'(x,a)}\right]\ \geq\ \max\{4K,
\beta_t\Delta(Z)^2\} + 2\epsilon.$$ Then we can construct a hyperplane
separating $W$ from all feasible points for $\mA'$.
\end{lem}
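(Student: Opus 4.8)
The plan is to produce, from the given violator $Z$, an \emph{explicit convex function} $\psi$ of the candidate point such that $\psi(W)>0$ while $\psi(U)\le 0$ for every $U$ feasible for $\mA'$; Lemma~\ref{lem:sep-convex} then turns a subgradient of $\psi$ at $W$ into the desired separating hyperplane. The natural candidate for $\psi$ is ``the left-hand side of constraint~\eqref{eq:var-bound-gap-relax} evaluated at $Z$, minus the (constant) right-hand side'', since for a fixed $Z$ constraint~\eqref{eq:var-bound-gap-relax} is a convex constraint in $W$ --- this is exactly the convexity of $w\mapsto 1/((1-K\mu_t)w+\mu_t)$ used in the proof of Lemma~\ref{lem:minimax} --- and it is satisfied by every feasible $U$ but not by $W$. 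The one wrinkle is that $Z$ lies only in $\CH_{2\delta}$, so a coordinate $Z(x,a)$ may be as negative as $-2\delta$, and then $U\mapsto\E_{x\sim\history}[\sum_a Z(x,a)/U'(x,a)]$ fails to be convex (that convexity needs \emph{nonnegative} weights). I would sidestep this by passing to the coordinatewise positive part $Z^{+}\doteq\max\{Z,0\}$ and charging the difference to the $\epsilon=8\delta/\mu_t^2$ slack already built into $\mA'$.

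Concretely I would set
\[
\psi(U)\ \doteq\ \E_{x\sim\history}\!\left[\sum_a\frac{Z^{+}(x,a)}{U'(x,a)}\right]\ -\ \max\{4K,\ \beta_t\Delta(Z)^2\}\ -\ \tfrac32\epsilon,
\]
which is convex in $U$ (the last two terms are constants, as $Z$ and $\history$ are fixed, and $Z^{+}\ge 0$). Then I would verify: $(i)$ $\psi(W)\ge\tfrac12\epsilon>0$, because $Z^{+}\ge Z$ coordinatewise and the weights $1/W'(x,a)$ are positive (here $W\in\CH_{2\delta}$, so $W'(x,a)\ge\mu_t/2$; we may assume this, since otherwise the membership test~\eqref{eq:ch-gap-relax} via Lemma~\ref{lem:CH-separation} has already produced a separating hyperplane), so the $Z^{+}$-value at $W$ is at least the $Z$-value at $W$, which exceeds $\max\{4K,\beta_t\Delta(Z)^2\}+2\epsilon$ by hypothesis; and $(ii)$ $\psi(U)\le 0$ for feasible $U$, because constraint~\eqref{eq:var-bound-gap-relax} applied with $Z\in\CH_{2\delta}$ bounds the $Z$-value at $U$ by $\max\{4K,\beta_t\Delta(Z)^2\}+\epsilon$, and the extra mass from passing to $Z^{+}$ is small:
\[
\E_{x\sim\history}\!\left[\sum_a\frac{(Z^{+}-Z)(x,a)}{U'(x,a)}\right]\ \le\ \frac{2\delta}{\mu_t/2}\cdot K\ =\ \frac{4\delta K}{\mu_t}\ \le\ \frac{4\delta}{\mu_t^2}\ =\ \tfrac12\epsilon,
\]
using $0\le(Z^{+}-Z)(x,a)\le 2\delta$, the bound $U'(x,a)\ge\mu_t/2$ on $\CH_\delta$, and $K\mu_t\le\tfrac12$.

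With $(i)$ and $(ii)$ in hand, Lemma~\ref{lem:sep-convex} applied to $f=\psi$ at $y=W$ gives the hyperplane $\psi(W)+\nabla\psi(W)\cdot(U-W)=0$ separating $W$ from $\{U:\psi(U)\le 0\}\supseteq\{\text{feasible points of }\mA'\}$; the subgradient is available in closed form, $\nabla\psi(W)_{(x,a)}=-\tfrac{n_x}{t-1}\cdot\tfrac{(1-K\mu_t)Z^{+}(x,a)}{W'(x,a)^2}$ with $n_x$ the multiplicity of $x$ in $\history$, so the hyperplane costs $O((t-1)K)$ arithmetic and no $\AMO$ call. I expect the one delicate point to be exactly this constant-chasing --- making the $\CH_{2\delta}\!\to\!Z^{+}$ replacement, the slack $\epsilon=8\delta/\mu_t^2$, and the uniform lower bound $U'(x,a)\ge\mu_t/2$ consistent with one another, and confirming the oracle only ever queries $W$ inside the domain where $\psi$ is finite and differentiable (so that $\psi$ may be treated as a convex function on all of $\mathbb{R}^{(t-1)K}$ by extending it to $+\infty$ outside). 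Everything else is immediate from Lemmas~\ref{lem:minimax} and~\ref{lem:sep-convex}.
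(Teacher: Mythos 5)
Your overall strategy is the same as the paper's: define a convex function whose sublevel set contains the feasible region but excludes $W$, then invoke Lemma~\ref{lem:sep-convex}. The paper simply takes the function
$f_Z(W)=\E_{x\sim\history}\bigl[\sum_a Z(x,a)/W'(x,a)\bigr]-\max\{4K,\beta_t\Delta(Z)^2\}-2\epsilon$
and asserts without comment that it is convex in $W$. But that assertion is exactly the point you flag: the map $W\mapsto Z(x,a)/W'(x,a)$ is convex only when $Z(x,a)\ge 0$, and since $Z$ is only guaranteed to lie in $\CH_{2\delta}$ (returned by the relaxed sub-ellipsoid run in Lemma~\ref{lem:variance-ellipsoid}), coordinates of $Z$ can be as negative as $-2\delta$. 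So the paper's proof, as written, has a real gap. Your repair --- replace $Z$ by its positive part $Z^+$, which restores convexity of $\psi$, and absorb the discrepancy into the $\epsilon$-slack already present in $\mA'$ --- is exactly the right patch, and your arithmetic checks out: the overcount is at most $\E_{x\sim\history}\bigl[\sum_a(Z^+-Z)(x,a)/U'(x,a)\bigr]\le 2\delta K\cdot(2/\mu_t)\le 4\delta/\mu_t^2=\epsilon/2$, which fits between the $+\epsilon$ the feasibility constraint allows and the $+2\epsilon$ in the hypothesis, so choosing the constant $\tfrac32\epsilon$ in $\psi$ leaves positive margin on both sides. The one place to be a bit more explicit is the requirement $\delta\le\mu_t/4$ (so that $U'(x,a)\ge\mu_t/2$ for $U\in\CH_{2\delta}$), but this is already assumed throughout the section (``$\delta\le b/4$'' in Lemma~\ref{lem:sensitivity}). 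So: same route as the paper, but you caught and correctly fixed a convexity issue the paper elides; your version is the more careful one.
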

\begin{proof}
For notational convenience, define the function
$$f_{Z}(W) :=\!\! \E_{x \sim \history}\left[\sum_a \frac{Z(x, a)}{W'(x,a)}\right] - \max\{4K, \beta_t\Delta(Z)^2\} - 2\epsilon.$$
Note that it is a convex function of $W$. Note that for any point $U$ that is
feasible for $\mA'$, we have $f_Z(U) \leq -\epsilon$, whereas $f_Z(W) \geq 0$.
Thus, by Lemma~\ref{lem:sep-convex}, we can construct the desired separating
hyperplane.
\end{proof}

We can finally prove Theorem~\ref{thm:ellipsoid}:
\begin{proof}{\bf [Theorem~\ref{thm:ellipsoid}.]}
We run the ellipsoid algorithm starting with the ball $B(0, \sqrt{t} +
\delta)$. At each point, we are given a candidate solution $W$ for program
$\mA'$. We check for violation of constraint~(\ref{eq:low-regret-relax}) first.
If it is violated, the constraint, being linear, gives us a separating
hyperplane. Else, we use Lemma~\ref{lem:CH-separation} to check for violation
of constraint~(\ref{eq:ch-gap-relax}). If $W \notin \CH_{2\delta}$, then we can
construct a separating hyperplane. Else, we use
Lemmas~\ref{lem:variance-ellipsoid} and \ref{lem:variance-separation} to check
for violation of constraint~(\ref{eq:var-bound-gap-relax}). If there is a $Z
\in \CH$ such that $\E_{x \sim \history}\left[\sum_a \frac{Z(x, a)}{W'(x,a)}\right]
\geq \max\{4K, \beta_t\Delta(Z)^2\} + 3\epsilon$, then we can find a
separating hyperplane. Else, we conclude that the current point $W$ satisfies
the following constraints:
\begin{align*}
\Delta(W)\ &\leq\ s + \gamma \\ 
\forall Z \in \CH:\ \!\!\!\! \E_{x \sim \history}\left[\sum_a \frac{Z(x, a)}{W'(x,a)}\right] & \leq \max\{4K, \beta_t\Delta(Z)^2\} + 3\epsilon\\ 
W\ & \in\ \CH_{2\delta} 
\end{align*}

We can then use the perceptron-based algorithm of Lemma~\ref{lem:CH-separation}
to ``round'' $W$ to an explicit distribution $P$ over policies in $\Pi$ such
that $W_P$ satisfies $\|W_P - W\| \leq 2\delta$. Then
Lemma~\ref{lem:sensitivity} implies the stated bounds for $W_P$.

By Lemma~\ref{lem:ellipsoid-description}, in $O(t^2K^2\log(\frac{t}{\delta}))$
iterations of the ellipsoid algorithm, we find the point $W$ satisfying the
constraints given above, or declare correctly that $\mA$ is infeasible. In the
worst case, we might have to run the algorithm of
Lemma~\ref{lem:variance-ellipsoid} in every iteration, leading to an upper
bound of $O(t^2K^2\log(\frac{t}{\delta})) \times O(\frac{t^3K^2}{\delta^2}
\cdot \log(\frac{tK}{\delta})) = O(t^5K^4\log^2(\frac{tK}{\delta}))$ on the
number of iterations.
\end{proof}

\end{document}